\pdfoutput=1
\documentclass{article}



     \usepackage[preprint,nonatbib]{neurips_2020}



\usepackage[utf8]{inputenc} 
\usepackage[T1]{fontenc}    
\usepackage{url}            
\usepackage{booktabs}       
\usepackage{amsfonts}       
\usepackage{nicefrac}       
\usepackage{microtype}      

\usepackage{wrapfig}

\usepackage{xcolor} 

\definecolor{mydarkblue}{rgb}{0,0.08,0.45}

\usepackage[colorlinks=true,
    linkcolor=mydarkblue,
    citecolor=mydarkblue,
    filecolor=mydarkblue,
    urlcolor=mydarkblue]{hyperref} 

\usepackage{amsmath,amssymb,amsthm,amsfonts,latexsym}    \usepackage{mathtools}    
\usepackage{graphicx}
\usepackage{bm}
\usepackage{subfigure}
\usepackage{color}
\usepackage{appendix}
\usepackage{enumitem}
\usepackage{environ}         
\usepackage{etoolbox}        
\usepackage{setspace}
\usepackage{algpseudocode}
\usepackage{algorithm}

\usepackage[capitalise]{cleveref}

\newcommand{\x}{\vw}
\newcommand{\xk}{\vw_{k}}

\newcommand{\xkk}{\vw_{k+1}}
\newcommand{\xopt}{\vw^{*}}

\newcommand{\grad}[1]{\nabla f(#1)}

\newcommand{\dpr}[2]{\left< #1,#2\right>}

\newcommand{\norm}[1]{\left\|#1\right\|}
\newcommand{\normsq}[1]{\left\|#1\right\|^{2}}

\newcommand{\E}{\mathbb{E}}
\newcommand{\inv}[1]{#1^{-1}}
\newcommand{\covar}{\mathbf{\Sigma}}

\newcommand{\etak}{\eta_{k}}

\newcommand{\indnorm}[2]{\left\|#1\right\|_{#2}}
\newcommand{\indnormsq}[2]{\left\|#1\right\|_{#2}^{2}}

\newcommand{\transpose}{^\mathsf{\scriptscriptstyle T}}
\newcommand{\prp}{_{\mathsf{\scriptscriptstyle \perp}}}
\newcommand{\parll}{_{\mathsf{\scriptscriptstyle \parallel}}}

\DeclareMathOperator*{\argmax}{arg\,max}
\DeclareMathOperator*{\argmin}{arg\,min}

\newtheorem{lemma}{Lemma}

\newtheorem{example}{Example}
\newtheorem{proposition}{Proposition}

\newcommand{\pco}{\mxp^{-1}}
\newcommand{\wn}{\vw_0}
\newcommand{\winf}{\vw_{\infty}}
\newcommand{\wstar}{\vw^*}
\newcommand{\ex}[2][]{\mathbb{E}_{#1}[#2]}

\newcommand{\ma}{\mathbf{A}}
\newcommand{\mb}{\mathbf{B}}
\newcommand{\mc}{\mathbf{C}}
\newcommand{\mf}{\mathbf{F}}
\newcommand{\mg}{\mathbf{G}}
\newcommand{\mh}{\mathbf{H}}
\newcommand{\mi}{\mathbf{I}}
\newcommand{\mk}{\mathbf{K}}
\newcommand{\mm}{\mathbf{M}}
\newcommand{\mxp}{\mathbf{P}}
\newcommand{\mq}{\mathbf{Q}}
\newcommand{\ms}{\mathbf{S}}
\newcommand{\mv}{\mathbf{V}}
\newcommand{\mmu}{\mathbf{U}}
\newcommand{\mw}{\mathbf{W}}
\newcommand{\mx}{\mathbf{X}}
\newcommand{\mz}{\mathbf{Z}}

\newcommand{\va}{\mathbf{a}}
\newcommand{\vs}{\mathbf{s}}
\newcommand{\vw}{\mathbf{w}}
\newcommand{\vws}{\mathbf{w}^{*}}
\newcommand{\vx}{\mathbf{x}}
\newcommand{\vy}{\mathbf{y}}
\newcommand{\vz}{\mathbf{z}}

\newcommand{\vep}{\mathbf{\epsilon}}

\newcommand{\wopt}{\vw_{\text{opt}}}
\newcommand{\wmn}{\vw_{\text{mn}}}

\newcommand{\appendixTitle}{%
\vbox{
    \centering
	\hrule height 4pt
	\vskip 0.2in
	{\LARGE \bf Supplementary material}
	\vskip .5em
	{\large\bf  To Each Optimizer a Norm, To Each Norm its Generalization \vskip -\parskip \vskip 0.09in}
	\vskip 0.2in
	\hrule height 1pt 
}
}

\title{To Each Optimizer a Norm, \\ To Each Norm its Generalization}
\author{%
  Sharan Vaswani \\
  Mila, Universit\'e de Montr\'eal \\
   \And
   Reza Babanezhad\\
   SAIT AI Lab, Montreal \\
  \And
   Jose Gallego \\
   Mila, Universit\'e de Montr\'eal \\
   \AND
   Aaron Mishkin \\
   University of British Columbia \\
   \And
  Simon Lacoste-Julien$^*$ \\
  Mila, Universit\'e de Montr\'eal \\
  \And
 Nicolas Le Roux \thanks{Canada CIFAR AI Chair.} \\
Mila, 
Google Research - Brain Team
}

\begin{document}

\maketitle

\begin{abstract}
We study the implicit regularization of optimization methods for linear models interpolating the training data in the under-parametrized and over-parametrized regimes. Since it is difficult to determine whether an optimizer converges to solutions that minimize a known norm, we flip the problem and investigate what is the corresponding norm minimized by an interpolating solution. Using this reasoning, we prove that for over-parameterized linear regression, projections onto linear spans can be used to move between different interpolating solutions. For under-parameterized linear classification, we prove that for any linear classifier separating the data, there exists a family of quadratic norms $\norm{\cdot}_\mxp$ such that the classifier's direction is the same as that of the maximum $\mxp$-margin solution. For linear classification, we argue that analyzing convergence to the standard maximum $\ell_2$-margin is arbitrary and show that minimizing the norm induced by the data results in better generalization. Furthermore, for over-parameterized linear classification, projections onto the data-span enable us to use techniques from the under-parameterized setting. On the empirical side, we propose techniques to bias optimizers towards better generalizing solutions, improving their test performance. We validate our theoretical results via synthetic experiments, and use the neural tangent kernel to handle non-linear models.
\end{abstract}

\section{Introduction}
\label{sec:introduction}
Modern machine learning has seen the rise of large over-parameterized models such as deep neural networks~\cite{goodfellow2016deep}. These models are highly expressive and are able to fit or \emph{interpolate} all the training data~\cite{zhang2016understanding,belkin2018understand, belkin2019reconciling}. Since the number of parameters is much larger than the size of the training dataset, there are infinitely many solutions that can fit the data. These solutions can have vastly different generalization performance and the optimization method employed to minimize the training loss also influences the test performance~\cite{wilson2017marginal, keskar2017improving,qian2019implicit, gunasekar2018characterizing, arora2019implicit}. This is in contrast to classical regularized, under-parameterized models where there is a unique solution and the optimization method is solely responsible for converging to this solution at an appropriate rate. 

A recent line of work~\cite{gunasekar2017implicit,ji2018gradient, gunasekar2018implicit,nacson2018convergence,soudry2018implicit,gunasekar2018characterizing,nacson2019stochastic, arora2019implicit} studies the \emph{implicit regularization} of optimization methods in simplified settings. The implicit regularization of an optimizer biases it towards specific types of minimizers that are preferred amongst the infinite set of solutions. Two such simplified settings studied in this literature are over-parameterized linear regression with the squared loss~\cite{gunasekar2018characterizing} and linear classification on separable data using losses with an exponential tail~\cite{soudry2018implicit}. In each of these settings, recent works studies numerous optimization methods characterizing their implicit bias towards the minimum $\ell_2$ norm or maximum $\ell_2$ margin solutions respectively. 

A limitation of these works is that the implicit bias needs to be derived separately for each optimization method under different and often restrictive conditions. Even in simple scenarios like linear regression or classification, it is difficult to analyze the solutions of common optimization methods such as Adagrad~\cite{duchi2011adaptive}. Moreover, recent works~\cite{arora2019implicit, razin2020implicit} have shown that it is not possible to derive closed-form expressions for the implicit regularization in more challenging settings like matrix factorization. 

Rather than solving this difficult problem of determining whether an optimizer converges to solutions that minimize a known norm ($\ell_2$, Schatten norms); we flip the problem, and determine the corresponding norm minimized by an interpolating solution found by an optimizer. By reasoning along these lines, we develop techniques that can improve the generalization performance of common optimization methods. Moreover, we revisit the assumption that convergence to the minimum $\ell_2$ norm is desirable and find that regression and classification behave differently in this regard. 

\subsection{Background and Contributions}
\label{sec:contributions}
We consider linear models interpolating the training data in the under-parametrized and over-parametrized regimes. In particular, we study the implicit bias of optimizers in the linear regression (Section~\ref{sec:lin-reg}) and classification (Sections~\ref{sec:lin-class-under},\ref{sec:lin-class-over}) settings. 

\textbf{Over-parametrized linear regression}: In~\cite{gunasekar2018characterizing}, the authors study the implicit bias of gradient descent and its accelerated variants and show that it converges to the minimum $\ell_2$-norm solution. More generally, they characterize the implicit regularization for the steepest descent and mirror descent algorithms. In Section~\ref{sec:lin-reg}, we prove that every interpolating solution has a corresponding quadratic norm that it minimizes. We show that this result enables the use of projections onto linear spans to move between interpolating solutions. This further implies that for any interpolating solution found by an optimizer, a projection onto the data-span recovers the min-norm solution and can potentially improve the optimizer's generalization performance. We investigate, both theoretically and empirically, whether it is possible to find a norm that generalizes better than the $\ell_2$ norm.   

\textbf{Under-parameterized linear classification on separable data}: The implicit bias of gradient descent minimizing losses with an exponential tail has been studied in~\cite{soudry2018implicit, nacson2018convergence, nacson2019stochastic}. In these works, it was shown that the direction of the gradient descent (GD) solution converges to the max-margin solution at a $1/\log(T)$ rate, where $T$ is the number of GD iterations. In~\cite{gunasekar2018characterizing}, the authors also outline the implicit regularization properties of steepest descent, whereas the implicit bias of Adagrad is studied in~\cite{qian2019implicit}. Molitor et al.~\cite{molitor2020bias} studied the implicit regularization of gradient descent minimizing the non-smooth hinge loss. They consider a regularized problem with diminishing regularization, and characterize convergence to the max-margin solution. 

In Section~\ref{sec:lin-class-under}, we first show that for any linear classifier perfectly separating the data, there exists a family of corresponding quadratic norms $\norm{\cdot}_{\mxp}$ such that the classifier’s direction is the same as that of the maximum $\mxp$-margin solution. We then show, both theoretically and empirically, that the max-margin solution in the norm induced by the data results in better generalization than the well-studied $\ell_2$-norm. This result implies that it is important to consider the properties of the data when reasoning about the implicit bias and generalization performance of methods. In Sections~\ref{sec:lin-class-sq-hinge}-\ref{sec:lin-class-exp}, we analyze the implicit regularization of optimizers minimizing the squared-hinge loss and the exponential loss respectively. For both these losses, we propose heuristics that can bias optimization methods towards solutions with good generalization properties and can potentially improve their test performance. 

\textbf{Over-parameterized linear classification}: In Section~\ref{sec:lin-class-over}, we show that projections onto the data-span allow us to extend the results in Section~\ref{sec:lin-class-under} to the  over-parameterized setting. This enables us to use the techniques developed in the under-parameterized setting in this case as well. 

\textbf{Experimental evaluation}: In Section~\ref{sec:experiments}, we first validate our theoretical results on synthetic datasets. Furthermore, we use the neural tangent kernel~\cite{jacot2018neural} to go beyond the linear setting and demonstrate the effectiveness of our techniques. 

\section{Over-parametrized linear regression}
\label{sec:lin-reg}
We first consider over-parameterized linear regression with $n$ points $\{\vx_i,  y_i\}$ consisting of $d$-dimensional feature vectors $\vx_i$ s.t. $d > n$ and the corresponding labels/measurements $y_i$. We make the standard assumption~\cite{bartlett2019benign} that the true labels are corrupted with noise $\epsilon \sim \mathcal N (0,\sigma^2 \mi_d)$, implying that $\vy=\mx \vw^* + \vep$ where $\mx \in \mathbb R^{n\times d}$ is the matrix of features, $\vy \in \mathbb{R}^{n}$ is the vector of labels and $\vw^* \in \mathbb{R}^{d}$ is the ``true`` data-generating vector. We seek to minimize the squared loss, $\min_\vw f(\vw) \coloneqq \normsq{\mx \vw - \vy}_2$. 
If the matrix $\mx \mx\transpose$ is full rank, there are infinitely many solutions that can \emph{interpolate} or exactly fit the training dataset. Consequently, optimization methods achieving zero training loss converge to different solutions that can have vastly different generalization properties. We study the generalization performance of such interpolating solutions $\wopt$, that is $\mx \vw_{\text{opt}} = \vy$. 

\subsection{Convergence to the minimum norm solution}
\label{sec:lin-reg-min-norm}
In the over-parameterized regime, given that $\mx \mx\transpose$ is invertible, it is known that gradient descent (GD) initialized at the origin converges to the minimum $\ell_2$ norm solution $\vw_{mn}$, henceforth referred to as the \emph{min-norm solution}. 
\begin{align}
\wmn & = \argmin_{\vw} \frac{1}{2} \|\vw\|_{2}^2   \quad \text{s.t.} \quad \mx\vw=\vy \quad \implies \quad \wmn = \mx \transpose (\mx \mx\transpose)^{-1} \vy
\label{eq:min-norm} 
\end{align}    
The generalization properties of the min-norm solution have been thoroughly studied in the under-parameterized~\cite{bartlett2002rademacher} and more recently the over-parameterized interpolation regime~\cite{bartlett2019benign, muthukumar2020harmless,hastie2019surprises}. The min-norm solution is the unique point that interpolates the data and lies in the span of the feature vectors. This property has been used to analyze the implicit regularization of  common optimization methods~\cite{gunasekar2018characterizing}. Note that this property is unaffected by using stochastic gradients of a finite-sum, implying that mini-batch variants of optimizers have the same implicit regularization. In Appendix~\ref{app:newton}, we prove that iterates of the Newton method with Levenberg-Marquardt regularization~\cite{levenberg1944method,marquardt1963algorithm} lie in the span of the training data, implying that it converges to the min-norm solution. Similarly, we use this property to prove that full-matrix variants~\cite{agarwal2019efficient} (without the diagonal approximation) of adaptive gradient methods like Adagrad~\cite{duchi2011adaptive}, Adam~\cite{kingma2014adam} also converge to the min-norm solution (Appendix~\ref{app:adagrad}). These methods are more robust to the step-size than GD and have  better empirical convergence, implying that their full-matrix variants converge faster but generalize as well as GD. However, with the commonly used diagonal approximation, these methods result in iterates that do not lie in the data-span and are consequently not guaranteed to converge to the min-norm solution~\cite{gunasekar2018characterizing}. 

Preconditioned gradient descent (PGD) is the simplest method whose iterates do not lie in the span of the training data, but whose implicit regularization can be analyzed. The iterates corresponding to PGD with a constant positive-definite preconditioner $\mxp$ and constant step-size $\eta$ can be written as: $\vw_{k+1} = \vw_{k} - \eta \, \mxp \nabla f(\vw)$. In Appendix~\ref{app:pgd}, we show that PGD converges to the solution $\vw_\text{PGD} \coloneqq \lim_{k \to \infty} \vw_k = \vw_0 + \mxp \mx\transpose (\mx \, \mxp \, \mx\transpose)^{-1} [\vy - \mx \vw_0] $. Furthermore, when $\vw_0 = 0$, $\vw_\text{PGD}$ is the minimum $\pco$-norm solution, meaning that $\vw_\text{PGD} = \argmin \frac{1}{2} \|\vw\|_{\pco}^2 = \frac{1}{2} \vw\transpose \inv{\mxp} \vw$ such that $\mx \vw = \vy$. We note that this result can also be seen as a consequence of Theorem $1$ of~\cite{gunasekar2018characterizing}. In Lemma~\ref{lemma:unique-span-gen} of Appendix~\ref{apdx:prop-of-pgd}, we prove that the PGD solution is the unique point that interpolates the data and lies in the transformed data-span, that is $\vw_\text{PGD} \in \text{span}(\mxp \mx\transpose)$. These results imply that the properties of PGD are equivalent to that of GD, differing only in the norm. In fact, we now show that a similar equivalence holds for general optimization methods.
\begin{lemma}
For the solution $\vw_{opt}$ obtained by a generic optimizer on the squared loss, it is possible to construct a family of constant preconditioners $\mxp_{opt}$ such that PGD with a \emph{fixed} preconditioner in $\mxp_{opt}$ converges to $\vw_{opt}$. One such family can be given as:
\[
\mxp_{opt} = \left[\normsq{\vw_{opt}} I_d - \vw_{opt} \vw_{opt}\transpose + \frac{\mathbf{\nu} \mathbf{\nu}\transpose}{\langle \vw_{opt}, \mathbf{\nu} \rangle} \right]^{-1}
\]
Here $\mathbf{\nu} = \mx\transpose \mathbf{\alpha}$ where $\mathbf{\alpha}$ is a random vector such that $\langle \vw_{opt}, \mathbf{\nu} \rangle > 0$, ad $I_d$ is the $d\times d$ identity matrix. Note that for any $\mathbf{\nu}$, either $\mathbf{\nu}$ or $-\mathbf{\nu}$ satisfies this constraint. 
\label{lemma:opt-P-reduction}
\end{lemma}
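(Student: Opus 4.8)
The plan is to reduce the statement to the characterization of $\vw_\text{PGD}$ recalled immediately before the lemma. Recall that PGD with a constant positive-definite preconditioner $\mxp$ started at $\vw_0 = 0$ converges to $\vw_\text{PGD} = \mxp\mx\transpose(\mx\,\mxp\,\mx\transpose)^{-1}\vy$, which by Lemma~\ref{lemma:unique-span-gen} is the \emph{unique} interpolating vector lying in $\text{span}(\mxp\mx\transpose)$. Since $\vw_{opt}$ already interpolates the data ($\mx\vw_{opt} = \vy$), it therefore suffices to exhibit $\mxp_{opt}\succ 0$ with $\vw_{opt}\in\text{span}(\mxp_{opt}\mx\transpose)$; for any such preconditioner PGD from the origin will converge to $\vw_{opt}$. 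Writing $\ma \defeq \mxp_{opt}^{-1}$, the condition $\vw_{opt}\in\text{span}(\mxp_{opt}\mx\transpose)$ is equivalent to $\ma\vw_{opt}\in\text{span}(\mx\transpose)$, i.e. $\ma\vw_{opt}$ lies in the row space of $\mx$. The proposed matrix is built precisely so that $\ma\vw_{opt}$ equals a chosen row-space vector $\mathbf{\nu} = \mx\transpose\mathbf{\alpha}$.

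First I would do the (routine) verification that $\ma\vw_{opt} = \mathbf{\nu}$: applying $\ma = \normsq{\vw_{opt}}I_d - \vw_{opt}\vw_{opt}\transpose + \frac{\mathbf{\nu}\mathbf{\nu}\transpose}{\langle\vw_{opt},\mathbf{\nu}\rangle}$ to $\vw_{opt}$, the first two terms cancel ($\normsq{\vw_{opt}}\vw_{opt} - \vw_{opt}\normsq{\vw_{opt}} = 0$) and the last collapses to $\mathbf{\nu}\langle\mathbf{\nu},\vw_{opt}\rangle/\langle\vw_{opt},\mathbf{\nu}\rangle = \mathbf{\nu}$. Hence $\ma\vw_{opt} = \mathbf{\nu}\in\text{span}(\mx\transpose)$, so $\vw_{opt}\in\text{span}(\mxp_{opt}\mx\transpose)$. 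It also remains to confirm that $\mathbf{\nu}$ and the sign constraint are available: $\langle\vw_{opt},\mathbf{\nu}\rangle = \langle\mx\vw_{opt},\mathbf{\alpha}\rangle = \langle\vy,\mathbf{\alpha}\rangle$, which is nonzero for generic $\mathbf{\alpha}$ whenever $\vy\neq 0$, and one of $\pm\mathbf{\alpha}$ makes it positive; letting $\mathbf{\alpha}$ range over this open set produces the claimed family. (The degenerate case $\vy = 0$ forces $\vw_{opt}=0$ along the PGD trajectory from the origin and is trivial.)

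The main work — and the step I expect to be the real obstacle — is showing $\ma\succ 0$, so that $\mxp_{opt} = \ma^{-1}$ is well-defined and positive-definite. For arbitrary $\vz$, $\vz\transpose\ma\vz = \normsq{\vw_{opt}}\normsq{\vz} - \langle\vw_{opt},\vz\rangle^2 + \langle\mathbf{\nu},\vz\rangle^2/\langle\vw_{opt},\mathbf{\nu}\rangle$. The first difference is $\geq 0$ by Cauchy--Schwarz, and the last term is $\geq 0$ since $\langle\vw_{opt},\mathbf{\nu}\rangle > 0$, so $\ma\succeq 0$. For strictness, note that $\normsq{\vw_{opt}}I_d - \vw_{opt}\vw_{opt}\transpose$ is singular exactly along $\vw_{opt}$; if $\vz\transpose\ma\vz = 0$ then equality in Cauchy--Schwarz forces $\vz = c\,\vw_{opt}$, and substituting back gives $\vz\transpose\ma\vz = c^2\langle\vw_{opt},\mathbf{\nu}\rangle$, strictly positive unless $c = 0$. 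Thus $\ma\succ 0$, with the rank-one correction $\mathbf{\nu}\mathbf{\nu}\transpose/\langle\vw_{opt},\mathbf{\nu}\rangle$ doing exactly the job of restoring positivity in the one direction where the ``spread'' part degenerates. Combining the pieces, $\mxp_{opt} = \ma^{-1}$ is a valid constant PD preconditioner with $\vw_{opt}\in\text{span}(\mxp_{opt}\mx\transpose)$, so PGD from the origin with this preconditioner converges to $\vw_{opt}$, and varying $\mathbf{\alpha}$ yields the stated family.
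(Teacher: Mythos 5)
Your proposal is correct and follows essentially the same route as the paper: both reduce the claim to exhibiting a positive-definite $\ma = \mxp_{opt}^{-1}$ with $\ma\vw_{opt}\in\text{span}(\mx\transpose)$ (the paper gets this condition from the Lagrangian of the min-$\ma$-norm problem, you get it from the uniqueness characterization in Lemma~\ref{lemma:unique-span-gen}), and then perform the identical verification that $\ma\vw_{opt}=\nu$ and that $\ma\succ 0$. If anything, your positive-definiteness argument is slightly more careful than the paper's, which asserts a strict Cauchy--Schwarz inequality that degenerates when the test vector is parallel to $\vw_{opt}$; your explicit equality-case analysis closes that gap.
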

The above lemma enables us to express the solution of an arbitrary optimization method in terms of a PGD solution for a family of constant preconditioners. It implies that an interpolating solution found by an optimization method is the unique minimum-norm solution in the $\mxp_{opt}$ norm. In the proposition below, we show that such an equivalence enables the use of projection operators to move between interpolating solutions.  
\begin{proposition}
Consider two optimization methods, their respective interpolating solutions $\vw_1$ and $\vw_2$ and equivalent preconditioners $\mxp_1$ and $\mxp_2$, constructed according to Lemma~\ref{lemma:opt-P-reduction}. Projecting $\vw_1$ onto the $\text{span}(\mxp_2 \, \mx\transpose)$ using the projection operator $\mathbf{\pi} = \mxp_2 \mx\transpose (\mx \mxp_2 \mx \transpose)^{-1} \mx$ recovers $\vw_2$. 
\label{prop:proj-span}
\end{proposition}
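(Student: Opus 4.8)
The plan is to use the characterization from the PGD analysis: for each $i \in \{1, 2\}$, the solution $\vw_i$ is the \emph{unique} point that interpolates the data and lies in $\text{span}(\mxp_i \mx\transpose)$ (this is Lemma~\ref{lemma:unique-span-gen} applied to the preconditioner $\mxp_i$ from Lemma~\ref{lemma:opt-P-reduction}, noting that both methods are being run from $\vw_0 = 0$, which is the setting in which the equivalence of Lemma~\ref{lemma:opt-P-reduction} is asserted). So it suffices to show that $\mathbf{\pi} \vw_1$ (i) interpolates, i.e.\ $\mx (\mathbf{\pi}\vw_1) = \vy$, and (ii) lies in $\text{span}(\mxp_2 \mx\transpose)$. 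Uniqueness then forces $\mathbf{\pi}\vw_1 = \vw_2$.

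For (ii), observe that $\mathbf{\pi} = \mxp_2 \mx\transpose (\mx \mxp_2 \mx\transpose)^{-1} \mx$ has its image contained in $\text{span}(\mxp_2 \mx\transpose)$ by construction: every output is $\mxp_2 \mx\transpose$ applied to some vector in $\mathbb{R}^n$. So (ii) is immediate for \emph{any} input, in particular for $\vw_1$. For (i), I would first check that $\mathbf{\pi}$ acts as the identity on $\text{span}(\mxp_2 \mx\transpose)$ — equivalently, that $\mathbf{\pi}$ is a genuine (oblique) projection onto that subspace: if $\vw = \mxp_2 \mx\transpose \mathbf{\beta}$, then $\mathbf{\pi}\vw = \mxp_2 \mx\transpose (\mx \mxp_2 \mx\transpose)^{-1} \mx \mxp_2 \mx\transpose \mathbf{\beta} = \mxp_2 \mx\transpose \mathbf{\beta} = \vw$, using invertibility of $\mx \mxp_2 \mx\transpose$ (which holds since $\mxp_2 \succ 0$ and $\mx$ has full row rank). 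Now decompose: $\mx(\mathbf{\pi}\vw_1) = \mx \mxp_2 \mx\transpose (\mx \mxp_2 \mx\transpose)^{-1} \mx \vw_1 = \mx \vw_1 = \vy$, where the last equality is exactly the interpolation property of $\vw_1$. This establishes (i) directly, without even needing the projection-idempotency observation, though that observation is the conceptual reason the argument works.

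Combining (i) and (ii): $\mathbf{\pi}\vw_1$ is an interpolating point lying in $\text{span}(\mxp_2 \mx\transpose)$, and $\vw_2$ is the unique such point, so $\mathbf{\pi}\vw_1 = \vw_2$. The only genuine subtlety — and the step I would be most careful about — is the well-definedness of $\mathbf{\pi}$, i.e.\ that $\mx \mxp_2 \mx\transpose$ is invertible; this requires knowing $\mxp_2$ is positive definite, which in turn requires checking that the explicit matrix $\normsq{\vw_2} I_d - \vw_2 \vw_2\transpose + \frac{\mathbf{\nu}\mathbf{\nu}\transpose}{\langle \vw_2, \mathbf{\nu}\rangle}$ from Lemma~\ref{lemma:opt-P-reduction} is in fact positive definite (so that its inverse exists and is positive definite). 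That is handled in the proof of Lemma~\ref{lemma:opt-P-reduction}, so here I would simply invoke it. Everything else is a short computation.
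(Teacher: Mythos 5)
Your proof is correct. It takes a mildly different route from the paper's: the paper writes out the closed form $\vw_1 = \mxp_1 \mx\transpose (\mx \mxp_1 \mx\transpose)^{-1} \vy$, applies $\mathbf{\pi}$, and cancels $\mx \mxp_1 \mx\transpose (\mx \mxp_1 \mx\transpose)^{-1} = \mi_n$ to land directly on the closed form of $\vw_2$; you instead verify that $\mathbf{\pi}\vw_1$ interpolates and lies in $\text{span}(\mxp_2\mx\transpose)$, then invoke the uniqueness statement of Lemma~\ref{lemma:unique-span-gen}. The two arguments hinge on the same algebraic fact — the paper's cancellation is exactly the identity $\mx\vw_1 = \vy$ that you use — but your framing has a concrete advantage: it never needs the explicit PGD form of $\vw_1$, only that $\vw_1$ interpolates, and your step (i) actually shows $\mx(\mathbf{\pi}\vw) = \mx\vw$ for \emph{any} $\vw$. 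That gives, essentially for free, the generalization the paper states without proof immediately after the proposition (that $\mathbf{\pi}$ moves an arbitrary iterate into $\text{span}(\mxp_2\mx\transpose)$ without changing its training loss). The paper's version is shorter and purely computational; yours is more structural and slightly more general. Your attention to the well-definedness of $(\mx\mxp_2\mx\transpose)^{-1}$ via positive definiteness of $\mxp_2$ is a point the paper's proof glosses over, and you correctly defer it to Lemma~\ref{lemma:opt-P-reduction}.
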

The above result implies that projecting any optimizer's interpolating solution onto the data-span recovers the min-norm solution and can potentially improve its generalization performance. From a practical standpoint, we note that in the linear regression setting, the cost of such a projection is equal to that for solving the normal equations. However, such projections could be used to improve the generalization of optimizers minimizing the squared loss for very wide neural networks by making use of the neural tangent kernel~\cite{jacot2018neural}. Proposition~\ref{prop:proj-span} can be generalized to an arbitrary iterate, in that the same projection operator $\mathbf{\pi}$ moves $\vw_1$ to the $\text{span}(\mxp_2 \, \mx\transpose)$ without changing its training loss.

Since different norms have different generalization properties, we attempt to find a norm that results in better generalization than the minimum $\ell_2$ norm solution. In Appendix~\ref{app:gen-bounds}, we generalize the excess risk bounds in ~\cite{bartlett2019benign} to analyze solutions found by PGD. In Appendix~\ref{app:proofs-lin-reg}, we optimize an upper bound on the risk w.r.t $\mxp$. Even assuming full knowledge of the covariance matrix of the data, we show that it is not possible to uniformly improve over the min-norm solution in the noiseless case ($\sigma = 0$). We can obtain a better upper bound on the risk in the noisy case, however, we empirically demonstrate that the looseness of this bound prevents obtaining better generalizing solutions in practice. 

\section{Under-parameterized linear classification}
\label{sec:lin-class-under}
In this section, we consider binary classification with a training dataset $\{\vx_i,  y_i\}_{i = 1}^{n}$ of $d$-dimensional feature vectors $\vx_i$ and labels $y_i \in \{-1,1\}$, with $d\leq n$. We seek to find a hyperplane $\vw^*$ that minimizes the $0$-$1$ loss, $\vw^* = \argmin_{\vw} \sum_{i = 1}^{n} \mathbb{I} \{y_i \dpr{\vw}{\vx_i} \geq 0\}$ where $\mathbb{I}$ is the indicator function equal to $1$ when true and $0$ otherwise\footnote{We only consider homogeneous linear classifiers without a bias term}. Unlike under-parameterized regression that has a unique minimizer, there can be infinitely many linear classifiers or hyperplanes that separate the data. We study the interpolation setting where the data is linearly separable by a non-zero margin, implying that there exist linear classifiers with \emph{zero training error} or zero $0$-$1$ loss. Similar to over-parameterized regression, optimization methods achieving zero training error are biased towards certain solutions and can converge to hyperplanes that have different generalization properties. 

For a general positive definite matrix $\mxp$, if the data is separable by a margin (in the $\mxp$ norm) equal to $\gamma$, the maximum $\mxp$-margin solution $\vw_{\text{mm}, \mxp}$ has the following equivalent forms\footnote{For notational convenience, from now on, we absorb the label $y_i$ into the feature $\vx_i$.}:
\begin{align}
\vw_{\text{mm}, \mxp} & \coloneqq \argmax_{\norm{\vw}_\mxp \leq 1/\gamma} \,\, \min_{i \in [n]}  \dpr{\vw}{\vx_i} = \argmin_{\vw} {\norm{\vw}_{\mxp^{-1}}} \quad \text{s.t, for all $i$,} \quad \dpr{\vw}{\vx_i} \geq \gamma 
\label{eq:max-margin}
\end{align}
When $\mxp = I_d$, the corresponding maximum margin solution $\vw_\text{mm}$ is the standard max $\ell_2$-margin solution, henceforth referred to as the \emph{max-margin} solution. In this case, the quantity $\max_{\norm{\vw}_2 \leq 1/\gamma} \min_{i \in [n]} \dpr{\vw}{\vx_i}$ is the $\ell_2$ margin and data points corresponding to the equality $\dpr{\vw_\text{mm}}{\vx_i} = \gamma$ are the support vectors for $\vw_\text{mm}$.  The max-margin solution is shown to have good generalization performance for under-parameterized models~\cite{kakade2009complexity} and more recently in the over-parameterized setting~\cite{chatterji2020finite}. We first show that the direction of any linear classifier separating the data is the same as that of a maximum $\mxp$-margin solution for an appropriately constructed $\mxp$. 
\begin{lemma}
For an interpolating linear classifier $\vw$, it is possible to construct a family of quadratic norms $\norm{\cdot}_\mxp$ such that the direction of the classifier is equivalent to the direction corresponding to the max $\mxp$-margin solution where
\begin{align*}
\mxp = \left[\normsq{\vw} I_d - \vw \vw\transpose + \nu \nu\transpose \right]^{-1} \quad \text{s.t.} \quad \langle \vw, \nu \rangle  = 1. 
\end{align*}
Here, $\nu = \mv \transpose \mathbf{\alpha}$ where $\mv \in \mathbb{R}^{\vert S \vert \times d}$ is the feature-matrix corresponding to the set $S$ of support vectors for $\vw$, $\mathbf{\alpha}$ is a random vector satisfying the above constraints. 
\label{lemma:exp-equivalent-precond}
\end{lemma}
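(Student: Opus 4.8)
The plan is to read off the max-$\mxp$-margin direction from the KKT conditions of the convex program in the second form of \eqref{eq:max-margin}, and then check by a one-line computation that the given classifier $\vw$ is exactly its (unique) KKT point for the stated preconditioners. Write $M \defeq \mxp^{-1} = \normsq{\vw}\,I_d - \vw\vw\transpose + \nu\nu\transpose$, so that $\vw_{\text{mm},\mxp} = \argmin_\vw \tfrac12\,\vw\transpose M\vw$ subject to $\dpr{\vw}{\vx_i}\ge\gamma$ for all $i$. Once $M\succ0$ this is strictly convex with a nonempty feasible set, so its unique minimizer is characterized by the KKT system: there exist $\lambda_i\ge0$ with $M\,\vw_{\text{mm},\mxp} = \sum_i \lambda_i\vx_i$ and $\lambda_i\,(\dpr{\vw_{\text{mm},\mxp}}{\vx_i}-\gamma)=0$. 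Hence it suffices to produce nonnegative multipliers under which $\vw$ itself solves this system; by uniqueness this forces $\vw = \vw_{\text{mm},\mxp}$ for the scaling with target margin $\gamma\defeq\min_i\dpr{\vw}{\vx_i}$, which in particular gives the claimed direction equivalence for every target margin (the minimizer scales linearly with the target margin).

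First I would set $\gamma\defeq\min_{i\in[n]}\dpr{\vw}{\vx_i}$, which is strictly positive since $\vw$ separates the data, and let $S=\{\,i:\dpr{\vw}{\vx_i}=\gamma\,\}$ with feature matrix $\mv\in\mathbb{R}^{|S|\times d}$ --- the ``support vectors of $\vw$''. The key identity is $M\vw = \dpr{\nu}{\vw}\,\nu = \nu$, which uses $(\normsq{\vw}I_d-\vw\vw\transpose)\vw = 0$ together with the constraint $\dpr{\vw}{\nu}=1$. So the stationarity equation $M\vw = \sum_{i\in S}\lambda_i\vx_i$ with $\lambda_i\ge0$ is equivalent to $\nu\in\mathrm{cone}\{\vx_i: i\in S\}$, i.e.\ $\nu=\mv\transpose\alpha$ for some $\alpha\ge0$ --- precisely the stated form of $\nu$. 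Complementary slackness is automatic because the multipliers are supported on $S$, primal feasibility $\dpr{\vw}{\vx_i}\ge\gamma$ holds by choice of $\gamma$, and the scalar constraint $\dpr{\vw}{\nu}=1$ becomes $\gamma\sum_{i\in S}\alpha_i=1$. Thus any $\alpha$ in the scaled simplex $\{\alpha\ge0:\sum_{i}\alpha_i = 1/\gamma\}$ yields a valid $\nu$, which is the advertised family; a random $\alpha$ drawn from this set does the job.

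I would then verify that $M$ is genuinely positive definite, so that $\mxp=M^{-1}$ exists and $\norm{\cdot}_\mxp$ is a norm: for any $z$, $z\transpose M z = \normsq{\vw}\normsq{z} - \dpr{\vw}{z}^2 + \dpr{\nu}{z}^2 \ge 0$ by Cauchy--Schwarz, and equality forces $z\parallel\vw$ and $\dpr{\nu}{z}=0$; but $z = c\,\vw$ gives $\dpr{\nu}{z}=c\dpr{\nu}{\vw}=c$, so $z=0$. Combining the pieces, $\vw$ satisfies every KKT condition of the strictly convex program defining $\vw_{\text{mm},\mxp}$, hence equals it --- slightly stronger than the stated direction claim. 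This mirrors exactly the proof of Lemma~\ref{lemma:opt-P-reduction}, with ``lies in the span of the data'' replaced by ``lies in the conical hull of the support vectors''.

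The main difficulty is not computational but in pinning down the right convex-duality statement: one has to be sure the program in \eqref{eq:max-margin} is the correct object, that on it the KKT conditions are both necessary (affine constraints) and sufficient (convexity), with $M\succ0$ ensuring the KKT point is unique; that ``support vectors of $\vw$'' is read as the active set of $\vw$ rather than of some max-margin solution; and that it is legitimate to confine the dual support to $S$ (it is, by complementary slackness). One should also be careful that $\alpha$ must be taken nonnegative in addition to satisfying $\dpr{\vw}{\nu}=1$, since nonnegativity of the multipliers is exactly the cone condition. After those points are fixed, the argument reduces to the identity $M\vw=\nu$, the Cauchy--Schwarz positivity check, and the observation $\gamma\sum_{i\in S}\alpha_i=1$.
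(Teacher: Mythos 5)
Your argument is correct, and at its core it uses the same construction as the paper's proof: the identity $M\vw=\nu$ for $M=\normsq{\vw}I_d-\vw\vw\transpose+\nu\nu\transpose$ with $\dpr{\vw}{\nu}=1$, the placement of $\nu$ among the support vectors of $\vw$, and the Cauchy--Schwarz verification that $M\succ 0$. Where you differ is in how optimality of $\vw$ is certified, and the difference is substantive. The paper works with the max form of \eqref{eq:max-margin}, writes a Lagrangian restricted to the active set, and reads off only the stationarity condition $M\vw\in\mathrm{span}(\mv\transpose)$; it never checks dual feasibility, so strictly speaking it only exhibits $\vw$ as a stationary point. You instead verify the full KKT system of the constrained min-norm form: strict convexity from $M\succ0$ gives uniqueness of the KKT point, complementary slackness is handled by supporting the multipliers on $S$, and, crucially, you require $\alpha\ge 0$ so that $\nu$ lies in the \emph{cone} rather than merely the span of the support vectors. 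That nonnegativity is genuinely needed for $\vw$ to be the maximum $\mxp$-margin solution and is absent from both the lemma statement (which only asks $\dpr{\vw}{\nu}=1$) and the paper's proof (whose side condition $\mv\transpose\alpha>0$ is a condition on the coordinates of $\nu$, not on the multipliers, and is neither necessary nor sufficient for dual feasibility). So your route is not merely a restyling: it closes a real gap, and it yields the slightly stronger conclusion that $\vw$ itself, not just its direction, is the maximum $\mxp$-margin solution at target margin $\gamma=\min_i\dpr{\vw}{\vx_i}$, with the direction claim for general target margins following by the homogeneity you note.
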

The above equivalence can be used to get a handle on the generalization performance of $\vw$. In particular, we first show that the generalization performance of the maximum $\mxp$-margin solution depends on the induced norm it minimizes. We then investigate whether it is possible to construct norms that generalize better than the $\ell_2$ max-margin solution. 

From relation~\ref{eq:max-margin}, observe that the maximum $P$-margin solution minimizes $\norm{\cdot}_{\pco}$. Let us consider an equivalent hypothesis class that has a (small) bounded $\pco$ norm and is given by $\mathcal F(\mxp)= \{\vx \to y\vw\transpose \vx |  \frac{1}{2} \vw\transpose \inv{\mxp} \vw \leq E\}$. We measure the complexity of this hypothesis class in terms of its Rademacher complexity or VC-dimension and obtain bounds on its generalization performance~\cite{shalev2014understanding}. We estimate these complexity measures via empirical values that are concentrated around the corresponding true value with high probability~\cite{shawe2004kernel,shalev2014understanding}. For data separable by an $\ell_2$ margin $\gamma$, let $\zeta_i = (\gamma - y_i \vw^T\vx)_+$ be the error on training sample $i$. Then with probability $1-\delta$,  
\begin{equation}
    \label{eq:mm_upper_gen}
    P_{\mathcal D}(y\neq sign(\vw\transpose \vx) )\leq \frac{1}{n\gamma} \sum_{i=1}^{n} \zeta_i + \frac{1}{\gamma}\hat{ \mathcal R}(\mathcal F(\mxp)) + 3\sqrt{\frac{\ln(2/\delta)}{2n}},
\end{equation}
which is an upper bound for the generalization error. The term $\hat{\mathcal R}(\mathcal F(\mxp))$ denotes the empirical Rademacher complexity. Since the generalization error depends on the $\inv{\mxp}$ norm via $\hat{ \mathcal R}(\mathcal F(\mxp))$, we minimize this quantity w.r.t. $\mxp$ and obtain the following lemma (proved in Appendix~\ref{sec:RC_Precond}). 
\begin{lemma}
\label{lemma:RC_Precond}
The Rademacher complexity of the model family $\mathcal{F} = \{y\vw\transpose \vx | \frac{1}{2}\vw\transpose \inv{\mxp} \vw \leq E\}$ is upper bounded by 
\begin{equation}
    \label{eq:rc_pmm}
    \hat{\mathcal{R}}(\mathcal{F}) \leq \frac{2\sqrt{2E}}{n} \sqrt{tr(\mxp \hat{\Sigma})}
\end{equation}
where $\hat{\Sigma}$ is the scaled covariance matrix of the data i.e. $\hat{\Sigma} = \mx\transpose \mx $. By constraining $\mxp$ to be symmetric positive definite, $\mxp^* = \inv{\hat{\Sigma}}$ minimizes the Rademacher complexity. 
\end{lemma}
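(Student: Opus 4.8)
The plan is to (i)~rewrite the empirical Rademacher complexity $\hat{\mathcal R}(\mathcal F)$ as an expectation of a linear functional maximized over an ellipsoid and diagonalize that ellipsoid by a linear change of variables, (ii)~bound the resulting expected norm by Jensen's inequality and evaluate its second moment in closed form, and (iii)~minimize the resulting closed-form bound over $\mxp$ after normalizing the scale of $\mxp$.

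For~(i): by definition, and absorbing the labels into the Rademacher signs (since $y_i\in\{\pm1\}$, the product $y_i\epsilon_i$ is again a Rademacher variable, so the labels play no role), $\hat{\mathcal R}(\mathcal F)$ equals, up to the constant coming from symmetrization, $\tfrac1n\,\E_\epsilon\!\left[\sup_{\frac12\vw\transpose\inv{\mxp}\vw\le E}\dpr{\vw}{\sum_{i=1}^n\epsilon_i\vx_i}\right]$. Substituting $\vw=\mxp^{1/2}\mathbf{u}$ (legitimate since $\mxp$ is symmetric positive definite) turns the constraint into $\normsq{\mathbf{u}}\le 2E$ and the objective into $\dpr{\mathbf{u}}{\mxp^{1/2}\sum_i\epsilon_i\vx_i}$, whose supremum over that Euclidean ball is $\sqrt{2E}\,\norm{\mxp^{1/2}\sum_i\epsilon_i\vx_i}$. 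For~(ii), writing $\mathbf{v}\coloneqq\sum_i\epsilon_i\vx_i$, Jensen gives $\E_\epsilon\norm{\mxp^{1/2}\mathbf{v}}\le\sqrt{\E_\epsilon[\mathbf{v}\transpose\mxp\mathbf{v}]}$, and because $\E[\epsilon_i\epsilon_j]=\delta_{ij}$ the cross terms vanish, so $\E_\epsilon[\mathbf{v}\transpose\mxp\mathbf{v}]=\sum_i\vx_i\transpose\mxp\vx_i=\mathrm{tr}\bigl(\mxp\sum_i\vx_i\vx_i\transpose\bigr)=\mathrm{tr}(\mxp\,\mx\transpose\mx)=\mathrm{tr}(\mxp\hat{\Sigma})$. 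Collecting the constants gives exactly~\eqref{eq:rc_pmm}; the factor~$2$ is inherited from the convention for $\hat{\mathcal R}$ used in the cited references and plays no role in what follows.

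For~(iii), minimizing~\eqref{eq:rc_pmm} over $\mxp$ is equivalent to minimizing $\mathrm{tr}(\mxp\hat{\Sigma})$ over symmetric positive definite $\mxp$; this is only well posed once the scale of $\mxp$ is fixed, since replacing $\mxp$ by $c\mxp$ merely rescales the energy budget $E$, so I would impose, e.g., $\det\mxp=1$. Letting $\lambda_1,\dots,\lambda_d>0$ be the eigenvalues of $\hat{\Sigma}^{1/2}\mxp\hat{\Sigma}^{1/2}$ (equivalently, of $\mxp\hat{\Sigma}$), the AM--GM inequality gives $\tfrac1d\mathrm{tr}(\mxp\hat{\Sigma})=\tfrac1d\sum_j\lambda_j\ge\bigl(\prod_j\lambda_j\bigr)^{1/d}=\det(\mxp\hat{\Sigma})^{1/d}=\det(\hat{\Sigma})^{1/d}$, with equality iff all $\lambda_j$ are equal, i.e.\ $\hat{\Sigma}^{1/2}\mxp\hat{\Sigma}^{1/2}=\lambda\mi_d$, i.e.\ $\mxp\propto\inv{\hat{\Sigma}}$. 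This uses that $\hat{\Sigma}=\mx\transpose\mx$ is invertible, which holds in the under-parameterized regime $d\le n$ with $\mx$ of full rank. Since the minimizer is unique up to the irrelevant scaling, I record it as $\mxp^*=\inv{\hat{\Sigma}}$. (A Lagrangian computation, $\nabla_\mxp[\mathrm{tr}(\mxp\hat{\Sigma})-\lambda\log\det\mxp]=\hat{\Sigma}-\lambda\inv{\mxp}=0$, gives the same answer.)

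Steps~(i)--(ii) are routine; the step needing care is~(iii), where the naive minimization is scale-degenerate and one must first normalize $\mxp$ (or, equivalently, note that $\mxp$ enters the hypothesis class only up to a positive scalar because it trades off against the budget $E$) before AM--GM pins down $\mxp^*\propto\inv{\hat{\Sigma}}$. The only remaining bookkeeping is matching the leading constant to the reference's definition of $\hat{\mathcal R}$, which accounts for the factor~$2$ in~\eqref{eq:rc_pmm} and does not affect the identity of the optimal preconditioner.
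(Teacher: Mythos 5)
Your derivation of the bound itself follows the same route as the paper: the supremum of the linear functional over the ellipsoid $\frac12\vw\transpose\inv{\mxp}\vw\le E$ yields $\sqrt{2E}$ times the $\mxp$-weighted norm of $\sum_i\epsilon_i\vx_i$ (the paper gets this via Cauchy--Schwarz in the $\inv{\mxp}$ norm, you via the substitution $\vw=\mxp^{1/2}\mathbf{u}$ --- the same step in different clothing), followed by Jensen and the vanishing of cross terms to reach $\mathrm{tr}(\mxp\hat{\Sigma})$. Where you genuinely diverge is the optimality claim. The paper simply appends a $-\ln\det(\mxp)$ term to the objective (justified only as ``guaranteeing no zero eigenvalue''), sets the gradient to zero, and reads off $\mxp^*=\inv{\hat{\Sigma}}$; this is a first-order stationarity computation with an unexplained choice of regularizer and multiplier, and it leaves the scale ambiguity unaddressed. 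Your treatment is more complete: you identify explicitly that $\mathrm{tr}(\mxp\hat{\Sigma})$ is scale-degenerate because $c\mxp$ trades off against the budget $E$, you fix the gauge with $\det\mxp=1$, and the AM--GM argument on the eigenvalues of $\hat{\Sigma}^{1/2}\mxp\hat{\Sigma}^{1/2}$ establishes \emph{global} optimality with the equality case pinning down $\mxp\propto\inv{\hat{\Sigma}}$ uniquely. Your parenthetical Lagrangian check is exactly the paper's computation, so you have subsumed their argument while repairing its two weak points (the ad hoc regularizer and the absence of a global-optimality guarantee). The conclusion and the bound are identical in both versions.
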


\begin{wrapfigure}{r}{0.3\textwidth}
 \vspace{-4.5ex}
  \begin{center}
    \includegraphics[trim=0 0 0 10, clip, width=0.3\textwidth]{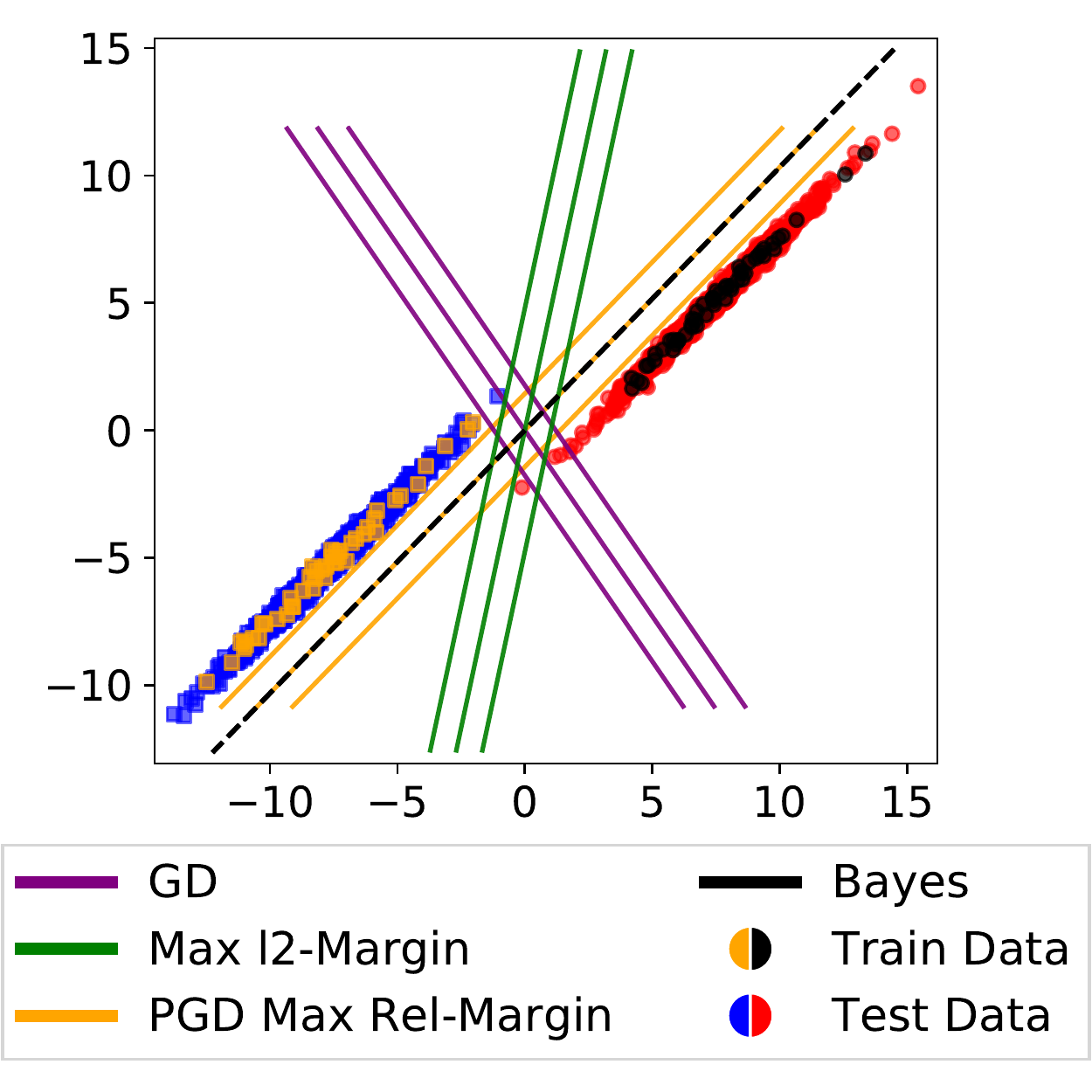}
  \end{center}
  \vspace{-3.5ex}
  \caption{Comparing solutions with maximum $\ell_2$ and relative margin performance on a synthetic mixture of Gaussians dataset.}
  \label{fig:sigma_prec_gd}
\end{wrapfigure}


If the two classes have different covariances, it can be easily shown that $\mxp^* = \inv{(\hat{\Sigma}_+ + \hat{\Sigma}_-)}$ where $\hat{\Sigma}_+$ and $\hat{\Sigma}_-$ is the scaled covariance matrix for the positive and negative classes respectively. Furthermore under a mild assumption, we show that $\mxp^*$ also minimizes an upper-bound on the VC-dimension. The above lemma shows that maximum margin solutions in the $\hat{\Sigma}$-norm can generalize better than the $\ell_2$ max-margin solution. Finally, we note that the margin in the $\hat{\Sigma}$-norm was defined as the \emph{relative margin} in~\cite{shivaswamy2010maximum} where the authors modified the standard SVM formulation to maximize the relative margin~\cite{shivaswamy2007ellipsoidal}. We present a simple example~\cite{shivaswamy2010maximum} (details in  Appendix~\ref{app:relative-margin}) to empirically validate the above result. In Figure~\ref{fig:sigma_prec_gd}, we show the effect of the proposed preconditioning for gradient descent: incorporating the covariance of the data maximizes the relative margin resulting in a solution which is better aligned with the Bayes optimal classifier. The above results show that measuring convergence w.r.t to the $\ell_2$ max-margin is arbitrary, and norms incorporating the structure of the data can generalize better. 

Next we consider whether optimization methods converge to solutions with good generalization properties. Since the $0$-$1$ loss is non-convex and difficult to minimize, we consider optimizers minimizing two surrogate loss functions - the squared-hinge loss and the exponential loss. In each of these cases, we develop techniques to bias the optimizers towards the maximum $\ell_2$ or $\hat{\Sigma}^{-1}$-margin solutions that have good generalization properties.  


\subsection{Squared hinge loss}
\label{sec:lin-class-sq-hinge}
The squared-hinge loss is a smooth, convex loss: $f(\vw) \coloneqq \frac{1}{n}\sum_{i = 1}^{n} \left(\max\{0,1 - \langle \vw, \vx_i \rangle\} \right)^{2}$. In this case, an interpolating solution achieves zero training loss. In Appendix~\ref{app:sq-hinge-counter}, we first show that gradient descent with an arbitrary initialization is not guaranteed to converge to the max-margin solution. Furthermore, we show that even with a zero initialization, GD is not guaranteed to converge to the max-margin solution for any constant step-size. To guarantee convergence to the max-margin solution, we assume knowledge of the true $\ell_2$-margin $\gamma$. Under this assumption, we analyze the convergence of \emph{projected GD} that projects the iterates onto the $\ell_2$ ball with radius $1/\gamma$ after each gradient step. The following proposition characterizes the rate of convergence of the empirical margin $\hat{\gamma}(\vw) \coloneqq \frac{\min_{i \in [n]} \langle \vx_i, \vw \rangle}{\norm{\vw}_2}$, which is a measure of convergence to the max-margin solution~\cite{soudry2018implicit}. 
\begin{proposition}
Assuming knowledge of the true margin $\gamma$, if $f$ is $L$-smooth, projected GD and projected GD with Nesterov acceleration results in the following convergence rate,
\begin{align*} 
\hat{\gamma}_\text{GD}(\x_T) \geq \gamma(1 - \frac{n\sqrt{L} \, \norm{\x_0 - \xopt} }{\sqrt{T}} ) \quad \text{;} \quad \hat{\gamma}_{\text{Acc-GD}}(\x_T) \geq \gamma (1 - \frac{n\sqrt{L} \, \norm{\x_0 - \xopt} }{T}).
\end{align*}
\label{thm:sqhinge-projgd-rate}
\end{proposition}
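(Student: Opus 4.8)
The plan is to reduce the statement to a standard convergence result for projected (accelerated) gradient descent on a smooth convex function, then translate the function-value guarantee into a guarantee on the empirical margin $\hat\gamma(\vw_T)$. First I would set up the constrained problem: minimizing the $L$-smooth squared-hinge loss $f$ over the $\ell_2$ ball $\mathcal{B} = \{\vw : \norm{\vw}_2 \le 1/\gamma\}$. Since the data is separable with $\ell_2$-margin $\gamma$, the scaled max-margin direction $\vw^* = \vw_{\text{mm}}/\gamma$ lies in $\mathcal{B}$, satisfies $\langle \vw^*, \vx_i\rangle \ge 1$ for all $i$, and hence achieves $f(\vw^*) = 0$, i.e. it is a minimizer of $f$ over $\mathcal{B}$ with optimal value $0$. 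Standard results then give, for projected GD with step-size $1/L$, the bound $f(\vw_T) - f(\vw^*) \le \frac{L\norm{\vw_0 - \vw^*}^2}{2T}$, and for the Nesterov-accelerated projected variant $f(\vw_T) - f(\vw^*) \le \frac{2L\norm{\vw_0 - \vw^*}^2}{(T+1)^2}$. Both can be cited rather than reproved.

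The second step is the key translation: from $f(\vw_T) \le \epsilon$ (small) deduce a lower bound on $\hat\gamma(\vw_T)$. Write $\zeta_i \coloneqq (1 - \langle \vw_T, \vx_i\rangle)_+$, so $f(\vw_T) = \frac1n \sum_i \zeta_i^2 \le \epsilon$. This forces $\zeta_i \le \sqrt{n\epsilon}$ for every $i$ (since a single term is at most the whole sum times $n$), hence $\langle \vw_T, \vx_i\rangle \ge 1 - \sqrt{n\epsilon}$ for all $i$, so $\min_{i}\langle \vw_T, \vx_i\rangle \ge 1 - \sqrt{n\epsilon}$. Meanwhile $\norm{\vw_T}_2 \le 1/\gamma$ because $\vw_T$ is the projection onto $\mathcal{B}$. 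Therefore
\[
\hat\gamma(\vw_T) = \frac{\min_i \langle \vw_T, \vx_i\rangle}{\norm{\vw_T}_2} \ge \frac{1 - \sqrt{n\epsilon}}{1/\gamma} = \gamma\bigl(1 - \sqrt{n\epsilon}\bigr).
\]
Substituting $\epsilon = \frac{L\norm{\vw_0 - \vw^*}^2}{2T}$ for plain projected GD gives $\hat\gamma(\vw_T) \ge \gamma\bigl(1 - \sqrt{nL}\,\norm{\vw_0 - \vw^*}/\sqrt{2T}\bigr)$, and $\epsilon = \frac{2L\norm{\vw_0 - \vw^*}^2}{(T+1)^2}$ for the accelerated version gives $\hat\gamma(\vw_T) \ge \gamma\bigl(1 - \sqrt{2n}\sqrt{L}\,\norm{\vw_0 - \vw^*}/(T+1)\bigr)$, matching the stated rates up to the displayed constants (which I would reconcile by tracking the exact constant in the chosen convergence lemma, or by using the slightly looser $f(\vw_T) \le \frac{L\norm{\vw_0-\vw^*}^2}{T}$ and $\le \frac{2L\norm{\vw_0-\vw^*}^2}{T^2}$ forms and absorbing factors).

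I expect the main obstacle to be purely bookkeeping: making the constants in the cited projected-GD and accelerated-projected-GD convergence bounds line up exactly with the factor $n\sqrt{L}\norm{\vw_0 - \vw^*}$ appearing in the statement, and being careful that $\vw^*$ in the bound is the ball-constrained minimizer (the scaled max-margin solution) rather than an unconstrained one. A secondary point to check is that the projection step genuinely yields $\norm{\vw_T}_2 \le 1/\gamma$ with equality not required — this is immediate from the definition of Euclidean projection onto $\mathcal{B}$ — and that $f$ restricted to $\mathcal{B}$ still has the same smoothness constant $L$, which holds since restriction to a convex set does not increase the smoothness modulus. Everything else is the elementary inequality $\zeta_j^2 \le \sum_i \zeta_i^2$ used above.
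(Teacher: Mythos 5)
Your proposal is correct and follows the same overall strategy as the paper's proof: invoke the standard $O(L\norm{\vw_0-\vw^*}^2/T)$ (resp.\ $O(L\norm{\vw_0-\vw^*}^2/T^2)$) function-value rate for projected (accelerated) gradient descent on the $\ell_2$ ball of radius $1/\gamma$, use separability to conclude that the constrained optimum is $f^*=0$, and then convert the small loss value into a lower bound on $\min_i\langle\vw_T,\vx_i\rangle$ and hence on the normalized margin. The one place you diverge is the conversion step, and your version is in fact sharper: you bound a single term by the full sum of squares, $\zeta_j^2\le\sum_i\zeta_i^2\le n\epsilon$, giving $\max_j\zeta_j\le\sqrt{n\epsilon}$, whereas the paper first applies Jensen's inequality to get $\frac{1}{n}\sum_i\zeta_i\le\sqrt{\epsilon}$ and then bounds the max by the whole sum, $\max_j\zeta_j\le\sum_i\zeta_i\le n\sqrt{\epsilon}$. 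Your route thus replaces the factor $n$ in the stated bound by $\sqrt{n}$, so it implies the proposition (and slightly improves its dependence on $n$); the remaining discrepancies are exactly the constant-factor bookkeeping you flag. One shared caveat, which you handle more carefully than the paper does: the projection only guarantees $\norm{\vw_T}_2\le 1/\gamma$ (the paper asserts equality), and the final division step is valid in the regime where the numerator lower bound $1-\sqrt{n\epsilon}$ is nonnegative, which is the only regime in which the bound is informative anyway.
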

As $T \rightarrow \infty$, the empirical margin converges to the true margin. This rate of convergence to the max-margin solution is faster than the $O(1/\log(T))$ rate GD achieves when minimizing the exponential loss. The lower bound on max-margin convergence is $O(\sqrt{n/T})$~\cite{clarkson2012sublinear}, meaning that projected GD with Nesterov acceleration has a better dependence on $T$ at the cost of a worse dependence on $n$. 

Projection onto the $\ell_2$ ball results in the constrained optimization, $\min_{\vw} f(\vw)$ such that $\norm{\vw}_{2} = 1/\gamma$. This is a smooth, convex problem with a convex constraint set and can be solved by a generic optimization method. Furthermore, by choosing the radius of the ball to be the reciprocal of the margin, the constrained optimization problem has a unique minimizer equal to the max-margin solution. This results implies that (assuming knowledge of the true data margin) the projected variant of a generic optimization method is guaranteed to converge to the max-margin solution. If we assume knowledge of the margin in the $\mxp$ norm, the same result holds more generally. This implies that projections onto the $\indnorm{\cdot}{\hat{\Sigma}}$ ball of the appropriate radius can recover the maximum $\hat{\Sigma}^{-1}$-margin solution with better generalization properties. 

\subsection{Exponential loss}
\label{sec:lin-class-exp}
The exponential loss is a smooth, convex loss minimized at infinity for separable data and given by: $f(\x) \coloneqq \frac{1}{n} \sum_{i = 1}^{n} \exp{(-\dpr{\x}{\vx_i})}$. Previous works~\cite{nacson2018convergence, gunasekar2018characterizing, soudry2018implicit} show that (stochastic) gradient descent converges to the max-margin solution and the resulting empirical margin converges to the true margin at an $O(1/\log(T))$ rate. This result can be extended to general losses with an exponential tail, including the logistic loss~\cite{soudry2018implicit}. Similar to the regression setting, we first analyze the implicit regularization of PGD. In particular, we use the result in~\cite{soudry2018implicit} and state the following lemma (for completeness, we provide the proof in Appendix~\ref{app:proofs-lin-class}),  
\begin{lemma}
The empirical margin for PGD with preconditioner $P$ and constant step-size $\eta < 1/f(\x_0)$ satisfies:
\begin{align*}
\frac{\min_{j} \langle \xkk, \vx_j \rangle}{\indnorm{\xkk}{\mxp^{-1}}} & \geq \gamma - \left[\frac{\gamma \indnorm{\x_0}{\pco} + \eta f(\x_0) + \log(f(\x_0))}{\indnorm{\x_0}{\pco} + \log \left( \eta \gamma^2 (k+1) \right)} \right]
\end{align*}
The empirical margin converges to the true margin $\gamma$ (in the $\pco$-norm) at an $O(1/\log(k))$ rate.
\label{lemma:exp-loss-pgd}
\end{lemma}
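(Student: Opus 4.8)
The plan is to reduce preconditioned gradient descent on the exponential loss $f$ to \emph{ordinary} gradient descent on a linearly transformed exponential loss, and then invoke the normalized-margin convergence result of~\cite{soudry2018implicit} essentially as a black box.

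\textbf{Reduction.} Since $\mxp$ is positive definite it has a symmetric positive-definite square root $\mxp^{1/2}$. Put $\mathbf{u}_k \defeq \mxp^{-1/2}\x_k$ and define $g(\mathbf{u}) \defeq f(\mxp^{1/2}\mathbf{u}) = \tfrac1n\sum_i \exp(-\dpr{\mathbf{u}}{\tilde{\vx}_i})$ with transformed data $\tilde{\vx}_i \defeq \mxp^{1/2}\vx_i$. From $\nabla g(\mathbf{u}_k) = \mxp^{1/2}\grad{\x_k}$ one sees that left-multiplying the plain GD step $\mathbf{u}_{k+1} = \mathbf{u}_k - \eta\nabla g(\mathbf{u}_k)$ by $\mxp^{1/2}$ reproduces exactly the PGD step $\x_{k+1} = \x_k - \eta\mxp\grad{\x_k}$; hence $\{\x_k\}$ is the image under $\mxp^{1/2}$ of the GD trajectory for $g$ started at $\mxp^{-1/2}\x_0$. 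Under this map $\dpr{\x_k}{\vx_i} = \dpr{\mathbf{u}_k}{\tilde{\vx}_i}$ and $\indnormsq{\x_k}{\pco} = \x_k\transpose\mxp^{-1}\x_k = \norm{\mathbf{u}_k}_2^2$, so in particular $f(\x_0) = g(\mathbf{u}_0)$ and $\indnorm{\x_0}{\pco} = \norm{\mathbf{u}_0}_2$. The same identities applied to the characterization~\eqref{eq:max-margin} show that the standard $\ell_2$-margin of $\{\tilde{\vx}_i\}$ equals the margin $\gamma$ of $\{\vx_i\}$ in the $\pco$-norm (so separability is preserved). Finally, $\eta < 1/f(\x_0) = 1/g(\mathbf{u}_0)$ is exactly the step-size condition under which the descent argument of~\cite{soudry2018implicit} goes through: it forces $g(\mathbf{u}_k)$ to decrease monotonically, keeping the iterates in a bounded sublevel set on which the effective smoothness is controlled.

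\textbf{Invoking the known rate and transporting back.} Applying the normalized-margin bound of~\cite{soudry2018implicit} to GD on $g$ with initialization $\mathbf{u}_0$ and margin $\gamma$ gives
\[
\frac{\min_j \dpr{\mathbf{u}_{k+1}}{\tilde{\vx}_j}}{\norm{\mathbf{u}_{k+1}}_2} \;\geq\; \gamma - \left[\frac{\gamma\norm{\mathbf{u}_0}_2 + \eta\, g(\mathbf{u}_0) + \log g(\mathbf{u}_0)}{\norm{\mathbf{u}_0}_2 + \log\!\left(\eta\gamma^2 (k+1)\right)}\right].
\]
Substituting the dictionary above ($\dpr{\mathbf{u}_{k+1}}{\tilde{\vx}_j} = \dpr{\xkk}{\vx_j}$, $\norm{\mathbf{u}_{k+1}}_2 = \indnorm{\xkk}{\pco}$, $g(\mathbf{u}_0) = f(\x_0)$, $\norm{\mathbf{u}_0}_2 = \indnorm{\x_0}{\pco}$) turns this into the claimed inequality. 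Since the bracket's numerator is a fixed constant depending only on $\x_0$, $\eta$, $\gamma$ while its denominator grows like $\log(k+1)$, the right-hand side converges to $\gamma$ at rate $O(1/\log k)$.

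\textbf{Expected main obstacle.} Everything of substance is in the second step: the cleanest statements in~\cite{soudry2018implicit} assume zero initialization, so I need their general-initialization version — or to rerun their proof — carefully tracking how $\norm{\mathbf{u}_0}_2$, $g(\mathbf{u}_0)$ and the unnormalized-margin sandwich $-\log\!\big(n\, g(\mathbf{u}_k)\big) \le \min_j \dpr{\mathbf{u}_k}{\tilde{\vx}_j} \le -\log g(\mathbf{u}_k)$ pick up initialization-dependent terms, so as to land on exactly the numerator $\gamma\norm{\mathbf{u}_0}_2 + \eta g(\mathbf{u}_0) + \log g(\mathbf{u}_0)$ and denominator $\log(\eta\gamma^2(k+1))$. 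The reduction itself, and the claim that the $\ell_2$-margin of $\{\tilde{\vx}_i\}$ equals $\gamma$, is routine linear algebra.
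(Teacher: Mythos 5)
Your reduction is correct, and it is essentially the paper's proof seen in different coordinates: the paper maps PGD to \emph{steepest descent} with respect to $\indnorm{\cdot}{\pco}$ in the framework of Soudry et al.\ (setting $\gamma_k = \indnorm{\grad{\xk}}{\mxp}$ and $\Delta\xk = \mxp\grad{\xk}/\indnorm{\grad{\xk}}{\mxp}$, with dual norm $\indnorm{\cdot}{\mxp}$), which for a quadratic norm is exactly your $\mxp^{1/2}$ change of variables followed by plain GD. Your dictionary (preserved inner products, $\indnorm{\x_k}{\pco}=\norm{\mathbf{u}_k}_2$, equality of the $\pco$-margin of $\{\vx_i\}$ with the $\ell_2$-margin of $\{\tilde\vx_i\}$) is right, and it correctly transports the step-size condition since $g(\mathbf{u}_0)=f(\x_0)$ and the transformed data norms $\norm{\tilde\vx_i}_2=\indnorm{\vx_i}{\mxp}$ are precisely the dual-norm quantities the descent argument needs. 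The catch is that the step you defer under ``expected main obstacle'' is not a citation away --- it \emph{is} the entire technical content of the paper's proof. The paper reruns Theorem 7 of Soudry et al.\ with nonzero initialization: from the self-bounded descent inequality $f(\xkk)\le f(\xk)\bigl(1-\eta\gamma_k/f(\xk)+\eta^2\gamma_k^2/2\bigr)$ one telescopes and applies $-\log$ plus Jensen to get $\min_j\dpr{\xkk}{\vx_j}\ge\sum_i\eta\gamma_i^2/f(\x_i)-\tfrac12\eta^2\sum_i\gamma_i^2-\log f(\x_0)$; the norm is bounded by $\norm{\x_0}+\eta\sum_i\gamma_i$ via the triangle inequality; the bound $\gamma_i\ge\gamma f(\x_i)$ and the telescoped inequality $\sum_i\tfrac{\eta}{2}\gamma_i^2\le f(\x_0)$ turn the numerator into $\gamma\sum_i\eta\gamma_i-\eta f(\x_0)-\log f(\x_0)$; and Claim 1 of their Theorem 7 gives $\eta\sum_i\gamma_i\ge\log(\eta\gamma^2(k+1))$, which after rearranging yields exactly the stated numerator $\gamma\norm{\x_0}+\eta f(\x_0)+\log f(\x_0)$ and denominator $\norm{\x_0}+\log(\eta\gamma^2(k+1))$ (all norms being $\indnorm{\cdot}{\pco}$). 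So to complete your proof you should carry out this derivation for GD on $g$ rather than asserting the general-initialization bound; once done, the two arguments coincide. (The paper additionally proves an Armijo line-search variant, which is orthogonal to the lemma as stated.)
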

Note that the empirical margin is measured in the $\pco$ norm and PGD with a constant step-size\footnote{In Appendix~\ref{app:proofs-lin-class}, we show the same result for PGD with an Armijo line-search with bounded step-sizes.} converges to the maximum $\mxp$-margin solution. For a general optimizer, consider an iterate $\vw_T$ obtained after $T$ iterations. If the classifier corresponding to $\vw_T$ correctly separates the data, we can find an equivalent $\mxp$ (in the sense of Lemma~\ref{lemma:exp-equivalent-precond}). Furthermore, using the above result, this direction corresponds to the direction of a PGD solution with a preconditioner $\mxp$. This result implies that, as in linear regression, an interpolating solution is equivalent (in direction) to a PGD solution. 

However, unlike regression, it is not possible to provably recover the maximum $\ell_2$-margin solution from a general interpolating solution found by an optimizer. Instead, we consider an empirical strategy that \emph{switches} from the original optimization method and runs ``some'' iterations of PGD to bias the resulting solution towards the corresponding max-margin direction. For example, switching to GD will result in a bias to the max-$\ell_2$-margin, and we can obtain better generalization by using $\mxp = \Sigma^{-1}$ to bias the optimizer to the maximum relative-margin solution. Since Lemma~\ref{lemma:exp-loss-pgd} holds for an arbitrary initialization, we invoke it with $\vw_0$ equal to the iterate obtained by the original optimization method. Since convergence to the max-margin solution depends on the loss $f(\vw_0)$, if the original optimization method is effective in minimizing the loss, the resulting $f(\vw_0)$ is small and $\norm{\vw_0}$ is large, making it possible to approach the corresponding max-margin solution in fewer iterations. In Section~\ref{sec:experiments}, we empirically demonstrate that switching to GD for only a few iterations can significantly improve the generalization performance of the original optimizer. We note that such a strategy of switching to GD (from Adam) has been explored in~\cite{keskar2017improving} in the context of deep networks and found to improve the generalization performance of Adam. Our reasoning using Lemma~\ref{lemma:exp-loss-pgd} gives further intuition on why such a strategy is reasonable. 




\section{Over-parameterized linear classification}
\label{sec:lin-class-over}
In this section, we consider over-parameterized linear classification where $d > n$ and investigate whether common optimization methods converge to the max-margin solution. Note that the max-margin solution $\vw_{mm} = \sum_{i = 1}^{n} \alpha_i \vx_i$ (where $\alpha_i$ are the corresponding dual variables) lies in the span of the training data. This implies that similar to the over-parameterized linear regression setting, a method can converge to $\vw_{mm}$ only if its iterates either lie in the data span or are projected onto it. Similar to the under-parameterized setting, we first consider the convergence of zero-initialized gradient descent when minimizing the exponential and squared hinge losses. Regardless of the loss, the iterates of GD lie in the data span and the optimization happens in an $n$-dimensional subspace. This implies that when analyzing optimization methods whose iterates lie in the span, the over-parameterized and under-parameterized settings are equivalent and the ideas from Section~\ref{sec:lin-class-under} can be used to obtain convergence to the max-margin solution. 

Next, we consider PGD as a simple algorithm whose iterates do not lie in the data-span. Similar to Section~\ref{sec:lin-reg}, one can project the final solution onto the data-span, however, unlike linear regression, such a projection does not guarantee convergence to the max-margin solution. To guarantee such convergence, we propose to use a projected variant of PGD. Projected PGD projects the iterates after each PGD step and has the following update rule: $\vw_{k+1} = \pi[\vw_{k} - \eta \mxp \grad{\vw_k}]$ where $\pi = \mx \inv{\left(\mx \mx\transpose \right)} \mx\transpose$ is the projection operator. The projection ensures that each iterate of this method lies in the data span. In the lemma below (proved in Appendix~\ref{app:over-lin-class-equivalence}), we show that projected PGD can be thought of as PGD with an equivalent preconditioner that lies in the data span. 
\begin{lemma}
Projected PGD with preconditioner $\mxp$ and the projection operator $\pi = \mx \transpose (\mx \mx\transpose)^{-1} \mx$ is equivalent to PGD with a preconditioner $\mx\transpose \bar{\mxp} \mx$ where $\bar{\mxp} = (\mx \mx \transpose)^{-1} \mx \mxp \mx \transpose (\mx \mx \transpose)^{-1}$ that lies in the data span. 
\label{lemma:over-lin-class-equivalence}
\end{lemma}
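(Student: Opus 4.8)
The plan is to show that, with zero initialization (so that the starting point lies in the data span), the iterates of projected PGD never leave $\mathrm{span}(\mx\transpose)$, and that on this subspace the projected-PGD update map coincides with the ordinary PGD update map associated to the preconditioner $\mx\transpose\bar{\mxp}\mx$. Everything then reduces to two elementary observations and one cancellation with $(\mx\mx\transpose)^{-1}$.

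First I would record the two facts I need. \emph{(i)} For a loss of the separable form $f(\vw)=\frac1n\sum_{i}\ell(\langle\vw,\vx_i\rangle)$ — in particular the exponential and squared-hinge losses of this section — the gradient $\grad{\vw}=\frac1n\sum_i\ell'(\langle\vw,\vx_i\rangle)\vx_i$ is a linear combination of the feature vectors, i.e. $\grad{\vw}=\mx\transpose\mathbf{c}(\vw)$ with $\mathbf{c}(\vw)\in\mathbb{R}^{n}$ having entries $\tfrac1n\ell'(\langle\vw,\vx_i\rangle)$; hence $\grad{\vw}\in\mathrm{span}(\mx\transpose)$ and $\pi\,\grad{\vw}=\grad{\vw}$, since $\pi=\mx\transpose(\mx\mx\transpose)^{-1}\mx$ is the orthogonal projector onto $\mathrm{span}(\mx\transpose)$. \emph{(ii)} $\pi$ fixes every point of $\mathrm{span}(\mx\transpose)$, so $\vw_k\in\mathrm{span}(\mx\transpose)\Rightarrow\pi\vw_k=\vw_k$. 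By induction on $k$, every projected-PGD iterate lies in the span: $\vw_0=0\in\mathrm{span}(\mx\transpose)$, and $\vw_{k+1}=\pi[\vw_k-\eta\mxp\grad{\vw_k}]$ is in the range of $\pi$.

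Using (i)–(ii), the projected-PGD update unfolds as
\[
\vw_{k+1}=\pi\vw_k-\eta\,\pi\mxp\grad{\vw_k}=\vw_k-\eta\,\pi\mxp\mx\transpose\mathbf{c}(\vw_k).
\]
It remains to match this with the PGD step $\vw_{k+1}=\vw_k-\eta\,(\mx\transpose\bar{\mxp}\mx)\grad{\vw_k}=\vw_k-\eta\,\mx\transpose\bar{\mxp}\mx\mx\transpose\mathbf{c}(\vw_k)$. Substituting $\bar{\mxp}=(\mx\mx\transpose)^{-1}\mx\mxp\mx\transpose(\mx\mx\transpose)^{-1}$ and cancelling the interior factor $(\mx\mx\transpose)^{-1}(\mx\mx\transpose)=\mi_n$ gives $\mx\transpose\bar{\mxp}\mx\mx\transpose\mathbf{c}=\mx\transpose(\mx\mx\transpose)^{-1}\mx\mxp\mx\transpose\mathbf{c}=\pi\mxp\mx\transpose\mathbf{c}$, which is exactly the projected-PGD increment. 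Hence $\vw_{k+1}=\vw_k-\eta\,(\mx\transpose\bar{\mxp}\mx)\grad{\vw_k}$, so projected PGD is PGD preconditioned by $\mx\transpose\bar{\mxp}\mx$; this matrix has row and column space contained in $\mathrm{span}(\mx\transpose)$, so it ``lies in the data span'', and since $\mxp\succ0$ and $\mx$ has full row rank it is positive definite on that subspace, so the preconditioned dynamics there are well posed.

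The one point to get right — the main ``obstacle'', though it is really just the crux rather than a technical difficulty — is the gradient-in-span observation (i): it is what makes $\pi\,\grad{\vw}=\grad{\vw}$ and lets the projector commute cleanly through $\mxp$. For a loss whose gradient had a component orthogonal to $\mathrm{span}(\mx\transpose)$, the simplification of $\pi\mxp\mx\transpose\mathbf{c}$ would fail and projected PGD would not reduce to any fixed preconditioned method; similarly the equivalence needs the shared initialization to lie in the span (zero initialization), since otherwise PGD with $\mx\transpose\bar{\mxp}\mx$ retains the orthogonal component $\vw_0-\pi\vw_0$ that projected PGD discards at the first step. The remaining manipulations only use that $\mx\mx\transpose\in\mathbb{R}^{n\times n}$ is invertible, which holds in the over-parameterized regime under consideration.
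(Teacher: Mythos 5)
Your proof is correct and follows essentially the same route as the paper's: both hinge on the gradient lying in $\mathrm{span}(\mx\transpose)$, zero initialization, and induction showing the projected-PGD and $\mx\transpose\bar{\mxp}\mx$-preconditioned iterates coincide. The only difference is cosmetic — you verify the identity $\pi\mxp\grad{\vw}=\mx\transpose\bar{\mxp}\mx\,\grad{\vw}$ by writing $\grad{\vw}=\mx\transpose\mathbf{c}(\vw)$ and cancelling $(\mx\mx\transpose)^{-1}\mx\mx\transpose$, whereas the paper reaches the same conclusion via a four-block decomposition of $\mxp$ into components inside and orthogonal to the data span.
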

Note that although $\mx\transpose \bar{\mxp} \mx$ is not full rank, it spans the data-span, and preserves the gradient directions in this subspace. This lemma entails that we can use the techniques developed in Section~\ref{sec:lin-class-under} in the over-parameterized case given that we project the iterates onto the data span after each update. 

For the squared-hinge loss and given knowledge of the true margin $\gamma$, this implies a projected variant of a generic optimization method that first projects onto the data-span and then onto the $\ell_2$ ball of radius $1/\gamma$ will converge to the max-margin solution. For the exponential loss, since Lemma~\ref{lemma:exp-loss-pgd} holds for PGD with a general preconditioner, the equivalence in Lemma~\ref{lemma:over-lin-class-equivalence} implies that it also holds for projected PGD in the over-parameterized setting. Similarly, this equivalence ensures that the result of Lemma~\ref{lemma:exp-equivalent-precond} also holds for projected PGD. From a practical perspective, we show in Section~\ref{sec:experiments} that projecting only the final solution onto the data-span and then switching to GD towards the end of the optimization biases the iterates towards the max-margin solution. As in the under-parameterized setting, we can obtain better generalization than the maximum $\ell_2$ margin solution by considering the maximum relative margin solution. We empirically verify these observations in the next section.

\section{Experimental Evaluation}
\label{sec:experiments}
We verify the theoretical results for both the regression and classification settings, and evaluate the proposed techniques for improving the test performance of optimizers.

\begin{figure}[!h]
\centering
\includegraphics[width=1\textwidth]{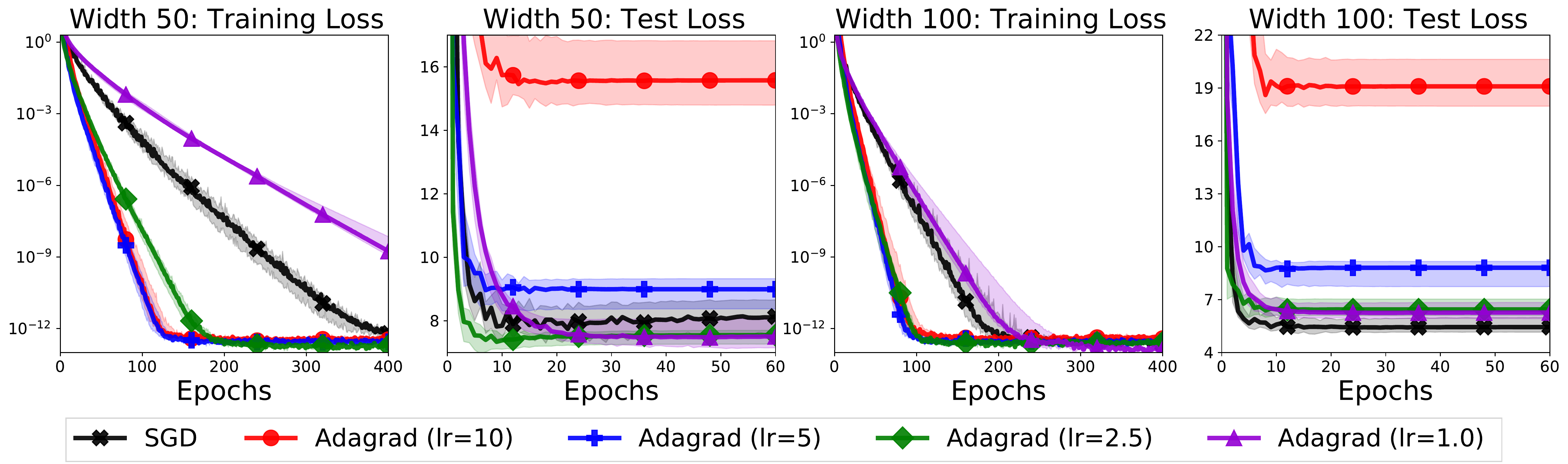}
\caption{Performance of SGD and Adagrad on a synthetic regression problem using the NTK of one-layer networks with 50 and 100 hidden units. Tuned SGD converges slowly to the min-norm solution. In contrast, the convergence of Adagrad is more robust to the step-size, however its generalization depends on the step-size. }
\label{fig:ntk_synthetic_regression}
\end{figure}

\begin{figure}[!h]
\centering
\includegraphics[width=1\textwidth]{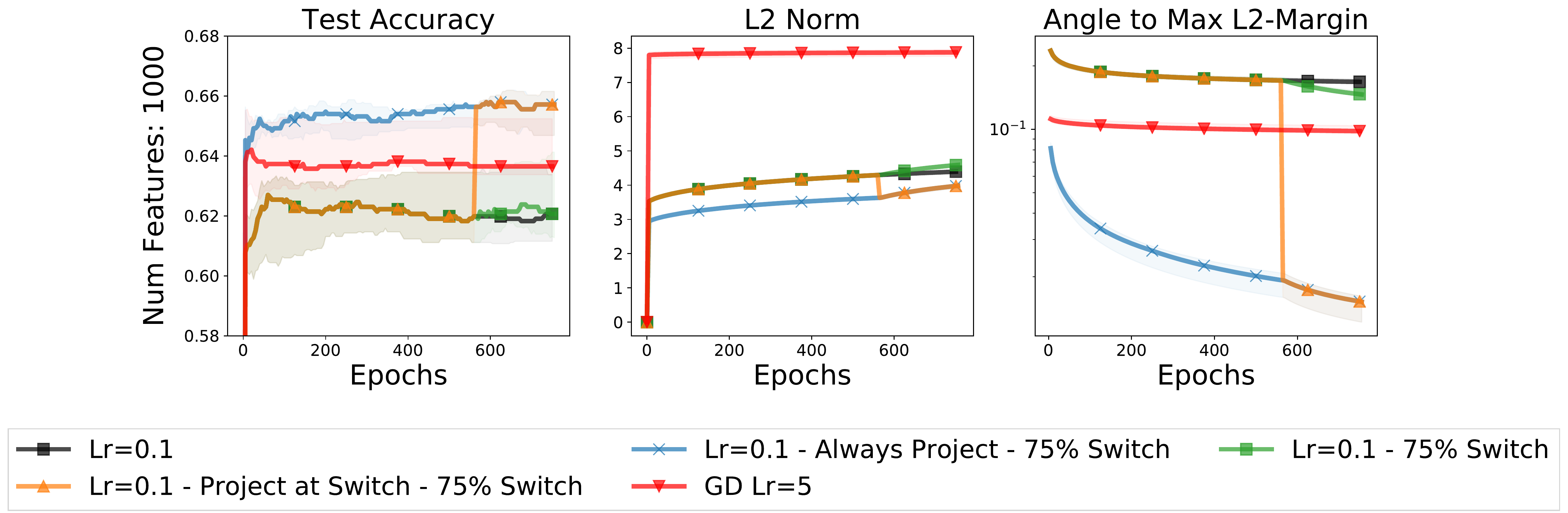}
\vspace{-3ex}
\caption{Performance of GD and Adagrad on a synthetic overparameterized classification problem with random Gaussian features. Projecting onto the data-span improves the test accuracy, while decreasing the solution's norm and angle to the max-margin solution. 
}
\label{fig:overp_classification}
\vspace{-3ex}
\end{figure}
 
\textbf{Regression}: We consider over-parameterized regression with one-layer neural networks. We generate data-points from a Gaussian distribution and use the resulting neural tangent kernel~\cite{jacot2018neural} (NTK) features as input to our problem. The targets are generated to ensure that the problem is realizable. For this problem, we study the generalization of SGD and Adagrad. From Figure~\ref{fig:ntk_synthetic_regression}, we observe that (i) Adagrad is robust to the choice of step-size and converges quickly, (ii) hand-tuned SGD converges slowly due to the problem's ill-conditioning, (iii) Adagrad's test performance highly depends on the step-size, and (iv) the min-norm solution found by SGD consistently generalizes well. We can project the Adagrad solution onto the data span as in Proposition~\ref{prop:proj-span} and recover the min-norm solution. This suggests we can benefit from the robustness of Adagrad, while also ensuring good generalization performance. 

In Appendix~\ref{app:additional_ntk_regression}, we present additional results on the \texttt{wine} and \texttt{mushroom} datasets~\cite{Dua:2019}. In Appendix~\ref{app:regression_verification}, we validate that full-matrix Adagrad and the Newton methods converge to the min-norm solution and verify the construction of the equivalent preconditioner in Lemma~\ref{lemma:opt-P-reduction}. In Appendix~\ref{app:best-p}, we investigate whether it is possible to obtain solutions that generalize better than the min-norm solution by minimizing upper bounds on the excess risk. 

\textbf{Classification}: We evaluate effectiveness of 
switching to GD (Section~\ref{sec:lin-class-under})
and projection (Section~\ref{sec:lin-class-over}) in improving the generalization for over-parameterized linear classification. In particular, we present the results for Adagrad and GD minimizing the logistic loss on a synthetic dataset with random Gaussian features. We ensure that both GD and Adagrad converge to solutions which interpolate the training data. We consider four variants: (i) standard Adagrad (black), (ii) switch to GD after 75\% of the iterations (green) (iii) project onto the data-span after every iteration (blue) and (iv) project the final solution onto the data-span (orange). For the latter two variants, we also switch to GD with a fixed step-size after 75\% of the iterations. From Figure~\ref{fig:overp_classification}, we observe that (i) switching to GD results in a small improvement in the test accuracy. (ii) projections after every iteration result in a smaller norm and angle to the max-margin solution as well as higher test accuracy, (iii) projecting only the final solution is sufficient to improve the test performance (from an accuracy of $64$\% to $66$\%) (iv) compared to projection, switching to GD has a small effect on the test accuracy. Our results indicate that being in the correct subspace can improve the generalization performance. 

In Appendix~\ref{app:overp_cls}, we presents additional results for this setting. In Appendix~\ref{app:equiv_prec_cls}, we verify the construction of the equivalent preconditioner in Lemma~\ref{lemma:exp-equivalent-precond}. Finally, in Appendix~\ref{app:sqh_ball_cls}, we validate that for the squared-hinge loss with known margin, projections onto the data-span and the $\ell_2$ ball ensures convergence to the max-margin solution. 

\vspace{-2ex}
\section{Conclusion}
\label{sec:conclusion}
\vspace{-1ex}
For both linear regression and classification, we saw that an interpolating solution found by an optimizer minimizes an equivalent quadratic norm. This reasoning enabled us to use projections to move between norms (and therefore solutions) for over-parameterized settings. For classification, we showed that it important to consider the geometry induced by the data to measure generalization. We also proposed techniques to improve the generalization of optimization methods. We consider extending our techniques to non-linear models including deep networks as important future work. Finally, we hope to use our insights to develop optimizers that are guaranteed to generalize well. 


\begin{ack}
We would like to thank Nitin Surya for initial help with the code. This research was partially supported by the Canada CIFAR AI Chair Program, a Google Focused Research award, an IVADO postdoctoral scholarship, an NSERC CGS M award and by the NSERC Discovery Grants RGPIN-2017-06936. Simon Lacoste-Julien is a CIFAR Associate Fellow in the Learning in Machines \& Brains program
\end{ack}

\bibliographystyle{plain}
\bibliography{ref}

\newpage
\appendix
\newgeometry{margin=0.58in}
\appendixTitle

{\bf Organization of the Appendix}
\begin{itemize}
    \item[\ref{app:reg-main-proofs}] \nameref{app:reg-main-proofs}\\[.1em]

    \item[\ref{app:class-main-proofs}] \nameref{app:class-main-proofs}\\[.1em]
    
    \item[\ref{app:proofs-lin-reg}] \nameref{app:proofs-lin-reg}\\[.1em]
    
    \item[\ref{app:proofs-lin-class}] \nameref{app:proofs-lin-class}\\[.1em]
    
    \item[\ref{app:exp-regression}] \nameref{app:exp-regression}\\[.1em]
    
    \item[\ref{app:exp-classification}] \nameref{app:exp-classification}\\[.1em]
    
\end{itemize}

\section{Main proofs for linear regression}
\label{app:reg-main-proofs}
\subsection{Proof of Lemma~\ref{lemma:opt-P-reduction}}
\begin{proof}
Let $\x_{opt}$ be the solution to which a given optimizer converges to. Since $\x_{opt}$ interpolates the data, $\mx \x_{opt} = \vy$. This solution also corresponds to the min-norm solution measured in the $\mm$ norm, implying we want to find a positive definite matrix $\mm$ s.t. 
\begin{align*}
\x_{opt} = \argmin_{z} \frac{1}{2} z\transpose \,\mm \vz \quad \text{s.t.} \quad \mx z = \vy
\intertext{The Lagrangian for the optimization on the RHS can be written as follows. Here $\lambda \in \mathbb{R}^{n \times 1}$}
L(z, \lambda) = \frac{1}{2} z\transpose \,\mm \vz + \lambda\transpose \left(\mx \vz - \vy \right) \\
\frac{\partial L(\vz, \lambda)}{\partial \vz} =\mm  + \mx\transpose \lambda 
\intertext{Since $\x_{opt}$ is the solution of this optimization problem, }
\mm \x_{opt}  = - \mx\transpose \lambda
\end{align*}
implying $\mm \x_{opt} \in span\{\mx\transpose\}$. We choose $\mm$ to be the following matrix, 
\[
\mm = \normsq{\x_{opt}} \mi_d - \x_{opt} \x_{opt}\transpose + \frac{\nu \nu\transpose}{\langle \x_{opt}, \nu \rangle}
\]
Here $\nu = \mx\transpose \alpha$ where $\alpha$ is a random vector such that $\langle \x_{opt}, \nu \rangle > 0$. Note that for any $\nu$, either $\nu$ or $-\nu$ satisfies this constraint. We now verify that $\mm \x_{opt} = \nu = \mx\transpose \alpha$. 
\begin{align*}
\mm \x_{opt} = \normsq{\x_{opt}} \x_{opt} - (\x_{opt} \x_{opt}\transpose) \x_{opt} + \frac{\nu \nu\transpose \x_{opt}}{\langle \x_{opt}, \nu \rangle} = \normsq{\x_{opt}} \x_{opt} - \normsq{\x_{opt}} \x_{opt} + \nu = \nu
\end{align*}
We now compute $\va\transpose \mm \va$ to verify that $\mm$ is positive definite. 
\begin{align*}
\va\transpose \mm \va & = \normsq{\x_{opt}} \normsq{\va} - \left(\va\transpose \x_{opt}) \, (\va\transpose \x_{opt}\right)\transpose + \frac{(\va\transpose \nu) \, (\va\transpose \nu)\transpose}{\langle \x_{opt}, \nu \rangle} = \normsq{\x_{opt}} \normsq{\va} - \normsq{\va \, \x_{opt}} + \frac{\normsq{\va\transpose \nu}}{\langle \x_{opt}, \nu \rangle} \\
& > \normsq{\x_{opt}} \normsq{\va} - \normsq{\x_{opt}} \normsq{\va} + \frac{\normsq{\va\transpose \nu}}{\langle \x_{opt}, \nu \rangle} = \frac{\normsq{\va\transpose \nu}}{\langle \x_{opt}, \nu \rangle}
\end{align*}
Since $\langle \x_{opt}, \nu \rangle > 0$ by construction, $\va\transpose \mm \va > 0$ for all $\va$, implying $\mm$ is positive definite. 

The preconditioner resulting in the min-norm solution in the $\mm$ norm is $\mm^{-1}$ which is a family of preconditioners that result in the same solution as $\x_{opt}$.
\end{proof}

\subsection{Proof of Proposition~\ref{prop:proj-span}}
\begin{proof}
Recall that the solution found by an optimizer with equivalent preconditioner $\mxp_1$ can be written as \begin{align*}
\winf^{\mxp_1} & = \mxp_1 \mx\transpose (\mx \mxp_1 \mx\transpose)^{-1} \vy \\
\intertext{Projecting this solution using the projection matrix, $\pi_P = \mxp_2 \mx\transpose (\mx \mxp_2 \mx\transpose)^{-1} \mx$, }    
\pi_P [\x_{opt}] & = \mxp_2 \mx\transpose (\mx \mxp_2 \mx\transpose)^{-1} \mx \winf^{\mxp_1}  \\
& = \mxp_2 \mx\transpose (\mx \mxp_2 \mx\transpose)^{-1} \mx \mxp_1 \mx\transpose(\mx \mxp_1 \mx\transpose)^{-1}\vy \\
& = \mxp_2 \mx\transpose (\mx \mxp_2 \mx\transpose)^{-1} \vy = \winf^{\mxp_2}
\end{align*}
which is the solution obtained by PGD with a preconditioner $\mxp_2$ and interpolates the data. 
\end{proof}

\section{Main proofs for linear classification}
\label{app:class-main-proofs}

\subsection{Proof of Lemma~\ref{lemma:exp-loss-pgd}}
We map preconditioned gradient descent to the notation of~\cite{soudry2018implicit} and use Theorem 7 that considers steepest descent to prove the statement of the lemma. We first consider a constant step-size s.t. $\etak = \eta < 1/f(\x_0)$. 
\begin{proof}
PGD has the following update:
\begin{align*}
\xkk & = \xk - \eta \mxp \grad{\xk}    
\intertext{The update considered in Appendix B.2. of~\cite{soudry2018implicit} can be written as follows:}
\xkk & = \xk - \eta \gamma_k \Delta \xk \\
\intertext{where $\gamma_k$ and $\Delta \xk$ are defined as:}
\gamma_k & = \norm{\grad{\xk}}_{*} \quad \text{;} \quad \langle \Delta \xk, \grad{\xk} \rangle = \norm{\grad{\xk}}_{*} \quad \text{;} \quad \norm{\Delta \xk} = 1
\intertext{Here, $\norm{\cdot}_{*}$ is the dual norm.}
\intertext{Mapping PGD to this update, we get the following equivalence:}
\gamma_k = \indnorm{\grad{\xk}}{\mxp} \quad \text{;} \quad \Delta \xk = \frac{P\grad{\xk}}{\indnorm{\grad{\xk}}{\mxp}}
\intertext{It is easy to verify that $\norm{\cdot} = \indnorm{\cdot}{\pco}$ and $\norm{\cdot}_{*} = \indnorm{\cdot}{\mxp}$, and that $\gamma_k$ and $\Delta \xk$ satisfy the above relations.}
\end{align*}
Given this mapping, we follow the proof of Theorem 7. Using the descent lemma and the property of the exponential loss, 
\begin{align*}
f(\xkk) & \leq f(\xk) \left(1 - \frac{\eta \gamma_k}{f(\xk)} + \frac{\eta^2 \gamma_k^2}{2} \right)
\intertext{Recursing, multiplying both sides by $-\log$ and using Jensen's inequality, we get a bound on the unnormalized margin}
\min_{j} \langle \xkk, \vx_j \rangle & \geq \sum_{i = 0}^{k} \frac{\eta \gamma_i^2}{f(\x_i)} -\frac{1}{2} \eta^2 \sum_{i = 0}^{k} \gamma_i^2 - \log(f(\x_0))
\intertext{We now upper bound the $\norm{\xkk}$. Note for PGD, this norm is the induced norm wrt to the $P^{-1}$ matrix.}
\norm{\xkk} & = \norm{\x_0 - \eta \sum_{i =0}^{t} \gamma_i \Delta \x_i} \leq \norm{\x_0} + \eta \norm{\sum_{i = 0}^{t} \gamma_i \Delta \x_i} \\
& \leq \norm{\x_0} + \eta \sum_{i = 0}^{t} \norm{\gamma_i \Delta \x_i} \leq \norm{\x_0} + \eta \sum_{i = 0}^{t} \gamma_i
\intertext{Dividing the above inequalities, we get a bound on normalized margin,}
\frac{\min_{j} \langle \xkk, \vx_j \rangle}{\norm{\xkk}} & \geq \frac{\sum_{i = 0}^{k} \frac{\eta \gamma_i^2}{f(\x_i)} -\frac{1}{2} \eta^2 \sum_{i = 0}^{k} \gamma_i^2 - \log(f(\x_0))}{\norm{\x_0} + \eta \sum_{i = 0}^{t} \gamma_i}
\intertext{Using the result in Lemma 2 of~\cite{soudry2018implicit}, $\gamma_i \geq \gamma f(\x_i)$,}
\frac{\min_{j} \langle \xkk, \vx_j \rangle}{\norm{\xkk}} &  \geq \frac{\gamma \sum_{i = 0}^{k} \eta \gamma_i -\frac{1}{2} \eta^2 \sum_{i = 0}^{k} \gamma_i^2 - \log(f(\x_0))}{\norm{\x_0} + \eta \sum_{i = 0}^{t} \gamma_i}
\end{align*}
\begin{align*}
\intertext{Using the descent lemma,}
f(\xkk) & \leq f(\xk) - \frac{\eta}{2} \gamma^2_k \implies \sum_{i = 0}^{k}\frac{\eta}{2} \gamma^2_k \leq f(\x_0) & \tag{By telescoping from $i = 0$ to $k$}
\intertext{Using this bound in the above inequality,}
\frac{\min_{j} \langle \xkk, \vx_j \rangle}{\norm{\xkk}} &  \geq \frac{\gamma \sum_{i = 0}^{k} \eta \gamma_i - \eta f(\x_0) - \log(f(\x_0))}{\norm{\x_0} + \eta \sum_{i = 0}^{t} \gamma_i} \\
\intertext{Rearranging,}
& \geq \gamma - \left[\frac{\gamma \norm{\x_0} + \eta f(\x_0) + \log(f(\x_0))}{\norm{\x_0} + \eta \sum_{i = 0}^{t} \gamma_i} \right]
\intertext{Finally, we use Claim 1 in Theorem 7 of~\cite{soudry2018implicit} to bound $\eta \sum_{i = 0}^{t} \gamma_i$,}
\eta \sum_{i = 0}^{t} \gamma_i & \geq \log \left( \eta \gamma^2 (k+1) \right)
\intertext{Using this bound in the above inequality,}
\frac{\min_{j} \langle \xkk, \vx_j \rangle}{\norm{\xkk}} & \geq \gamma - \left[\frac{\gamma \norm{\x_0} + \eta f(\x_0) + \log(f(\x_0))}{\norm{\x_0} + \log \left( \eta \gamma^2 (k+1) \right)} \right]
\intertext{If $\x_0 = 0$, $f(\x_0) = 1$, $\eta = 1/f(\x_0) = 1$}
\implies \frac{\min_{j} \langle \xkk, \vx_j \rangle}{\norm{\xkk}} & \geq  \gamma - \frac{1}{\log \left(\gamma^2 (k+1) \right)} 
\intertext{Making the dependence on $P$ explicit,}
\implies \frac{\min_{j} \langle \xkk, \vx_j \rangle}{\indnorm{\xkk}{P^{-1}}} & \geq  \gamma - \frac{1}{\log \left(\gamma^2 (k+1) \right)} 
\intertext{As $k \rightarrow \infty$,}
\frac{\min_{j} \langle \xkk, \vx_j \rangle}{\indnorm{\xkk}{P^{-1}}} \rightarrow \gamma
\end{align*}
We now show that using backtracking line-search procedure to set the step-size does not change the implicit regularization and results in a similar bound. The line-search procedures picks the largest step-size that satisfies the Armijo condition:
\begin{align*}
f(\xkk) & \leq f(\xk) - c \etak \normsq{\grad{\xk}}_{*} 
\intertext{Here, $c$ is a hyper-parameter. We assume that the resulting step-size $\etak \in [\eta_{\min}, \eta_{\max}]$. Using the PGD update:}
\xkk & = \xk - \etak P \grad{\xk}  
\intertext{Following the same analysis,}
\frac{\min_{j} \langle \xkk, \vx_j \rangle}{\norm{\xkk}} &  \geq \frac{\gamma \sum_{i = 0}^{k} \eta_i \gamma_i -\frac{1}{2} \sum_{i = 0}^{k} \eta^2_i \gamma_i^2 - \log(f(\x_0))}{\norm{\x_0} + \sum_{i = 0}^{t} \eta_i \gamma_i} \\
& \geq \frac{\gamma \sum_{i = 0}^{k} \eta_i \gamma_i -\frac{\eta_{\max}}{2} \sum_{i = 0}^{k} \eta_i \gamma_i^2 - \log(f(\x_0))}{\norm{\x_0} + \sum_{i = 0}^{t} \eta_i \gamma_i}
\intertext{Using the line-search condition,}
f(\xkk) & \leq f(\xk) - c \etak \gamma_k^2 \implies \sum_{i= 0}^{k} \eta_i \gamma_i^2 \leq  \frac{f(\x_0)}{c}
\intertext{Using this bound in the above inequality,}
&  \geq \frac{\gamma \sum_{i = 0}^{k} \eta_i \gamma_i -\frac{\eta_{\max} f(\x_0)}{2c} - \log(f(\x_0))}{\norm{\x_0} + \sum_{i = 0}^{t} \eta_i \gamma_i}
\intertext{Rearranging,}
& \geq \gamma - \left[\frac{\gamma \norm{\x_0} + \frac{\eta_{\max} f(\x_0)}{2c} + \log(f(\x_0))}{\norm{\x_0} + \sum_{i = 0}^{t} \eta_i \gamma_i} \right] \geq \gamma - \left[\frac{\gamma \norm{\x_0} + \frac{\eta_{\max} f(\x_0)}{2c} + \log(f(\x_0))}{\norm{\x_0} + \eta_{\min} \sum_{i = 0}^{t} \gamma_i} \right]
\intertext{The line-search implies}
f(\xkk) & \leq f(\xk) - c \etak \gamma_k^2 \leq f(\xk) - c \etak \gamma f(\xk) \leq f(\xk) - c \eta_{\min} \gamma f(\xk)
\intertext{Using Claim 1 in Theorem 7 of~\cite{soudry2018implicit},}
c \eta_{\min} \sum_{i = 0}^{k} \gamma_i & \geq \log \left(c \eta_{\min} \, \gamma^2 (k+1) \right)
\intertext{Using this bound with the above inequality,}
\frac{\min_{j} \langle \xkk, \vx_j \rangle}{\norm{\xkk}} & \geq \gamma - \left[\frac{\gamma \norm{\x_0} + \frac{\eta_{\max} f(\x_0)}{2c} + \log(f(\x_0))}{\norm{\x_0} + \frac{\log \left(c \eta_{\min} \, \gamma^2 (k+1) \right)}{c}} \right]
\intertext{For line-search, $\eta_{\min} = \frac{2 (1-c)}{L}$,}
& \geq \gamma - \left[\frac{\gamma \norm{\x_0} + \frac{\eta_{\max} f(\x_0)}{2c} + \log(f(\x_0))}{\norm{\x_0} + \frac{\log \left(2 c (1-c) \, \gamma^2 (k+1) / L\right)}{c}} \right] \\
\intertext{If $\x_0 = 0$, $f(\x_0) = 1$,}
\frac{\min_{j} \langle \xkk, \vx_j \rangle}{\indnorm{\xkk}{P^{-1}}} & \geq \gamma - \left[\frac{\eta_{\max}}{2 \log \left(2 c (1-c) \, \gamma^2 (k+1) / L \right)} \right]
\intertext{And as $k \rightarrow \infty$,}
\frac{\min_{j} \langle \xkk, \vx_j \rangle}{\indnorm{\xkk}{P^{-1}}} \rightarrow \gamma
\end{align*}
\end{proof}

\subsection{Proof of Lemma~\ref{lemma:exp-equivalent-precond}}
\begin{proof}
Given $\x_{opt}$, the iterate obtained after running an arbitrary optimizer for $T$ iterations, we want to find matrix $\mm$ such that
\begin{align*}
\frac{\x_{opt}}{\norm{\x_{opt}}} & = \argmax_{\indnorm{\vz}{\mm} \leq 1} \min_{i} \langle \vz, \vx_i \rangle
\intertext{We constrain $\mm$ such that $\indnorm{\x_{opt}}{\mm} = 1$. If $\mm$ satisfies this constraint, then}
\x_{opt} & = \argmax_{\indnorm{\vz}{\mm} \leq 1} \min_{i} \langle \vz, \vx_i \rangle
\intertext{For the given $\x_{opt}$, there exist a unique set of support vectors $S$ s.t for $\vs \in S$, $\min_i \langle \x_{opt}, \vs \rangle = \min_i \langle \x_{opt}, \vx_i \rangle$. Simplifying, matrix $\mm$ should satisfy the following equality for $\vs \in S$,}
\x_{opt} & = \argmax_{\indnorm{\vz}{\mm} \leq 1} \langle \vz, \vs \rangle \implies \x_{opt}  = \argmax_{\indnorm{\vz}{\mm} \leq 1} \sum_{\vs \in S} \langle \vz, \vs \rangle \\
\intertext{The Lagrangian for the RHS can be written as:}
L(\vz, \lambda)  & = \sum_{\vs \in S} \langle \vz, \vs \rangle + \lambda \left(\indnorm{\vz}{\mm}^2 - 1 \right) \\
\frac{\partial L(\vz, \lambda)}{\partial \vz} = 0 \implies \mm \x_{opt} & = - \frac{\sum_{\vs \in S} \vs}{\lambda} \intertext{Implying that vector $\mm \x_{opt}$ lies in the span of the support vectors. Let $\mv \in \mathbb{R}^{\vert S \vert \times d}$ be the matrix of support vectors. And let $\alpha$ be an $\vert S \vert$-dimensional vector of coefficients.}
\implies \mm \x_{opt} & = \mv\transpose \alpha 
\end{align*}
We now use the norm constraint $\indnorm{\x_{opt}}{\mm} = 1$ to constrain the $\alpha$. Specifically, we want, $\indnormsq{\x_{opt}}{\mm} = \x_{opt}\transpose V\transpose \alpha = 1$. This implies that $\alpha$ should satisfy the following equality that ensures $\indnorm{\x_{opt}}{\mm} = 1$,
$$
\langle \mv \x_{opt}, \alpha \rangle = 1
$$
Since $\mm$ is the inverse of a preconditioner matrix, it needs to be positive definite. Using the same construction as in Lemma~\ref{lemma:opt-P-reduction} with the additional constraint, 
\begin{align}
\mm = \normsq{\x_{opt}} \mi_d - \x_{opt} \x_{opt}\transpose + \frac{(\mv\transpose \alpha) (\mv\transpose \alpha)\transpose}{\langle \x_{opt}, (\mv\transpose \alpha) \rangle} \\
\text{where} \quad \mv\transpose \alpha > 0 \quad \text{;} \quad \langle \mv \x_{opt}, \alpha \rangle  = 1.   
\end{align}
Similar to Lemma~\ref{lemma:opt-P-reduction}, we can verify that $\mm$ is positive definite if $\mv\transpose \alpha > 0$ and satisfies the equality $\mm \x_{opt} = \mv\transpose \alpha$. 
\end{proof}

\subsection{Proof of Lemma~\ref{lemma:RC_Precond}}
\label{sec:RC_Precond}
\begin{proof}
Based on the definition of Rademacher complexity we have 
\begin{align}
    \hat{\mathcal R}(\mathcal F)& = \mathbb E_\sigma \Big[ \sup_{f\in \mathcal F} |\frac{2}{n} \sum_{i=1}^n \sigma_i y_i\vw\transpose \vx_i|\Big] = \mathbb E_\sigma \Big[ \sup_{\vw:1/2\vw\transpose \inv{\mxp} \vw \leq E } |\frac{2}{n} \sum_{i=1}^n \sigma_i y_i\vw\transpose \vx_i|\Big]\\
    & \leq \frac{2\sqrt{2 E}}{n} \mathbb E_\sigma \Big[\big(\sum_{i=1}^n \sigma_i y_i \vx_i\transpose \mxp  \sum_{j=1}^n \sigma_j y_j \vx_j\big)^{1/2}  \Big]\\
    &\leq \frac{2\sqrt{2 E}}{n}  \Big[\big(\mathbb E_\sigma \sum_{j,i=1}^n \sigma_i  \sigma_j y_i   y_j\vx_i\transpose \mxp  \vx_j\big)^{1/2}  \Big] = \frac{2\sqrt{2 E}}{n} \sqrt{tr(\mxp \mk)}
\end{align}
To find the optimal $\mxp$, we add the constrain $ln(\det(\mxp)) > 0 $ which guarantees that $\mxp$ doesn't have any zero eigenvalue. So the objective function is 
\begin{equation}
    \argmin_{\mxp} tr(\mxp \mk ) - ln (\det (\mxp))
\end{equation}
Taking derivative w.r.t. $\mxp$ and set it to zero we have
\begin{equation}
    \mxp^* = \inv{\mk}.
\end{equation}
We can assume that $\mk$ is symmetric positive definite. Furthermore, if we assume $\mk_+$ and $\mk_-$ are scaled covariance matrix for data belonging to $+1$ and $-1$ classes, we get the following upper bound for the Rademacher complexity
\begin{equation}
    \hat{\mathcal R}(\mathcal F) \leq \frac{2\sqrt{2 E}}{n} \sqrt{tr(\mxp (\mk_+ + \mk_- ))}. 
\end{equation}
Using the above argument, then the optimal precondition is 
\begin{equation}
    \mxp^* = \inv{(\mk_+ + \mk_-) }.
\end{equation}
Similarly we can find an upper bound for the VC-dimension of max-margin which has the same upper bound as we get for the Rademacher complexity. For this part we assume that there exist a data set $\mathcal D_v$ such that max-margin model can shatter it  and besides for any positive definite $\mxp$ we have $tr(\mxp \mk_v) \leq tr(\mxp \mk)$ where $\mk_v$ is $\mx_v\transpose \mx_v$. To be more precise, let assume $v$ is the VC-dimension of max-margin problem, therefore there exist an a set of size $v$ i.e. $\mathcal D_v =\{(\vx_j,y_j)\}_{j=1:v}$ such that for all $j$ $y_j \vw\transpose \vx_j \geq 1$. If we sum up the both side of this inequality for all $j$ data point we have: 
\begin{align*}
        v &\leq \sum_{j=1}^v y_j \vw\transpose \vx_j \\
        & \leq \|\vw\|_{\inv{\mxp}} \Big[\big(\sum_{j=1}^v y_j \vx_i\transpose \mxp  \sum_{k=1}^v y_k \vx_k\big)^{1/2}  \Big]\\
        & \leq \sqrt{2E}\Big[\big(\sum_{j=1}^v y_j \vx_i\transpose \mxp  \sum_{k=1}^v y_k \vx_k\big)^{1/2}  \Big]
\end{align*}
where the second inequality is due to Cauchy-Schwarz.  Since max-margin can shatter this set, we can assume that $y_j's$ are independent random variables, and take expectation from both sides of the above bound and then apply the Jensen inequality to get the following 
\begin{align}
    v &\leq \sqrt{2E}\Big[\mathbb E \big(\sum_{j=1}^v y_j \vx_i\transpose \mxp  \sum_{k=1}^v y_k \vx_k\big) \Big]^{1/2}\\
   & \leq \sqrt{2E} \Big[ \sum_{j=1}^v \vx_i\transpose \mxp \vx_i \Big]^{1/2} \\
    & \leq \sqrt{2E} \sqrt{tr(\mxp \mk_v )} \leq \sqrt{2E} \sqrt{tr(\mxp \mk )}
\end{align}
    where the second inequality due to the fact that $\mathbb E(y_j) = 0$ and $y_j$ and $y_k$ are independent. The last inequality is due to our assumption.
    Here we show that when $\mathcal D_v \subset \mathcal D$ which means there is a subset of training data with size $v$ such that max-margin can shatter it, then the assumption $tr(\mxp \mk_v) \leq tr(\mxp \mk)$. Let $\ms \in \mathbb R^{n\times n}$ be a diagonal matrix with $\ms_{i,i} =1$ if $\vx_i \in \mathcal D_v$ and $0$ everywhere else. Therefore $\ms \mx$ represents data points in $\mathcal D_v$ and we have $\mk_v = \mx\transpose \ms \mx$. Let $\mm = \mx \mxp \mx\transpose$ and we have $tr (\mm) = tr(\mxp \mk)$. We observe that $tr(\mxp \mk_v)= tr(\mm \ms)$. Since $\mxp$ is positive definite, all the diagonal elements of $\mm$ are positive. Therefore multiplying $\mm$ by $\ms$ sets some of the diagonal elements to $0$, therefore we have $tr(\mxp \mk_v) =tr(\mm \ms) \leq tr(\mm) = tr(\mxp \mk)$. 
\end{proof}

\subsection{Proof of Lemma~\ref{lemma:over-lin-class-equivalence}}
\label{app:over-lin-class-equivalence}
\begin{proof}
Let $\mx \prp \in \mathbb{R}^{(d-n) \times (d)}$ represents the components orthogonal to the data span, then any preconditioner can be decomposed as: $\mxp = \mx \transpose \mxp_1 \mx + \mx\transpose \mxp_2 \mx \prp + \mx \transpose\prp \mxp_2\transpose \mx + \mx \transpose\prp \mxp_3 \mx \prp$. The first component lies in the data span, whereas the last component lies outside the data span. The component $\mx\transpose \mxp_2 \mx \prp$ is an operator that takes a component perpendicular to the span and brings it into the data span, whereas the component $\mx \transpose\prp \mxp_2\transpose \mx$ does the reverse. 
Since the gradient lies in the span, for any vector $\vw$, we have 
\[\mxp \grad{\vw} = (\mx \transpose \mxp_1 \mx + \mx \transpose\prp \mxp_2\transpose \mx) \grad{\vw}\]
 since $\mx \prp$ is orthogonal to the data span. Suppose we run PGD from $\vw_0 = 0$, implying that $\vw_1 = - \eta \mxp \grad{\vw_0} = -\eta (\mx \transpose \mxp_1 \mx + \mx \transpose\prp \mxp_2\transpose \mx) \grad{\vw_0} = (\vw_1)\prp + (\vw_1)\parll$, where $(\vw_1)\prp = -\eta (\mx \transpose\prp \mxp_2\transpose \mx) \grad{\vw_0}$ and $(\vw_1)\parll = -\eta (\mx \transpose \mxp_1 \mx) \grad{\vw_0}$. By projecting after every gradient step, we compute $\bar{\vw}_1 = \pi [\vw_1] = -\eta (\mx \transpose \mxp_1 \mx) \grad{\vw_0}$. It is easy to see that one iteration of PGD with a preconditioner $\bar{\mxp} = \mx \transpose \mxp_1 \mx$ and $\vw_0 = 0$ would result in the iterate $\bar{\vw}_1$ where $\bar{\vw}_1$ lies in the data span and $\mxp_1 = (\mx \mx \transpose)^{-1} \mx \mxp \mx \transpose (\mx \mx \transpose)^{-1}$. The same reasoning can be used recursively, implying that projected PGD with a preconditioner $\mxp$ in the over-parameterized setting is equivalent to PGD with a preconditioner $\mx \transpose \mxp_1 \mx$ in the under-parameterized. We can prove it by induction. Since $\vw_0= 0$, we have $\pi[\vw_0]  = \bar{\vw}_0 =0 $. Now assume for iteration $t$ we have $\pi(\vw_t)  = \bar{\vw}_t$. Now based on the iteration of projected PGD we have 
\[
 \vw_{t+1} = \pi[\vw_t] - \eta_t \mxp \nabla f(\pi[\vw_t]) = \bar{\vw}_t - \eta_t \mxp \nabla f(\bar{\vw}_t).  
 \]
 Also for $\bar{\vw}_{t+1}$ which is the next iterate of PGD with $\bar{\mxp}$ as preconditioner we have 
\[
 \bar{\vw}_{t+1} = \bar{\vw}_t - \eta_t \bar{\mxp} \nabla f(\bar{\vw}_t).  
 \] 
 Now if we project $\vw_{t+1}$ into the data span we have 
 \[
	\pi [\vw_{t+1}] = \pi [\bar{\vw}_t - \eta_t \mxp \nabla f(\bar{\vw}_t)] = \bar{\vw}_t -\eta_t  \pi [\mxp \nabla f(\bar{\vw}_t)] = \bar{\vw}_t -\eta_t   \bar{\mxp} \nabla f(\bar{\vw}_t)=  \bar{\vw}_{t+1} 
 \]
 which complete the proof. 
\end{proof}

\subsection{Proof of Proposition~\ref{thm:sqhinge-projgd-rate}}
\label{app:}
\begin{proof}
Since, the function is smooth and convex, projected GD can obtain the following rate using gradient descent:
\begin{align*}
f(\x_T) - f^* \leq \frac{L \normsq{\x_0 - \wstar}}{T}
\end{align*}
where $f(\x) = \frac{1}{n}\sum_{i = 1}^{n} \left(\max\{1 - y_i \langle \x, \vx_i \rangle, 0\} \right)^{2}$. W.l.o.g. assume that $\gamma =1 $. Since the data is linearly separable by a margin $1$, and we project on the $\ell_2$ ball with radius $1$, $f^* = 0$ since $y_i \langle \xopt, \vx_i \rangle \geq 1$. The projection also ensures that $\norm{\x_t} = 1$ for all iterates $\x_t$.

Denoting $L \normsq{\x_0 - x^*}$ as $c$, we obtain the following bound,
\begin{align*}
\frac{1}{n}\sum_{i = 1}^{n} \left(\max\{1 - y_i \langle \x_T, \vx_i \rangle, 0\} \right)^{2} & \leq \frac{c}{T} \\
\intertext{Taking square-root on both sides and using Jensen's inequality,}
\frac{1}{n}\sum_{i = 1}^{n} \left(\max\{1 - y_i \langle \x_T, \vx_i \rangle, 0\} \right) & \leq \sqrt{\frac{c}{T}} \\
\implies \sum_{i = 1}^{n} \left(\max\{1 - y_i \langle \x_T, \vx_i \rangle, 0\} \right) & \leq n\, \sqrt{\frac{c}{T}} \\
\intertext{Let us lower bound the LHS,}
\sum_{i = 1}^{n} \left(\max\{1 - y_i \langle \x_T, \vx_i \rangle, 0\} \right) \geq \max_{i \in [n]} \{ \left(\max\{1 - y_i \langle \x_T, \vx_i \rangle, 0\} \right) \}
\intertext{Let $S = \{j \vert y_i \langle \x_T, \vx_i \rangle \geq 1\}$ be the set of points classified with margin $1$. Denoting $\bar{S}$ as the complement of this set.}
\max_{i \in [n]} \{ \left(\max\{1 - y_i \langle \x_T, \vx_i \rangle, 0\} \right) \} & \geq \max_{i \in \bar{S}} \{ \left(1 - y_i \langle \x_T, \vx_i \rangle \right) \} = 1 - \min_{i \in \bar{S}} y_i \langle \x_T, \vx_i \rangle \\
\implies 1 - \min_{i \in \bar{S}} y_i \langle \x_T, \vx_i \rangle & \leq n \sqrt{\frac{c}{T}} \\
\implies \min_{i \in \bar{S}} y_i \langle \x_T, \vx_i \rangle \geq 1 - n \sqrt{\frac{c}{T}} 
\intertext{Since $\min_{i \in \bar{S}} y_i \langle \x_T, \vx_i \rangle \geq \min_{i \in S} y_i \langle \x_T, \vx_i \rangle$,}
\min_{i \in [n]} y_i \langle \x_T, \vx_i \rangle \geq 1 - n \sqrt{\frac{c}{T}} \\
\implies \frac{\min_{i \in [n]} y_i \langle \x_T, \vx_i \rangle}{\norm{\x_T}} \geq \gamma - \gamma n \sqrt{\frac{c}{T}} &\tag{Since $\norm{\x_T} = 1$ and the true margin $\gamma = 1$,} \\
\hat{\gamma} \geq \gamma - \gamma n \sqrt{\frac{c}{T}}
\end{align*}
This rate can be improved by using Nesterov acceleration. Following the same proof, we obtain the following bound:
\begin{align*}
\hat{\gamma} & \geq \gamma - \gamma \frac{n \sqrt{c}}{T} \end{align*}
\end{proof}

\section{Additional proofs for linear regression}
\label{app:proofs-lin-reg}

\subsection{Convergence of preconditioned gradient descent}
\label{app:pgd}
\begin{lemma}
When optimizing the squared loss, the iterates of PGD with preconditioner $\mxp$ and constant step-size $\eta \leq \frac{1}{\lambda_{max} (\mx \mxp \mx\transpose)}$ evolve as:
\begin{align*}
\vw_{PGD} & = \lim_{k \to \infty} \vw_k = \vw_0 + \mxp \mx\transpose (\mx \, \mxp \, \mx\transpose)^{-1} [\vy - \mx \vw_0] 
\intertext{Furthermore, $w_{PGD}$ is the solution to a constrained minimization problem,}
\vw_{PGD} & = \argmin \frac{1}{2} \|\vw-\vw_0\|_{\pco}^2 \quad \text{s.t.} \quad \ \mx \vw = \vy
\end{align*}
\label{lemma:pgd-iterates}
\end{lemma}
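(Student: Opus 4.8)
The plan is to track the residual $r_k \coloneqq \mx\vw_k - \vy$ and show it contracts geometrically, then unroll the iteration for $\vw_k$ and pass to the limit. Writing the gradient of the squared loss as $\mx\transpose(\mx\vw-\vy)$ (the scalar factor is absorbed into $\eta$), the PGD update reads $\vw_{k+1} = \vw_k - \eta\,\mxp\mx\transpose r_k$; left-multiplying by $\mx$ and subtracting $\vy$ gives $r_{k+1} = (\mi - \eta\,\mx\mxp\mx\transpose)\,r_k$, hence $r_k = (\mi - \eta\,\mx\mxp\mx\transpose)^k r_0$. Because $\mxp \succ 0$ and $\mx\mx\transpose$ is invertible (so $\mx$ has full row rank), $\mx\mxp\mx\transpose \succ 0$; together with $\eta \le 1/\lambda_{\max}(\mx\mxp\mx\transpose)$ this places every eigenvalue of $\mi - \eta\,\mx\mxp\mx\transpose$ in $[0,1)$, so its spectral radius is strictly below one and $r_k \to 0$. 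In particular the limit iterate interpolates, $\mx\winf = \vy$.

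Next I would sum the increments: since $\vw_{k+1}-\vw_k = -\eta\,\mxp\mx\transpose r_k$, we get $\vw_k = \vw_0 - \eta\,\mxp\mx\transpose\sum_{j=0}^{k-1}(\mi - \eta\,\mx\mxp\mx\transpose)^j r_0$. Applying the finite geometric-series identity $\sum_{j=0}^{k-1}\mathbf{M}^j = (\mi - \mathbf{M})^{-1}(\mi - \mathbf{M}^k)$ with $\mathbf{M} = \mi - \eta\,\mx\mxp\mx\transpose$ (invertible, as just noted), the $\eta$'s cancel and
\[
\vw_k = \vw_0 - \mxp\mx\transpose(\mx\mxp\mx\transpose)^{-1}\bigl(\mi - (\mi - \eta\,\mx\mxp\mx\transpose)^k\bigr)(\mx\vw_0 - \vy).
\]
Letting $k\to\infty$ and using $(\mi - \eta\,\mx\mxp\mx\transpose)^k \to 0$ from the previous step yields $\winf = \vw_0 + \mxp\mx\transpose(\mx\mxp\mx\transpose)^{-1}[\vy - \mx\vw_0]$, the claimed closed form.

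For the variational characterization, note that $\min \tfrac12\|\vw-\vw_0\|_{\pco}^2$ subject to $\mx\vw=\vy$ is a strictly convex quadratic program with affine constraints, so its unique solution is the one satisfying the KKT system $\pco(\vw-\vw_0) + \mx\transpose\mu = 0$ and $\mx\vw = \vy$ for some $\mu\in\mathbb{R}^n$. The stationarity equation gives $\vw - \vw_0 = -\mxp\mx\transpose\mu$; substituting into feasibility yields $\mu = (\mx\mxp\mx\transpose)^{-1}(\mx\vw_0-\vy)$, whence $\vw = \vw_0 + \mxp\mx\transpose(\mx\mxp\mx\transpose)^{-1}[\vy-\mx\vw_0] = \vw_{PGD}$. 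Alternatively, one observes that $\vw_{PGD}-\vw_0 \in \mathrm{span}(\mxp\mx\transpose)$ is $\pco$-orthogonal to $\ker\mx$, the direction space of the feasible affine set, so by the Pythagorean theorem in the $\pco$-inner product $\vw_{PGD}$ is the $\pco$-projection of $\vw_0$ onto $\{\vw:\mx\vw=\vy\}$.

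The computations are routine; the only steps needing care are (i) arguing strict contraction, i.e.\ that $\mx\mxp\mx\transpose$ is positive \emph{definite} rather than merely semidefinite --- this is exactly where the standing assumption that $\mx\mx\transpose$ is invertible enters --- and (ii) matching the scalar convention in $\nabla f$ against the stated step-size bound. Neither presents a real obstacle.
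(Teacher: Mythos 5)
Your proof is correct, and for the closed-form limit it takes a cleaner route than the paper's. The paper unrolls the iterates term by term, computes $\vw_1,\vw_2,\vw_3$ explicitly, pattern-matches a formula $\vw_k = \vw_0 - \mxp\mx\transpose\bigl[\sum_{i=1}^{k}(-1)^{i-1}\binom{k}{i}\eta^{i}\mk^{i-1}\bigr](\vy_0-\vy)$ with $\mk = \mx\mxp\mx\transpose$, and then collapses the binomial sum via the identity $\sum_{i=1}^{k}(-1)^{i-1}\binom{k}{i}\eta^{i}\mk^{i-1} = -\mk^{-1}\bigl[(\mi_n-\eta\mk)^k - \mi_n\bigr]$ before passing to the limit. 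Your residual recursion $r_{k+1} = (\mi - \eta\mx\mxp\mx\transpose)r_k$ followed by the finite geometric series is the same algebra reorganized: it arrives at the identical expression $\vw_k = \vw_0 - \mxp\mx\transpose\mk^{-1}(\mi - (\mi-\eta\mk)^k)(\mx\vw_0-\vy)$ without the pattern-guessing step, and it makes the role of the step-size bound transparent (spectral radius of $\mi-\eta\mk$ strictly below one, which also handles the boundary case $\eta = 1/\lambda_{\max}$ that the paper's proof quietly tightens to a strict inequality). For the variational characterization, the paper proves this as a separate lemma in the appendix via exactly the Lagrangian/KKT computation you give --- stationarity forces $\vw-\vw_0 \in \mathrm{span}(\mxp\mx\transpose)$, feasibility pins down the multiplier --- so that part of your argument matches the paper's; your remark about $\pco$-orthogonality to $\ker\mx$ is a nice additional sanity check but not needed. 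Your two flagged points of care (positive definiteness of $\mx\mxp\mx\transpose$ from full row rank of $\mx$, and absorbing the factor of $2$ from the gradient into $\eta$) are exactly the right ones, and both are handled the same way the paper implicitly handles them.
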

\begin{proof}
The PGD update for linear-regression can be written as:
\begin{align*}
\xkk & = \xk - \eta \mxp \grad{\xk} = \xk - \eta \mxp \mx\transpose (\mx \xk - \vy) 
\intertext{Starting at $\x_0$ and defining $\vy_0 = \mx \vw_0$.}
\x_1 & = \x_0 - \eta \mxp \mx\transpose (\mx \x_0 - \vy) = \x_0 - \eta \mxp\mx\transpose [\vy_0 - \vy]
\intertext{Further unfolding the iterates,}
\x_2 & = \x_1 - \eta \mxp\mx\transpose (\mx \x_1 - \vy) 
= \x_0 - \eta \mxp \mx\transpose [\vy_0 - \vy] + \eta \mxp \mx\transpose \vy - \eta \mxp \mx\transpose (\mx \x_1) \\
\intertext{Adding and subtracting $\eta \mxp \mx\transpose [\vy_0 - \vy]$} 
& = \x_0 - 2 \eta \mxp \mx\transpose [\vy_0 - \vy] + \eta \mxp \mx\transpose [\vy_0 - \vy] + \eta \mxp \mx\transpose \vy - \eta \mxp \mx\transpose (\mx \x_1) \\
& = \x_0 - 2 \eta \mxp \mx\transpose [\vy_0 - \vy] + \eta \mxp \mx\transpose \vy_0 - \eta \mxp \mx\transpose \left(\mx ( \x_0 - \eta \mxp \mx\transpose [\vy_0 - \vy])\right) \\
& = \x_0 - 2 \eta \mxp \mx\transpose [\vy_0 - \vy] + \eta \mxp \mx\transpose \vy_0 - \eta \mxp \mx\transpose \vy_0 + \eta^2 \mxp \mx\transpose \mx \mxp \mx\transpose [\vy_0 - \vy] \\
& = \x_0 - 2 \eta \mxp \mx\transpose [\vy_0 - \vy] +  \eta^2\mxp \mx\transpose \mx \mxp \mx\transpose [\vy_0 - \vy] \\
& = \x_0 - \mxp \mx\transpose \left[2 \eta - \eta^2 (\mx \mxp \mx\transpose) \right] \, [\vy_0 - \vy] \\
\intertext{Defining $\mk = \mx \mxp \mx\transpose$}
\x_2 & = \x_0 - \mxp \mx\transpose \left[2 \eta - \eta^2 \mk \right] \, [\vy_0 - \vy] \\
\intertext{Similarly writing down $\vw_3$,}
\x_3 & = \x_2 - \eta \mxp \mx\transpose (\mx \x_2 - \vy) = \x_2 + \eta \mxp \mx\transpose y -  \eta \mxp \mx\transpose \mx \x_2 \\
& = \x_0 - \mxp \mx\transpose \left[2 \eta - \eta^2  \mk \right] \, [\vy_0 - \vy] + \eta \mxp \mx\transpose \vy -  \eta \mxp \mx\transpose \mx \x_2 \\
\intertext{By adding, subtracting $\eta \mxp \mx\transpose \vy_0$,}
& = \x_0 - \mxp \mx\transpose \left[3 \eta - \eta^2  \mk \right] \, [\vy_0 - \vy] - \eta \mxp \mx\transpose \vy_0 -  \eta \mxp \mx\transpose \mx \x_2 \\
& = \x_0 - \mxp \mx\transpose \left[3 \eta - \eta^2  \mk \right] \, [\vy_0 - \vy] + \eta \mxp \mx\transpose \mx \mxp \mx\transpose \left[2 \eta - \eta^2  \mk \right] \, [\vy_0 - \vy] \\
& = \x_0 - \mxp \mx\transpose \left[3 \eta - \eta^2  \mk \right] \, [\vy_0 - \vy] + \eta \mxp \mx\transpose  \mk \left[2 \eta - \eta^2  \mk \right] \, [\vy_0 - \vy] \\
& = \x_0 - \mxp \mx\transpose \left[ 
3 \eta - \eta^2  \mk + 2 \eta^2  \mk - \eta^3  \mk^2 
\right] \, [\vy_0 - \vy] \\
\x_3 & = \x_0 - \mxp \mx\transpose \left[3 \eta + \eta^2  \mk - \eta^3  \mk^2 \right] \, [\vy_0 - \vy] \\
\end{align*}
Writing down the general form, 
\begin{align*}
\x_1 & = \x_0 - \mxp \mx\transpose [K^0] \, [\vy_0 - \vy] \\ 
\x_2 & = \x_0 - \mxp \mx\transpose [2 \eta  \mk^0 - \eta^2  \mk^1] \, [\vy_0 - \vy] \\
\x_3 & = \x_0 - \mxp \mx\transpose \left[3 \eta  \mk^0 + \eta^2  \mk^1 - \eta^3  \mk^2 \right] \, [\vy_0 - \vy] \\
\implies \x_k & = \x_0 - \mxp \mx\transpose \left[ 
\sum_{i = 1}^{k} (-1)^{i - 1} \binom{k}{i} \eta^{i}  \mk^{i - 1} \right] \, [\vy_0 - \vy]
\end{align*}
Using the fact that $\left[ 
\sum_{i = 1}^{k} (-1)^{i - 1} \binom{k}{i} \eta^{i}  \mk^{i - 1} \right] = -K^{-1} \left[\left(   \mi_n - \eta  \mk \right)^{k} -   \mi_n \right]$,
\begin{align*}
\x_k & = \x_0 + \mxp \mx\transpose  \mk^{-1} \left[\left(   \mi_n - \eta  \mk \right)^{k} -   \mi_n \right] \, [\vy_0 - \vy] 
\intertext{If $\eta < \frac{1}{\lambda_{max}(K)}$ and as $k \rightarrow \infty$,}
\lim_{k \rightarrow \infty} \x_k & = \x_0 - \mxp \mx\transpose  \mk^{-1} \, [\vy_0 - \vy] \\
\implies \winf & = \x_0 + \mxp \mx\transpose (\mx \mxp \mx\transpose)^{-1}\, [\vy - \mx \x_0]
\end{align*}
\end{proof}

\subsection{Properties of PGD solution}
\label{apdx:prop-of-pgd}
Similar to GD, zero initialized PGD converges to a unique solution lying in the span of the transformed data. Specifically, we obtain the following lemma:
\begin{lemma}
The solution found by PGD initialized the origin is the unique point satisfying the constraints: (i) lies in the span of the feature vectors, $\winf = \argmin_{\vz \in \text{span}({P X\transpose})} \vert \vert \winf - \vz \vert \vert^2_{\pco}$ and (ii) interpolates the data, implying that $\mx \winf = \vy$. 
\label{lemma:unique-span-gen}
\end{lemma}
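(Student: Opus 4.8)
The plan is to build directly on Lemma~\ref{lemma:pgd-iterates}, which already supplies the closed form of the PGD limit when $\vw_0 = 0$, namely $\winf = \mxp \mx\transpose (\mx \mxp \mx\transpose)^{-1} \vy$. First I would record that $\mx \mxp \mx\transpose$ is invertible: writing $\mxp^{1/2}$ for the symmetric positive-definite square root of $\mxp$, we have $\mx \mxp \mx\transpose = (\mx \mxp^{1/2})(\mx \mxp^{1/2})\transpose$, whose rank equals $\operatorname{rank}(\mx \mxp^{1/2}) = \operatorname{rank}(\mx) = n$, the last equality being the standing assumption that $\mx \mx\transpose$ is full rank. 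With invertibility in hand, existence is immediate: $\winf = \mxp \mx\transpose \va$ with $\va = (\mx \mxp \mx\transpose)^{-1}\vy$ manifestly lies in $\text{span}(\mxp \mx\transpose)$, so (i) holds, and $\mx \winf = \mx \mxp \mx\transpose (\mx \mxp \mx\transpose)^{-1}\vy = \vy$, so (ii) holds.

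For uniqueness I would take any $\vz$ satisfying both constraints. By (i), $\vz = \mxp \mx\transpose \mathbf{b}$ for some $\mathbf{b} \in \mathbb{R}^n$; substituting into (ii) gives $\mx \mxp \mx\transpose \mathbf{b} = \vy$, and invertibility of $\mx \mxp \mx\transpose$ forces $\mathbf{b} = (\mx \mxp \mx\transpose)^{-1}\vy$, hence $\vz = \winf$. That is the whole argument; the only input beyond elementary linear algebra is the overparametrization assumption that makes $\mx \mxp \mx\transpose$ invertible, and there is no real obstacle — this is the PGD analogue of the classical fact that zero-initialized GD produces the unique interpolant in $\text{span}(\mx\transpose)$.

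One point I would clarify is the indirect phrasing of (i) as $\winf = \argmin_{\vz \in \text{span}(\mxp \mx\transpose)} \|\winf - \vz\|^2_{\pco}$: since the minimum of $\|\winf - \vz\|_{\pco}$ over a subspace is $0$ exactly when $\winf$ already lies in that subspace, this is merely a restatement of $\winf \in \text{span}(\mxp \mx\transpose)$, and the subspace does not depend on the inner product used. Equivalently, one can phrase (i) through the $\pco$-orthogonal projector $\pi_P = \mxp \mx\transpose (\mx \mxp \mx\transpose)^{-1}\mx$ onto $\text{span}(\mxp \mx\transpose)$, noting $\pi_P \winf = \winf$; I would add the one-line check that $\pi_P$ is indeed the $\pco$-orthogonal projector, since for any $\vw$ and any $\va$, $(\vw - \pi_P\vw)\transpose \pco \mxp \mx\transpose \va = \va\transpose(\mx\vw - \mx\pi_P\vw) = 0$ because $\mx \pi_P \vw = \mx \vw$. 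This also makes transparent the connection to Proposition~\ref{prop:proj-span}.
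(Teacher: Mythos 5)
Your proposal is correct and follows essentially the same route as the paper: both parametrize a point satisfying (i) as $\mxp \mx\transpose \mathbf{b}$ and use the interpolation constraint together with invertibility of $\mx \mxp \mx\transpose$ to force $\mathbf{b} = (\mx \mxp \mx\transpose)^{-1}\vy$, recovering the closed form from Lemma~\ref{lemma:pgd-iterates}. Your version is slightly more careful than the paper's (you justify invertibility of $\mx\mxp\mx\transpose$ and check existence explicitly, whereas the paper's first-order-condition step is written somewhat loosely), but the underlying argument is the same.
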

\begin{proof}
\begin{align*}
\winf & = \argmin_{\vz = \mxp \mx\transpose \alpha} \vert \vert \winf - \vz \vert \vert^2_{\pco} \\
\intertext{Let $\winf = \mxp \mx\transpose \alpha_{\infty}$.}
\implies \alpha_{\infty} & = \argmin_{\alpha} \vert \vert \winf - \mxp \mx\transpose \alpha \vert \vert^2_{\pco} = \argmin_{\alpha} \left[
\winf\transpose \pco \winf -  2 \winf\transpose \mx\transpose \alpha + \alpha\transpose \mx \mxp\transpose \mx\transpose \alpha \right] \\
\implies \mx \winf & =  \alpha_{\infty}
\intertext{Since $\vw\inf$ interpolates the data, $\mx \winf = \vy$,}
\implies \alpha_{\infty} & = (\mx \mxp \mx\transpose)^{-1} \vy 
\intertext{Since, $\winf = \mxp \mx\transpose \alpha_{\infty},$}
\implies \winf & = \mxp \mx\transpose (\mx \mxp \mx\transpose)^{-1} \vy
\end{align*}
\end{proof}
The above lemma implies that the PGD solution interpolates the data and is a projection onto $\mxp \mx\transpose$ but with the distance measured in the $\pco$ norm. When $\mxp = \mi_d$, the solution is a projection onto the span of the data points measured in the $l_2$ norm, recovering the known result for GD. The above lemma show that the solutions of both GD and PGD are unique interpolating solutions in their respective subspaces. 

\begin{lemma}
The solution to the min $\mxp^{-1}$ norm least square i.e. $w_{opt} = \argmin_{w} \|w-w_0\|_{\mxp^{-1}}$ is equal to the solution of PGD when the model interpolate the data i.e. $\mx\vw = \vy$. 
\end{lemma}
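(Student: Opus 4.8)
The plan is to solve the constrained quadratic program directly via Lagrange multipliers and observe that its minimizer coincides with the closed form of $\vw_{PGD}$ already derived in Lemma~\ref{lemma:pgd-iterates}. First I would note that minimizing $\indnorm{\vw - \vw_0}{\pco}$ over the affine set $\{\vw : \mx \vw = \vy\}$ is equivalent to minimizing the strictly convex objective $\tfrac12 \indnormsq{\vw-\vw_0}{\pco} = \tfrac12 (\vw - \vw_0)\transpose \inv{\mxp} (\vw - \vw_0)$ over the same set, since $\inv{\mxp}$ is positive definite. Strict convexity plus a nonempty affine feasible set (guaranteed in the over-parameterized interpolation regime) ensures a unique minimizer characterized exactly by the KKT/stationarity conditions.

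Next I would form the Lagrangian $L(\vw, \lambda) = \tfrac12 (\vw - \vw_0)\transpose \inv{\mxp} (\vw - \vw_0) + \lambda\transpose(\mx \vw - \vy)$ with $\lambda \in \mathbb{R}^n$, and set $\partial L / \partial \vw = \inv{\mxp}(\vw - \vw_0) + \mx\transpose \lambda = 0$, which gives $\vw = \vw_0 - \mxp \mx\transpose \lambda$. Substituting into the interpolation constraint $\mx \vw = \vy$ yields $\mx \vw_0 - \mx \mxp \mx\transpose \lambda = \vy$, hence $\lambda = -(\mx \mxp \mx\transpose)^{-1}(\vy - \mx \vw_0)$, where I would remark that $\mx \mxp \mx\transpose$ is invertible because $\mx$ has full row rank (the over-parameterized assumption that $\mx\mx\transpose$ is invertible) and $\mxp \succ 0$. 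Plugging $\lambda$ back in gives
\[
\vw_{opt} = \vw_0 + \mxp \mx\transpose (\mx \mxp \mx\transpose)^{-1}[\vy - \mx \vw_0].
\]

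Finally I would compare this expression with the limit of the PGD iterates established in Lemma~\ref{lemma:pgd-iterates}, namely $\vw_{PGD} = \vw_0 + \mxp \mx\transpose (\mx \mxp \mx\transpose)^{-1}[\vy - \mx \vw_0]$, and conclude $\vw_{opt} = \vw_{PGD}$; alternatively, since Lemma~\ref{lemma:pgd-iterates} already states that $\vw_{PGD}$ solves precisely this constrained minimization, the result is immediate by uniqueness of the minimizer. There is no real obstacle here beyond bookkeeping; the only point requiring a word of care is the invertibility of $\mx \mxp \mx\transpose$ and the fact that $\|\cdot\|_{\pco}$ and its square have the same minimizer, so I would make those two observations explicit rather than leaving them implicit.
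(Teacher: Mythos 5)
Your proposal is correct and follows essentially the same route as the paper: both form the Lagrangian of the constrained quadratic program $\min \tfrac12 \indnormsq{\vw-\vw_0}{\pco}$ subject to $\mx\vw=\vy$, use stationarity and the interpolation constraint to eliminate the multiplier, and arrive at $\vw_0 + \mxp\mx\transpose(\mx\mxp\mx\transpose)^{-1}[\vy-\mx\vw_0]$, matching the PGD limit from Lemma~\ref{lemma:pgd-iterates}. Your ordering (solve for $\vw$ from stationarity, then pin down $\lambda$ via the constraint) is a slightly cleaner bookkeeping of the same argument, and your explicit remarks on the invertibility of $\mx\mxp\mx\transpose$ and uniqueness via strict convexity are welcome additions the paper leaves implicit.
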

\begin{proof}
To solve this we reformulate our objective in the Lagrangian form where $\lambda \in R^{n \times 1}$:
\begin{align*}
    & \mathcal L = 1/2 \|\vw-\vw_0\|_{\pco}^2 - \mathbf{\lambda}\transpose(\vy - \mx \vw)\\
    & \frac{\partial \mathcal L }{\partial \vw} = 0 \implies \pco(\vw-\vw_0)+\mx\transpose\mathbf{\lambda} = 0 \implies \mathbf{\lambda}\transpose = -(\vw-\vw_0)\transpose \mx\transpose(\mx \mxp \mx\transpose)^{-1}\\
    &\text{Let } \ma = \mx\transpose(\mx \mxp \mx\transpose)^{-1}\\
    & \implies \mathcal L = 1/2 \|\vw-\vw_0\|_{\pco}^2 + (\vw-\vw_0)\transpose \ma(\vy-\mx \vw) \\
    & \frac{\partial \mathcal L }{\partial \vw} = 0 \implies \pco(\vw-\vw_0) + \ma \vy - (2\ma \mx)\vw + \mx\transpose \ma\transpose \vw_0 =0 \\
    \intertext{Let $\winf$ be the solution of the above inequality.}
    &\implies (\winf-\vw_0) + \mxp \ma \vy - (2\mxp \ma \mx )\winf+ \mxp \ma \mx \vw_0 =0\\
    & \text{$\mx \winf=\vy$ because of the constraint. Let $\vy_0= \mx \vw_0$} \implies \winf-\vw_0 + \mxp \ma\vy -2\mxp \ma\vy + \mxp \ma\vy_0 = 0\\ 
    & \implies \winf = \vw_0 + \mxp \ma(\vy-\vy_0) = \vw_0 + \mxp \mx\transpose(\mx \mxp \mx\transpose)^{-1} [\vy - \mx \vw_0] \\
    & \implies \winf = [\mi_d - \mxp \mx\transpose(\mx \mxp \mx\transpose)^{-1} \mx] \vw_0 + \mxp \mx\transpose(\mx \mxp \mx\transpose)^{-1} \vy
\end{align*}
\end{proof}

\subsection{Convergence of Newton method}
\label{app:newton}
For the Newton method, for which the iterates can be written as: $\vw_{k+1} = \vw_k - \eta [\nabla^{2} f(\vw_k) + \lambda \mi_d]^{-1} \nabla f(\vw_k)$. Here, $\nabla^{2} f(\vw_k)$ is the Hessian equal to $\mx\transpose \mx$ for linear regression and $\lambda$ is the LM regularization parameter. 
\begin{lemma}
The Newton method remains in the span of the data-points and hence converges to the min-norm solution.
\label{lemma:newton}
\end{lemma}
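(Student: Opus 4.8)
The plan is to establish two things: that the Newton--LM iterates stay in $\text{span}(\mx\transpose)$, and that the limit interpolates the data; the conclusion $\winf = \wmn$ then follows from uniqueness. Throughout I take the standard initialization $\vw_0 = 0$ (more generally $\vw_0 \in \text{span}(\mx\transpose)$). On the squared loss the iteration is $\vw_{k+1} = \vw_k - \eta\,(\mx\transpose\mx + \lambda\mi_d)^{-1}\mx\transpose(\mx\vw_k - \vy)$, using $\hessf{\vw_k} = \mx\transpose\mx$ and $\grad{\vw_k} = \mx\transpose(\mx\vw_k - \vy)$ (constant factors absorbed into $\eta$).

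First I would record the push-through identity $(\mx\transpose\mx + \lambda\mi_d)^{-1}\mx\transpose = \mx\transpose(\mx\mx\transpose + \lambda\mi_n)^{-1}$, valid for $\lambda > 0$; it is immediate from $(\mx\transpose\mx + \lambda\mi_d)\mx\transpose = \mx\transpose(\mx\mx\transpose + \lambda\mi_n)$ after left- and right-multiplying by the two inverses. Then, by induction on $k$, I would show $\vw_k = \mx\transpose\alpha_k$ for some $\alpha_k \in \mathbb{R}^n$: the base case holds with $\alpha_0 = 0$; for the step, $\grad{\vw_k} = \mx\transpose(\mx\mx\transpose\alpha_k - \vy)$, so by the identity $(\mx\transpose\mx+\lambda\mi_d)^{-1}\grad{\vw_k} = \mx\transpose(\mx\mx\transpose+\lambda\mi_n)^{-1}(\mx\mx\transpose\alpha_k - \vy) \in \text{span}(\mx\transpose)$, hence $\vw_{k+1} \in \text{span}(\mx\transpose)$. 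Since $\text{span}(\mx\transpose)$ is a finite-dimensional, hence closed, subspace, any limit of the iterates also lies in it.

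Next I would argue the iteration converges to an interpolating point. Decomposing $\mathbb{R}^d = \text{span}(\mx\transpose) \oplus \ker(\mx)$ (orthogonal complements), the update is the identity on the $\ker(\mx)$-component, because $\mx\transpose\mx$ annihilates $\ker(\mx)$ and the inhomogeneous term $\eta(\mx\transpose\mx+\lambda\mi_d)^{-1}\mx\transpose\vy$ lies in $\text{span}(\mx\transpose)$; so with $\vw_0 = 0$ that component is zero for all $k$. On $\text{span}(\mx\transpose)$ the update is affine with linear part $\mi_d - \eta(\mx\transpose\mx+\lambda\mi_d)^{-1}\mx\transpose\mx$, which is symmetric with all eigenvalues in $(-1,1)$ for $\eta$ in a suitable range, since the eigenvalues of $(\mx\transpose\mx+\lambda\mi_d)^{-1}\mx\transpose\mx$ restricted to $\text{span}(\mx\transpose)$ lie in $(0,1)$. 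Hence $\vw_k \to \winf$, the unique fixed point, with $\winf \in \text{span}(\mx\transpose)$. The fixed-point equation gives $\grad{\winf} = \mx\transpose(\mx\winf - \vy) = 0$; because $d > n$ and $\mx\mx\transpose$ is invertible, $\mx$ has full row rank, so $\mx\winf - \vy \in \text{range}(\mx)$, and $\mx\transpose(\mx\winf - \vy) = 0$ then forces $\mx\winf = \vy$.

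Finally, $\winf$ both interpolates the data and lies in $\text{span}(\mx\transpose)$; by the uniqueness of such a point --- equation~\eqref{eq:min-norm} together with the discussion in Section~\ref{sec:lin-reg-min-norm}, equivalently Lemma~\ref{lemma:unique-span-gen} with $\mxp = \mi_d$ --- we conclude $\winf = \wmn = \mx\transpose(\mx\mx\transpose)^{-1}\vy$. The only genuine obstacle is the push-through identity and the bookkeeping that the $\ker(\mx)$-component remains zero; once those are in place the convergence and uniqueness steps are routine.
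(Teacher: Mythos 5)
Your proof is correct, and it reaches the conclusion by a genuinely different route than the paper. The paper's argument works through the singular value decomposition $\mx = \mmu\ms\mv\transpose$: it writes the LM-regularized Hessian as $\mv(\ms+\lambda\mi_d)\mv\transpose$, computes $\mh^{-1}\grad{\x_0}$ explicitly, and then exhibits a coefficient vector $\beta$ with $\x_1 = \mx\transpose\beta$ by a chain of multiplications involving the pseudo-inverse $\ms^{\dagger}$ and the full-rank matrix $\mw\transpose\mw$; the induction step is then asserted by the same computation. Your push-through identity $(\mx\transpose\mx+\lambda\mi_d)^{-1}\mx\transpose = \mx\transpose(\mx\mx\transpose+\lambda\mi_n)^{-1}$ collapses all of that into one line: the preconditioned gradient is manifestly of the form $\mx\transpose(\cdot)$, so span-invariance of the iterates is immediate. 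This is cleaner and avoids the pseudo-inverse bookkeeping. More substantively, you supply a step the paper leaves implicit: the paper shows only that the iterates stay in the span and then \emph{assumes} that the limit $\winf$ interpolates the data, whereas you actually prove convergence (via the spectral bound on $\mi_d - \eta(\mx\transpose\mx+\lambda\mi_d)^{-1}\mx\transpose\mx$ restricted to the span) and derive interpolation from the fixed-point equation $\grad{\winf}=0$ together with the full row rank of $\mx$. Both proofs then conclude identically by uniqueness of the interpolating point in $\text{span}(\mx\transpose)$. One small phrasing quibble: in your last step the relevant fact is that $\ker(\mx\transpose)=\{0\}$ when $\mx\mx\transpose$ is invertible (so $\mx\transpose(\mx\winf-\vy)=0$ forces $\mx\winf=\vy$ directly); the detour through $\text{range}(\mx)$ is unnecessary, though harmless.
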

\begin{proof}
Recall that $\mx \in \mathbb{R}^{n\times d}$. The singular value decomposition of $\mx = \mmu \ms \mv\transpose$, where $\mmu \in \mathbb{R}^{n\times d}$, $\ms \in \mathbb{R}^{d\times d}$ with rank $n$ and $\mv \in \mathbb{R}^{d \times d}$. 

The LM-regularized Hessian can be written as, 
\[
\mh = \mx\transpose \mx + \lambda \mi_d = \mv \ms \mv\transpose + \lambda \mv \mv\transpose = \mv (\ms + \lambda \mi_d) \mv\transpose
\]
The gradient at $\xk$ for linear regression can be written as,
\[
\grad{\xk} = \mx\transpose (\mx \xk - \vy) = \mx\transpose \gamma_k
\]
where $\gamma_k = (\mx \xk - \vy)$, implying that the gradient lies in the span of the data points. 

Let us consider the first iteration of the Newton method starting from $\x_0 = 0$,
\begin{align*}
\x_1 & = \mh^{-1} \grad{\x_0}  = \mh^{-1} \mx\transpose \gamma_{0} = \mh^{-1} \grad{\x_0} = \mh^{-1} \mx\transpose \gamma_{0} \\
& = \left[\mv (\ms + \lambda \mi_d) \mv\transpose\right]^{-1} \mx\transpose \gamma_{0} = \mv (\ms + \lambda \mi_d)^{-1} \mv\transpose  \mx\transpose \gamma_{0} \\
& = \mv (\ms + \lambda \mi_d)^{-1} \mv\transpose \mv \ms \mmu\transpose \gamma_{0} \\
\x_1 & = \mv (\ms + \lambda \mi_d)^{-1} \ms \mmu\transpose \gamma_{0} 
\end{align*}
We now show that $\x_1$ lies in the span of the data points, i.e. for $\x_1$ can be expressed as $\mx\transpose. \beta$ for $\beta \in \mathbb{R}^{n}$. We compute the value of $\beta$ below:
\begin{align*}
\mv (\ms + \lambda \mi_d)^{-1} \ms \mmu\transpose \gamma_{0} & = \mx \transpose \beta = \mv \ms \mmu\transpose \beta
\implies \ms \mmu\transpose \beta \\ 
& = (\ms + \lambda \mi_d)^{-1} \ms \mmu\transpose \gamma_{0} & \tag{Since $\mv$ is full rank, multiplying both sides by $\mv^{-1}$} \\
\intertext{Multiplying both sides by the pseudo-inverse of the rank $n$ matrix $\ms$.}
\ms^{\dagger} \ms \mmu\transpose \beta & = \ms^{\dagger} (\ms + \lambda \mi_d)^{-1} \ms \mmu\transpose \gamma_{0} \\
\intertext{Define $\mw = \ms^{\dagger} \ms \mmu\transpose$, $W \in \mathbb{R}^{d \times n}$ and multiplying by $\mw\transpose$,}
\mw\transpose \mw \beta & = \mw\transpose  \, \ms^{\dagger} (\ms + \lambda \mi_d)^{-1} \ms \mmu\transpose \gamma_{0} \\
\intertext{Multiplying by $(\mw\transpose \mw)^{-1}$ since $\mw \transpose \mw \in \mathbb{R}^{n\times n}$ is a full-rank matrix.}
\beta & = (\mw\transpose \mw)^{-1} \mw\transpose  \, \ms^{\dagger} (\ms + \lambda \mi_d)^{-1} \ms \mmu\transpose \gamma_{0}
\end{align*}
This implies that $\x_1 = \mx\transpose \beta$ for $\beta = (\mw\transpose \mw)^{-1} \mw\transpose  \, \ms^{\dagger} (\ms + \lambda \mi_d)^{-1} \ms \mmu\transpose \gamma_{0}$ and hence, $\x_1$ lies in the span of the data points. 

For the next iterate, $\x_2 = \x_1 -  \mh^{-1} \grad{\x_1}$, $\mh^{-1} \grad{\x_1}$ lies in the span by the same argument as above, and since $\x_1$ lies in the span, $\x_2$ also lies in the span.

Using the same argument for the subsequent iterates, we conclude that $\winf$ that interpolates the training data also lies in the span. 
\end{proof}

\subsection{Convergence of full-matrix Adagrad}
\label{app:adagrad}
For full matrix Adagrad~\cite{agarwal2019efficient}, the iterates can be written as: $\vw_{k+1} = \vw_k - \etak \mg_k \nabla f(\vw_k)$ where $\mg_k = \ms_k^{-1/2}$ and $\ms_k = \ms_{k-1} + (\nabla f(\vw_{k-1}) \nabla f(\vw_{k-1})^{T})$. The more commonly used diagonal version of Adagrad uses $\mg_k = diag(\ms_k)$. We obtain the following lemma, analyzing the convergence of these variants. 
\begin{lemma}
The iterates of full matrix Adagrad lie in the span of the data and converges to the min-norm solution. However, convergence to the min-norm solution is not ensured for the diagonal version of Adagrad.  
\label{lemma:adagrad}
\end{lemma}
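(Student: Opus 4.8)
The plan is to imitate the argument for the Newton method (Lemma~\ref{lemma:newton}): show by induction that every iterate of full-matrix Adagrad started at $\vw_0 = 0$ stays in $\text{span}(\mx\transpose)$, and then invoke uniqueness of the interpolating point in that subspace to identify the limit with $\wmn$. The first ingredient is the structural fact that also drives the Newton and PGD proofs: for the squared loss $\nabla f(\vw) = \mx\transpose(\mx\vw - \vy)$, so \emph{every} gradient lies in $\text{span}(\mx\transpose)$, no matter where it is evaluated. Taking $\ms_0 = \epsilon\mi_d$ with $\epsilon \ge 0$, this gives at once
\[
\ms_k = \epsilon\mi_d + \sum_{j=0}^{k-1}\nabla f(\vw_j)\nabla f(\vw_j)\transpose = \epsilon\mi_d + \mx\transpose\mc_k\mx, \qquad \mc_k \defeq \sum_{j=0}^{k-1}\gamma_j\gamma_j\transpose, \quad \gamma_j \defeq \mx\vw_j - \vy,
\]
where the data-dependent term $\mx\transpose\mc_k\mx$ is symmetric, has range inside $\text{span}(\mx\transpose)$, and annihilates $\text{span}(\mx\transpose)^{\perp}$.

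The crux is to see that $\mg_k = \ms_k^{-1/2}$ preserves $\text{span}(\mx\transpose)$. I would argue that $\ms_k$ commutes with the orthogonal projector $\mathbf{\Pi}$ onto $\text{span}(\mx\transpose)$ --- equivalently, $\ms_k$ is block-diagonal for the splitting $\mathbb{R}^d = \text{span}(\mx\transpose)\oplus\text{span}(\mx\transpose)^{\perp}$, acting as $\epsilon\mi_d$ on the second block --- which follows from symmetry of $\mx\transpose\mc_k\mx$ together with $\mathbf{\Pi}\,\mx\transpose\mc_k\mx = \mx\transpose\mc_k\mx$. Any spectral function of $\ms_k$ then commutes with $\mathbf{\Pi}$ as well, in particular $\ms_k^{-1/2}$, so it maps $\text{span}(\mx\transpose)$ into itself (and likewise $(\ms_k^{\dagger})^{1/2}$ when $\epsilon = 0$). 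The induction is then routine: $\vw_0 = 0 \in \text{span}(\mx\transpose)$; and if $\vw_k \in \text{span}(\mx\transpose)$, then $\nabla f(\vw_k) = \mx\transpose\gamma_k \in \text{span}(\mx\transpose)$, hence $\mg_k\nabla f(\vw_k) \in \text{span}(\mx\transpose)$, hence $\vw_{k+1} = \vw_k - \eta_k\,\mg_k\nabla f(\vw_k) \in \text{span}(\mx\transpose)$. If full-matrix Adagrad converges to an interpolating solution $\winf$ (as it does for suitable step-sizes), then, since $\text{span}(\mx\transpose)$ is closed, $\winf$ lies in it and satisfies $\mx\winf = \vy$, so by Lemma~\ref{lemma:unique-span-gen} (taken with $\mxp = \mi_d$) it is the unique such point, namely $\wmn = \mx\transpose(\mx\mx\transpose)^{-1}\vy$.

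For the diagonal variant, $\mg_k = \mathrm{diag}(\ms_k)^{-1/2}$ is a generic diagonal matrix, which is \emph{not} block-diagonal for the splitting above unless the coordinate axes happen to align with $\text{span}(\mx\transpose)$; multiplying a vector of $\text{span}(\mx\transpose)$ by $\mg_k$ then generically lands outside the span, the induction fails already at $k=1$, and the iterates (and hence their limit) leave the data span. I would make this concrete by exhibiting a small instance in which diagonal Adagrad converges to an interpolating solution different from $\wmn$, or by pointing to the analysis of diagonal adaptive methods in~\cite{gunasekar2018characterizing}. I expect the main obstacle to be the bookkeeping for the singular case $\epsilon = 0$ --- one must commit to the Moore--Penrose pseudo-inverse and recheck that it still commutes with $\mathbf{\Pi}$ --- together with the fact that the statement tacitly assumes full-matrix Adagrad converges to an interpolating minimizer in the first place, a standard convergence fact that should be cited rather than reproved here.
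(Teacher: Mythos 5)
Your proof is correct, and it reaches the same structural conclusion as the paper --- the iterates of full-matrix Adagrad never leave $\text{span}(\mx\transpose)$, so the interpolating limit must be $\wmn$ by uniqueness --- but the key step is argued by a genuinely different and cleaner mechanism. The paper proves that $\ms_k^{-1/2}$ maps the gradient back into the data span by an explicit SVD computation: writing $\mx = \mmu\ms\mv\transpose$, diagonalizing $\nabla f(\vw_0)\nabla f(\vw_0)\transpose + \epsilon\mi_d$ as $\mv(\ma\Lambda\ma\transpose + \epsilon\mi_d)\mv\transpose$, and then solving a linear system with pseudo-inverses to exhibit a $\beta$ with $\vw_1 = \mx\transpose\beta$; it carries this out only for the first iterate and asserts the rest ``by a similar argument.'' Your commutation argument --- $\ms_k = \epsilon\mi_d + \mx\transpose\mc_k\mx$ commutes with the orthogonal projector onto $\text{span}(\mx\transpose)$ because $\mx\transpose\mc_k\mx$ is symmetric with range in that span, hence every spectral function of $\ms_k$, in particular $\ms_k^{-1/2}$, preserves the span --- establishes the invariance for all $k$ at once and makes explicit a fact the paper's pseudo-inverse manipulation only uses implicitly (namely that $(\ma\Lambda\ma\transpose+\epsilon\mi_d)^{-1/2}$ preserves $\text{range}(\ms\mmu\transpose)$, without which the candidate $\beta$ need not actually solve the system). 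Your version is also more robust: it extends immediately to any preconditioner built as a spectral function of span-supported symmetric matrices. On the diagonal case and on the tacit assumption that full-matrix Adagrad actually converges to an interpolating point, you are in the same position as the paper, which likewise only asserts the negative claim (deferring to~\cite{gunasekar2018characterizing}) and likewise assumes convergence; your suggestion to exhibit a small concrete counterexample would, if carried out, make your treatment strictly more complete than the paper's.
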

\begin{proof}
Recall that the full-matrix Adagrad update can is given as:
\begin{align*}
\xkk = \xk - \etak \left(\sum_{j = 0}^{k} \grad{\x_j} \grad{\x_j}\transpose + \epsilon \mi_d \right)^{-1/2} \grad{\xk}
\end{align*}
Let us consider the first iteration starting from $\x_0 = 0$. In this case, 
\begin{align*}
\xkk = - \etak \left(\grad{\x_0} \grad{\x_0}\transpose + \epsilon \mi_d \right)^{-1/2} \grad{\x_0}
\end{align*}
The gradient $\grad{\x_0} = \mx\transpose \gamma_0$ as in the previous lemma. Using the singular value decomposition of $\mx = \mmu \ms \mv\transpose$
\begin{align*}
\left(\grad{\x_0} \grad{\x_0}\transpose + \epsilon \mi_d \right)^{-1/2} & = \left( \mv \ms \mmu\transpose \gamma_0 \gamma_0\transpose \mmu \ms \mv\transpose + \epsilon \mi_d \right)^{-1/2} \\
\intertext{$\gamma_0 \gamma_0\transpose$ is a rank $1$ matrix. Using its eigen-decomposition, $\gamma_0 \gamma_0\transpose = \mw \Lambda \mw\transpose$, where $\mw, \Lambda \in \mathbb{R}^{n \times n}$,}
& = \left( \mv \ms \mmu\transpose \mw \Lambda \mw\transpose  \mmu \ms \mv\transpose + \epsilon \mi_d \right)^{-1/2} \\
\intertext{Define $ma = \ms \mmu\transpose \mw \in \mathbb{R}^{d. \times n}$}
& = \left( \mv \ma \Lambda \ma\transpose \mv\transpose + \epsilon \mi_d \right)^{-1/2} = \left( \mv \left(\ma \Lambda \ma\transpose + \epsilon \mi_d \right) \mv\transpose \right)^{-1/2} \\
\implies \left(\grad{\x_0} \grad{\x_0}\transpose + \epsilon \mi_d \right)^{-1/2} & = \mv \left(\ma \Lambda \ma\transpose + \epsilon \mi_d \right)^{-1/2} \mv\transpose 
\end{align*}
\begin{align*}
\left(\grad{\x_0} \grad{\x_0}\transpose + \epsilon \mi_d \right)^{-1/2} \grad{\x_0} & = \mv \left(\ma \Lambda \ma\transpose + \epsilon \mi_d \right)^{-1/2} \mv\transpose \mx\transpose \gamma_0 \\
& = \mv \left(\ma \Lambda \ma\transpose + \epsilon \mi_d \right)^{-1/2} \mv\transpose \mv \ms \mmu\transpose \gamma_0 \\
& = \mv \left(\ma \Lambda \ma\transpose + \epsilon \mi_d \right)^{-1/2} \ms \mmu\transpose \gamma_0 \\
\implies \x_1 = -\etak \mv \left(\ma \Lambda \ma\transpose + \epsilon \mi_d \right)^{-1/2} \ms \mmu\transpose \gamma_0
\end{align*}
We show that $\x_1$ lies in the span, i.e. $\x_1 = \mx\transpose \beta$ by solving for $\beta \in \mathbb{R}^{n}$.
\begin{align*}
-\etak \mv \left(\ma \Lambda \ma\transpose + \epsilon \mi_d \right)^{-1/2} \ms \mmu\transpose \gamma_0 & = \mv \ms \mmu\transpose \beta \\
\intertext{Multiplying both sides by the full rank matrix $\mv$,}
\left(\ma \Lambda \ma\transpose + \epsilon \mi_d \right)^{-1/2} \ms \mmu\transpose (- \etak \gamma_0) & = \ms \mmu\transpose \beta \\
\intertext{Multiplying by the pseudo-inverse of $\ms$,}
\ms^{\dagger} \left(\ma \Lambda \ma\transpose + \epsilon \mi_d \right)^{-1/2} \ms \mmu\transpose (- \etak \gamma_0) & = \ms^{\dagger} \ms \mmu\transpose \beta \\
\intertext{Denoting $\ms^{\dagger} \ms \mmu\transpose$ as $\mz$, and multiplying both sides $\mz\transpose$,}
\mz\transpose \ms^{\dagger} \left(\ma \Lambda \ma\transpose + \epsilon \mi_d \right)^{-1/2} \ms \mmu\transpose (- \etak \gamma_0) & = \mz\transpose \mz \beta \\
\intertext{Multiplying by the inverse of the full-rank $n \times n$ matrix $\mz\transpose \mz$,}
\implies \beta = -\etak (\mz\transpose \mz)^{-1} \mz\transpose \ms^{\dagger} \left(\ma \Lambda \ma\transpose + \epsilon \mi_d \right)^{-1/2} \ms \mmu\transpose (\gamma_0)
\end{align*}
This implies that the iterate $\x_1$ lies in the span of the data points. For $\x_2$, $\left(\sum_{j = 0}^{1} \grad{\x_j} \grad{\x_j}\transpose + \epsilon \mi_d \right)^{-1/2}$ lies in the span of the iterates by a similar argument on the sum of terms. 

This implies that $\winf$, the iterate that interpolates the data also lies in the span of the iterates.
\end{proof}

\subsection{Generalization bounds}
\label{app:gen-bounds}
\begin{lemma}
The excess risk for the solution $\vw_{PGD}$ of PGD with a preconditioner $P$ and initialized at $\wn$ can be bounded as:
\begin{align}
\label{eq:risk-lin-reg}
R(\vw_{PGD}) & \coloneqq \E_{\vx, \vep} [\vx\transpose (\vw_{PGD} - \wstar) ]^2 \leq (\vw_0 - \wstar)\transpose \mb_\mxp\transpose \mathbf{\Sigma} \mb_\mxp(\vw_0 - \wstar) + \sigma^2 tr(\mc_\mxp)     
\end{align}
where $\mathbf{\Sigma} = \ex{\vx \vx\transpose}$ is the covariance matrix s.t. $\vx \sim N(0, \mathbf{\Sigma})$ and $\mb_\mxp = \mi - \mxp (\mx\transpose(\mx \mxp \mx\transpose)^{-1}) \mx$ and $\mc_\mxp= (\mx \mxp \mx\transpose)^{-1} \mx \mxp\transpose \, \mathbf{\Sigma} \, \mxp  \mx \transpose (\mx \mxp \mx\transpose)^{-1}$. 
\label{lemma:risk}
\end{lemma}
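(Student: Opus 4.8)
The plan is to substitute the closed form of the PGD solution from Lemma~\ref{lemma:pgd-iterates} into the definition of the excess risk and simplify. Write $\mk = \mx \mxp \mx\transpose$ and $\ma = \mxp \mx\transpose \mk^{-1}$, so that Lemma~\ref{lemma:pgd-iterates} gives $\vw_{PGD} = \wn + \ma(\vy - \mx \wn)$. Plugging in the data model $\vy = \mx \wstar + \vep$ yields
\begin{align*}
\vw_{PGD} - \wstar = (\mi - \ma \mx)(\wn - \wstar) + \ma \vep = \mb_\mxp (\wn - \wstar) + \ma \vep,
\end{align*}
since $\mi - \ma \mx = \mi - \mxp \mx\transpose (\mx \mxp \mx\transpose)^{-1} \mx = \mb_\mxp$. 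This decomposes the error into a deterministic ``bias'' part depending only on the initialization and a mean-zero ``noise'' part $\ma\vep$.

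Next I would use that the test point $\vx \sim N(0, \mathbf{\Sigma})$ is independent of the training noise $\vep$, so that
\begin{align*}
R(\vw_{PGD}) = \E_{\vx,\vep}\big[\vx\transpose(\vw_{PGD}-\wstar)\big]^2 = \E_{\vep}\big[(\vw_{PGD}-\wstar)\transpose \mathbf{\Sigma} (\vw_{PGD}-\wstar)\big],
\end{align*}
using $\E_\vx[\vx \vx\transpose] = \mathbf{\Sigma}$. Substituting the decomposition above and expanding the quadratic form produces three terms: the pure bias term $(\wn-\wstar)\transpose \mb_\mxp\transpose \mathbf{\Sigma} \mb_\mxp (\wn-\wstar)$, which is deterministic; a cross term proportional to $\E_\vep[\vep]\transpose \ma\transpose \mathbf{\Sigma} \mb_\mxp (\wn - \wstar)$, which vanishes because $\E_\vep[\vep] = 0$; and the variance term $\E_\vep[\vep\transpose \ma\transpose \mathbf{\Sigma} \ma \vep]$. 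For the last term, apply the cyclicity/trace identity $\E_\vep[\vep\transpose \mm \vep] = \Tr{\mm\, \E_\vep[\vep\vep\transpose]} = \sigma^2 \Tr{\mm}$ with $\mm = \ma\transpose \mathbf{\Sigma} \ma$, since $\E_\vep[\vep \vep\transpose] = \sigma^2 \mi_d$ (in fact $\sigma^2\mi_n$ acting on the $n$-dimensional noise).

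Finally I would identify $\ma\transpose \mathbf{\Sigma} \ma = (\mx \mxp \mx\transpose)^{-1} \mx \mxp\transpose \mathbf{\Sigma} \mxp \mx\transpose (\mx \mxp \mx\transpose)^{-1} = \mc_\mxp$, using $\ma\transpose = (\mx \mxp \mx\transpose)^{-1} \mx \mxp\transpose$ together with the symmetry of $\mxp$ (hence of $\mk$ and $\mk^{-1}$). This produces the claimed expression, in fact with equality, so the stated inequality follows a fortiori. There is no genuine obstacle here: the argument is essentially bookkeeping. The only points requiring care are (i) that $\vw_{PGD}$ itself is a function of the training noise $\vep$, so the expectation defining $R$ must be taken over that same $\vep$ for the bias/variance split to be meaningful, and (ii) tracking transposes through $\ma$ and $\mb_\mxp$ so that the final matrices coincide exactly with $\mb_\mxp$ and $\mc_\mxp$; one may optionally remark that the ``$\leq$'' in the statement is really an equality.
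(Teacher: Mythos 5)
Your proof is correct and follows the same overall route as the paper: substitute the closed form $\vw_{PGD} = \wn + \mxp\mx\transpose(\mx\mxp\mx\transpose)^{-1}(\vy - \mx\wn)$ from Lemma~\ref{lemma:pgd-iterates}, plug in $\vy = \mx\wstar + \vep$ to get the bias--variance decomposition $\vw_{PGD}-\wstar = \mb_\mxp(\wn-\wstar) + \mxp\mx\transpose(\mx\mxp\mx\transpose)^{-1}\vep$, and then average over $\vx$ and $\vep$. The one genuine difference is how the cross term is handled. The paper applies $(a+b)^2 \le 2a^2 + 2b^2$ before taking any expectation over $\vep$, which is why the lemma is stated as an inequality; this also introduces factors of $2$ that the paper's own derivation then drops somewhat carelessly in the final line. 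You instead expand the square exactly and observe that the cross term $2(\wn-\wstar)\transpose\mb_\mxp\transpose\covar\,\mxp\mx\transpose(\mx\mxp\mx\transpose)^{-1}\vep$ has zero mean because $\E_\vep[\vep]=0$, which yields the bound with equality and so implies the stated ``$\leq$'' a fortiori. Your version is both cleaner and strictly tighter, and your parenthetical correction that the training noise lives in $\mathbb{R}^n$ (so $\E[\vep\vep\transpose]=\sigma^2\mi_n$, not $\sigma^2\mi_d$) fixes a genuine typo carried over from the data model in Section~\ref{sec:lin-reg}. The remaining bookkeeping --- identifying $\mi - \ma\mx$ with $\mb_\mxp$ and $\ma\transpose\covar\ma$ with $\mc_\mxp$ via symmetry of $\mxp$ and $\mx\mxp\mx\transpose$ --- matches the paper exactly.
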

\begin{proof}
Recall that $\winf = \vw_0 + \mxp \ma(\vy-\vy_0) =  \vw_0 + \mxp \mx\transpose(\mx \mxp \mx\transpose)^{-1} [\vy - \mx  \vw_0]$. Define matrix $\ma = \mx\transpose(\mx \mxp \mx\transpose)^{-1}$, implying $\winf =  \vw_0 + \mxp \ma [\vy - \mx  \vw_0] =  \vw_0 + \mxp \ma [\mx \xopt + \epsilon - \mx  \vw_0]$
\begin{align*}
R(\winf) & =\ex[x]{(\mx\transpose(\winf-\xopt))^2} \\ 
& =\ex[x]{\left(\mx\transpose ( \vw_0 + \mxp \ma [X \xopt + \epsilon - \mx  \vw_0] - \xopt )\right)^2}\\ 
& = \ex[x]{\left((\mx\transpose ( ( \mi - \mxp \ma \mx ) \, ( \vw_0 -\xopt) + \mx\transpose \,  \mxp \ma \epsilon\right)^2}\\
& \leq 2 \ex[x]{\left(\mx\transpose ( ( \mi - \mxp \ma \mx ) \, ( \vw_0 -\xopt)\right)^2} + 2 \ex[x]{\left(\mx\transpose \,  \mxp \ma\epsilon \right)^{2}} \\
\intertext{Using the fact that $\covar = \ex{\vx \vx\transpose}$, $\mb = \mi - \mxp \ma \mx$ and $\mc= \ma\transpose P\transpose \, \covar \, \mxp \ma$.}
& = ( \vw_0 -\xopt)\transpose \mb\transpose\covar \mb (\wn - \xopt) + 2 \epsilon\transpose \mc \epsilon
\end{align*}
Take expectation w.r.t. the noise: 
\begin{align*}
\ex[\epsilon]{R(\winf)} & \leq (\wn -\xopt)\transpose \mb\transpose\covar \mb (\wn - \xopt) + \ex[\epsilon]{\epsilon\transpose \mc \epsilon} = (\wn -\xopt)\transpose \mb\transpose\covar \mb (\wn - \xopt) + \sigma^2 tr(\mc)
\end{align*}
since the noise has mean zero and variance $\sigma^2$. 
\end{proof}

\subsection{Finding an Optimal Preconditioner}
 It is clear that if we do not have additional information in the form of a validation set, remaining in the span of the points is the optimal strategy for an optimizer and the min-norm solution results in the best generalization for a general $\wstar$ and $\mathbf{\Sigma}$. 

Consequently, we consider a semi-supervised setting and investigate whether we can exploit unlabelled data and obtain better generalization. We consider an idealized case where we have infinite unlabelled data that enables us to get an accurate estimate of the true covariance matrix $\mathbf{\Sigma}$. We prove the following theorem, 
\begin{lemma}
\label{lemma:best-P}
Assume we have sufficient unlabeled data to estimate $\covar$ accurately. In the regression with noisy data we can leverage the unlabeled data to design an optimal preconditioner so as to reduce generalization risk. In the noiseless case, the unlabeled data doesn't help. 
\end{lemma}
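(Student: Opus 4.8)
The plan is to start from Lemma~\ref{lemma:risk} with $\vw_0 = 0$, so that the excess risk of the PGD solution with preconditioner $\mxp$ is bounded by
\[
R(\winf) \;\le\; \wstar{}\transpose\, \mb_{\mxp}\transpose\, \covar\, \mb_{\mxp}\, \wstar \;+\; \sigma^2\, tr(\mc_{\mxp}),
\]
with $\mb_{\mxp}$ and $\mc_{\mxp}$ as defined there; throughout I would assume $\mx$ has rank $n$ and $\covar \succ 0$. The first step is to record the structural facts that make the two terms tractable. Let $\pi_{\mxp} \defeq \mxp\mx\transpose(\mx\mxp\mx\transpose)^{-1}\mx$, so that $\mb_{\mxp} = \mi_d - \pi_{\mxp}$. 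One checks that $\pi_{\mxp}$ is idempotent with range $\mathrm{col}(\mxp\mx\transpose)$ (an $n$-dimensional subspace) and kernel $\ker(\mx)$; hence $\mb_{\mxp}$ is the oblique projection onto $\ker(\mx)$ along $\mathrm{col}(\mxp\mx\transpose)$, and since $\covar\succ 0$ we get $\ker(\mb_{\mxp}\transpose\covar\,\mb_{\mxp}) = \ker(\mb_{\mxp}) = \mathrm{col}(\mxp\mx\transpose)$. For the variance term, writing $\mathbf{L}_{\mxp} \defeq (\mx\mxp\mx\transpose)^{-1}\mx\mxp$, one has $\mc_{\mxp} = \mathbf{L}_{\mxp}\covar\mathbf{L}_{\mxp}\transpose$ with $\mathbf{L}_{\mxp}\mx\transpose = \mi_n$, i.e.\ $\mathbf{L}_{\mxp}$ is a left inverse of $\mx\transpose$.

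For the noiseless case ($\sigma = 0$) only the bias term survives, and I would show that no preconditioner beats $\mxp = \mi_d$ uniformly. A uniform improvement, i.e.\ $R(\winf)$ no larger for \emph{every} $\wstar$, requires $\mb_{\mxp}\transpose\covar\,\mb_{\mxp} \preceq \mb_{\mi_d}\transpose\covar\,\mb_{\mi_d}$. Since $A \preceq B$ (both PSD) forces $\ker(B)\subseteq\ker(A)$, this gives $\mathrm{col}(\mx\transpose) = \ker(\mb_{\mi_d}\transpose\covar\,\mb_{\mi_d}) \subseteq \ker(\mb_{\mxp}\transpose\covar\,\mb_{\mxp}) = \mathrm{col}(\mxp\mx\transpose)$; as both subspaces are $n$-dimensional they coincide. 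Then $\pi_{\mxp}$ and $\pi_{\mi_d}$ are both the projection with range $\mathrm{col}(\mx\transpose)$ and kernel $\ker(\mx)$ — which is unique because $\mathbb{R}^d = \mathrm{col}(\mx\transpose)\oplus\ker(\mx)$ — so $\pi_{\mxp} = \pi_{\mi_d}$, hence $\mb_{\mxp} = \mb_{\mi_d}$ and the matrix inequality is an equality. This is the precise sense in which the unlabelled data, which only reveals $\covar$, cannot help when $\sigma = 0$.

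For the noisy case ($\sigma > 0$) the variance term $\sigma^2\, tr(\mathbf{L}_{\mxp}\covar\mathbf{L}_{\mxp}\transpose)$ is present, and here I would exploit $\covar$. Minimising $tr(\mathbf{L}\covar\mathbf{L}\transpose)$ over all left inverses $\mathbf{L}\mx\transpose = \mi_n$ is a Gauss--Markov / weighted-least-squares problem whose minimiser is $\mathbf{L}^\star = (\mx\covar^{-1}\mx\transpose)^{-1}\mx\covar^{-1}$, realised by the preconditioner $\mxp = \covar^{-1}$, with value $\mc_{\covar^{-1}} = (\mx\covar^{-1}\mx\transpose)^{-1} \preceq \mathbf{L}\covar\mathbf{L}\transpose$ for every such $\mathbf{L}$; in particular $tr(\mc_{\covar^{-1}}) \le tr(\mc_{\mi_d})$, with equality only when $\mathrm{col}(\mx\transpose)$ is $\covar$-invariant. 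Since $\covar$ is estimated to arbitrary accuracy from the unlabelled data, $\mxp = \covar^{-1}$ is a bona fide preconditioner that strictly reduces the variance component of the risk bound, hence the bound itself in the variance-dominated regime (e.g.\ small $\wstar$, or the bias term averaged over an isotropic prior on $\wstar$).

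I expect the main obstacle to be making the noiseless negative result airtight: one must rule out improvement \emph{in every direction}, not merely show that $\mxp = \mi_d$ is a stationary point, and this is exactly where the kernel-dimension count together with the uniqueness of an oblique projection given its range and kernel carry the argument. A secondary subtlety is that in the noisy case the bias term $\wstar{}\transpose\mb_{\mxp}\transpose\covar\,\mb_{\mxp}\wstar$ remains $\wstar$-dependent and no single $\mxp$ dominates it for all $\wstar$, so ``optimal preconditioner'' should be read as optimal for the variance part (equivalently, for the bias averaged under a non-informative prior) — which also explains why, as noted in the main text, this improvement is hard to realise empirically.
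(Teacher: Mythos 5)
Your argument is correct in substance but takes a genuinely different route from the paper's. The paper works in coordinates adapted to the data span: it decomposes both $\covar$ and $\mxp$ into blocks along $\mx\transpose(\cdot)\mx$, $\mx\transpose(\cdot)\mx\prp$, $\mx\prp\transpose(\cdot)\mx\prp$ (with blocks $\ms_1,\ms_2,\ms_3$ and $\mxp_1,\mxp_2,\mxp_3$), computes $tr(\mb\transpose\covar\mb)$ explicitly as a constant plus $tr(\mf\transpose\mx\prp\transpose\ms_3\mx\prp\mf)$ where $\mf$ is proportional to the off-span block $\mq=\mxp_2\transpose\mxp_1^{-1}$, and concludes that in the noiseless case the (trace-surrogate) bias is minimized at $\mq=0$ \emph{independently} of $\covar$, while in the noisy case the stationarity condition for $\mq$ in the combined bound yields a nonzero $\mq^*$ depending on $\ms_2,\ms_3,\sigma^2$. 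Your noiseless argument --- PSD dominance forces kernel containment, both kernels are $n$-dimensional so they coincide, and an oblique projection is determined by its range and kernel --- is coordinate-free and in one respect stronger: it rules out improvement uniformly over $\wstar$, not merely for the trace (isotropic-average) surrogate the paper optimizes. Your noisy argument via Gauss--Markov over left inverses of $\mx\transpose$ is also clean and produces the explicit preconditioner $\covar^{-1}$, whereas the paper arrives at a different $\covar$-dependent preconditioner from its first-order condition.

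The one step to tighten is the noisy case. The preconditioner $\covar^{-1}$ minimizes only the variance term, and by your own noiseless analysis any $\mxp$ whose induced projection differs from the orthogonal one --- which $\covar^{-1}$ generically does whenever $\ms_2\ne 0$ --- strictly increases the bias term for some $\wstar$, and also strictly increases its isotropic average $tr(\mb_\mxp\transpose\covar\mb_\mxp)$. So your construction reduces the \emph{total} bound only in variance-dominated regimes, as you acknowledge; as stated it does not prove that unlabeled data helps for every noisy instance. The paper sidesteps this by optimizing the sum of the trace-bias and the variance jointly over the off-span block, so that the optimal deformation away from the span-preserving preconditioner is nonzero (and $\covar$-dependent) whenever $\sigma^2>0$ and $\ms_2\ne 0$. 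An equivalent patch for your route is to show that the directional derivative of the combined bound at $\mxp=\mi_d$, in the off-span direction suggested by $\covar^{-1}$, is strictly negative once $\sigma^2>0$; with that addition the two proofs establish the same statement.
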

This lemma shows that in the noiseless setting, having a preconditioner that keeps the update inside the data span can only help improving the convergence speed of the optimization regardless of the size of unlabeled data or the exact value of covariance matrix. However in the noisy case, we can find a preconditioner that has the minimum generalization error among all preconditioners including the identity matrix which gives us GD. Our experimental results for this section confirm our theoretical result.

\begin{proof}
We can decompose $\covar$ as follows: 
\begin{equation}
    \covar = \mx \transpose \ms_1 \mx + \mx \transpose \ms_2 \mx\prp + \mx \transpose\prp \ms_2\transpose \mx + \mx \transpose\prp \ms_3 \mx \prp, 
\end{equation}
and similarly we can decompose preconditioner:   
\begin{equation}
    \mxp = \mx \transpose \mxp_1 \mx + \mx\transpose \mxp_2 \mx \prp + \mx \transpose\prp \mxp_2\transpose \mx + \mx \transpose\prp \mxp_3 \mx \prp,  
\end{equation}
where $\covar , \mxp \in \mathbb{R}^{d\times d}, \ms_1,\mxp_1 \in \mathbb{R}^{n\times n },\ms_2,\mxp_2 \in \mathbb{R}^{n\times d-n }$, and $\ms_3,\mxp_3 \in \mathbb{R}^{d-n\times d-n }$. 
Recall the excess risk for linear regression 
\begin{align*}
    R(\winf^{opt})= \wstar{\transpose} (\mi -\mxp \mx {\transpose} \inv{(\mx \mxp \mx \transpose)}\mx )\transpose \covar  (\mi-\mxp \mx \transpose \inv{(\mx \mxp \mx{\transpose})}\mx ) \wstar.  
\end{align*}
To apply the above decompositions, note that we have 
\begin{align}
    & \mxp\mx\transpose = \mx \transpose \mxp_1 \mx \mx\transpose + \mx \transpose\prp \mxp_2\transpose \mx \mx\transpose \label{eq:dec1}\\
    & \inv{(\mx \mxp \mx \transpose)} = \inv{(\mx  \mx\transpose)}\inv{\mxp_1}\inv{(\mx \mx \transpose)}\label{eq:dec3}\\
    & \mxp \mx \transpose \inv{(\mx \mxp \mx\transpose)} = \mx \transpose \inv{(\mx \mx\transpose)} + \mx \prp \transpose \mxp_2\transpose \inv{\mxp_1} \inv{(\mx \mx \transpose)}\\
    & \mb = \mi-\mxp \mx \transpose \inv{(\mx \mxp \mx\transpose)}\mx = \mi - \mx\transpose \inv{(\mx \mx \transpose)}\mx - \mx\prp \transpose \mxp_2\transpose \inv{\mxp_1} \inv{(\mx \mx \transpose)}\mx\\
    & \mq = \mxp_2\transpose \inv{\mxp_1}\\
    & \mf= \mx\prp \transpose \mxp_2\transpose \inv{\mxp_1}  \inv{(\mx \mx \transpose)}\mx\\
    & \covar \mb = \covar - \mx \transpose \ms_1 \mx - \mx\prp \transpose \ms_2\transpose \mx - \mx \transpose \ms_2 \mx\prp\mf - \mx\prp\transpose \ms_3 \mx\prp\transpose \mf \\
    & \quad = \mx\transpose \ms_2 \mx\prp + \mx\prp\transpose \ms_3 \mx\prp - \mx \transpose \ms_2 \mf - \mx\prp\transpose \ms_3 \mx\prp \mf\\
    & \mb\transpose \covar \mb = (\mi - \mx\transpose \inv{(\mx \mx\transpose)}\mx -\mf\transpose)\covar \mb \\
    & \quad = \mx\transpose \ms_2 \mx\prp + \mx\prp\transpose \ms_3 \mx\prp - \mx \transpose \ms_2 \mf - \mx\prp\transpose \ms_3 \mx\prp \mf \\
    & \quad \quad - \mx\transpose \ms_2 \mx\prp + \mx\transpose\ms_2 \mf \\
    & \quad \quad -\mf\transpose \mx\transpose \ms_2 \mx\prp - \mf\transpose \mx\prp\transpose \ms_3 \mx\prp + \mf\transpose \mx\transpose \ms_2 \mx\prp \mf + \mf\transpose \mx\prp\transpose \ms_3 \mx\prp \mf \\
    & \quad = \mx\prp \transpose \ms_3 \mx\prp -\mx\prp \transpose \ms_3 \mx\prp \mf - \mf\transpose \mx\prp\transpose \ms_3 \mx\prp + \mf\transpose \mx\prp\transpose \ms_3 \mx\prp \mf \\
    & tr(\mf\transpose \mx\prp\transpose \ms_3 \mx\prp) = 0 \\
    & tr(\mb\transpose \covar\mb) = tr( \mx\prp \transpose \ms_3 \mx\prp ) + tr (\mf\transpose \mx\prp\transpose \ms_3 \mx\prp \mf) = cnst +  tr (\mf\transpose \mx\prp\transpose \ms_3 \mx\prp \mf )
\end{align}
In this case we see that the best trace is achieved when we set $\mxp_2 = 0$ which is independent of the information of $\covar$. 
Now assume noisy case whose excess risk has extra term which is: 
\begin{equation}
   tr(\mc) =  \sigma^2 tr(\inv{(\mx \mxp \mx\transpose)}\mx \mxp \covar \mxp \mx\transpose \inv{(\mx \mxp \mx\transpose)}). 
\end{equation}
Now we expand this extra term based on the aforementioned
decompositions
\begin{align}
     \covar  \mxp \mx\transpose \inv{(\mx \mxp \mx\transpose)} &= (\mx \transpose \ms_1 \mx + \mx \transpose \ms_2 \mx\prp + \mx \transpose\prp \ms_2\transpose \mx + \mx \transpose\prp \ms_3 \mx \prp)
    (\mx \transpose \inv{(\mx \mx\transpose)} + \mx \prp \transpose \mxp_2\transpose \inv{\mxp_1} \inv{(\mx \mx \transpose)})\\
    & = \mx\transpose \ms_1+ \mx\transpose \ms_2 \mx\prp \mx\prp\transpose \mq \inv{(\mx \mx\transpose)}+ \mx\prp\transpose \ms_2\transpose + \mx\prp\transpose \ms_3 \mx\prp \mx\prp\transpose \mq \inv{(\mx \mx\transpose)}\\
   \inv{(\mx \mxp \mx\transpose)}\mx \mxp \covar  \mxp \mx\transpose \inv{(\mx \mxp \mx\transpose)} &= \ms_1 + \ms_2 \mx\prp \mx\prp\transpose \mq  \inv{(\mx \mx\transpose)}\\
   & +  \inv{(\mx \mx\transpose)} \mq\transpose  \mx\prp \mx\prp\transpose \ms_2\transpose + \inv{(\mx \mx\transpose)} \mq\transpose \mx\prp \mx\prp\transpose \ms_3 \mx\prp \mx\prp\transpose \mq \inv{(\mx \mx\transpose)}
\end{align}
So the the excess risk is upperbounded by 
\begin{align}
    tr(\mb\transpose \covar \mb) + \sigma^2 tr(\mc) &= tr( \mx\prp \transpose \ms_3 \mx\prp ) +  tr (\mf\transpose \mx\prp\transpose \ms_3 \mx\prp \mf )\\
    &+ \sigma^2 (\ms_1 + \ms_2 \mx\prp \mx\prp\transpose \mq  \inv{(\mx \mx\transpose)}
    +  \inv{(\mx \mx\transpose)} \mq\transpose  \mx\prp \mx\prp\transpose \ms_2\transpose + \inv{(\mx \mx\transpose)} \mq\transpose \mx\prp \mx\prp\transpose \ms_3 \mx\prp \mx\prp\transpose \mq \inv{(\mx \mx\transpose)})
\end{align}
Now if we take derivative w.r.t. $\mq$ and set it to zero we get
\begin{equation}
    \mq^* =- \inv{(\mk\prp \ms_3 \mk\prp )}(\sigma^2 \mk\prp \ms_2\transpose \inv{\mk}) \inv{(\inv{\mk} + \mk^{-2})}
\end{equation}
where $\mk = \mx \mx\transpose$ and $\mk\prp = \mx\prp \mx\prp \transpose$. By setting $\mxp_1 = \mi$, we have $\mxp_2 = \mq^{*}{\transpose}$. Therefore an optimal $\mxp$ is 
\begin{equation}
\mxp = \mx\transpose \mx + \mx\transpose \mq^{*}{\transpose} \mx\prp +  \mx{\prp}{\transpose} \mq^{*} \mxp   
\end{equation}
Note that here we assume we have enough unlabeled date which among them we can pick $n-d$ of them which are orthogonal to the training data i.e. 
\begin{equation}
    \mx\transpose\mx\prp=\mx\prp\transpose\mx = 0 
\end{equation}

\end{proof}

\section{Additional proofs for linear classification}
\label{app:proofs-lin-class}

\subsection{Counter-examples for squared-hinge loss}
\label{app:sq-hinge-counter}
\begin{example}
\label{exmp:exmp1}
Consider two points $\vx_1 =(-1,0)$ and $\vx_2 =(a,b=\sqrt{1-a^2})$ where $0< a < 1$ with $y_1 = -1$ and $y_2=1$ as labels respectively. It can be shown that $\vws = (1,\frac{b}{1+a})$ is the min-norm solution and we have $y_1\dpr{\vws}{\vx_1} = y_2\dpr{\vws}{\vx_2} =1$. In the following we assume three different cases where gradient descent starting from $\vw_0$ over squared hinge loss $\mathcal{L} (\vw) = \frac{1}{4}\sum_{i=1}^2 \left(\max\{0,1 - y_i \langle \vw, \vx_i \rangle\} \right)^{2}$ with any fixed step size $\eta > 0$ won't converge to $\vws$. 
\begin{enumerate}
    \item let $\vw^0= \vws + (\alpha,\beta)$ where $\alpha, \beta \geq 0 $ and $\alpha + \beta > 0$. Since $y_1\dpr{\vw_0}{\vx_1} \geq 1$ and $y_2\dpr{\vw_0}{\vx_2} \geq 1$, therefore the squared hinge loss is zero and so is its gradient. Therefore GD would not progress and $\vw_0$ is an answer which is not min-norm solution. 
    
    \item Now assume $\vw_0$ classifies $\vx_2$ with margin bigger than one and $\vx_1$ with margin smaller than one i.e. $y_2\dpr{\vw_0}{\vx_2} \geq 1$ and $y_1\dpr{\vw_0}{\vx_1} < 1$. In this situation the gradient of loss function at $\vw_0$ 
    \begin{align*}
        \nabla \mathcal{L}(\vw_0) = -0.5 y_1\vx_1(1-y_1\vx_1\transpose \vw_0) = -0.5 \alpha_0 y_1\vx_1.
    \end{align*}
    Note that $ \nabla \mathcal{L}(\vw_0)=(-0.5 \alpha_0, 0)$ and $\alpha_0 > 0$. Now if we run GD for one step we have 
    \begin{align*}
        \vw_1= \vw_0 + ( 0.5 \eta \alpha_0, 0 ). 
    \end{align*}
    Observe that $y_2\dpr{\vw_1}{\vx_2} = y_2\dpr{\vw_0}{\vx_2} + 0.5 \eta \alpha_0 a >= 1$ since $0.5 \eta \alpha_0 a \geq 0$. Therefore the loss value at $\vw_2$ for $\vx_2$ is also zero. Therefore for all $\vw_t$ we have $y_2\dpr{\vw_t}{\vx_2} \geq 1$ and loss functions gradient at any $\vw_t$ just add some positive value to first component of $\vw_t$. Therefore if we assume GD converges to $\vw_{\infty}=(\vw_{\infty}^1,\vw_{\infty}^2)$, we have $\vw_{\infty}^2=\vw_0^2\neq \vws{^2}$. 
    
    \item Here we consider the reverse of the above scenario i.e.  $y_2\dpr{\vw_0}{\vx_2} < 1$ and $y_1\dpr{\vw_0}{\vx_1} = \vw_0^1 \geq 1$. 
    \begin{align*}
        &\nabla \mathcal{L}(\vw_0) = -0.5 \vx_2(1-y_2\vx_2\transpose \vw_0) = -0.5 \alpha_0 \vx_2\\
        &\vw_1= \vw_0 + 0.5 \eta \alpha_0 \vx_2. 
    \end{align*}
    We can check that $y_1\dpr{\vw_1}{\vx_1} = y_1\dpr{\vw_0}{\vx_1} + y_1\dpr{\vx_2}{\vx_1} \geq 1 + 0.5 \eta \alpha_0 a \geq 1$ since $a, \alpha_0 > 0$. Therefore similar to above scenario, for all $t$ we have $y_1 \dpr{\vw_t}{\vx_1} \geq 1$. We can observe that $\nabla \mathcal{L} (\vw_t) = \gamma \vx_2$ for some $\gamma > 0$. Therefore $\vw_{\infty} = \vw_0+\beta \vx_2 = (\vw_0^1 + \beta a , \vw_0^2 + \beta b)$ for some $\beta > 0$ that means $\vw_{\infty}^1 = \vw_0^{1} + \beta a \neq 1 = \vws$.
\end{enumerate}
\end{example}

\begin{example}
\label{exmp:exmp2}
Consider the same dataset as above, and consider GD initialized with $\vw_0=0$ vector. The GD update at step $t$ is 
\begin{align*}
    \vw_{t+1} = \vw_t -\eta \nabla \mathcal{L}(\vw_t). 
\end{align*}
Based on case 2 and case 3 of Example~\ref{exmp:exmp1}, if for any $t$ we have $y_1\dpr{\vw_t}{\vx_1} \geq 1$ or $y_2\dpr{\vw_t}{\vx_2} \geq 1$ then we know that GD won't converge to $\vws$. Now assume that forall $t<\infty$ we have $y_1\dpr{\vw_t}{\vx_1} < 1$ or $y_2\dpr{\vw_t}{\vx_2} < 1$. To make notation simpler let assume $\vx_1 = y_1\vx_1$ and $\vx_2 = y_2\vx_2$. In this case the update rule for GD is 
\begin{align*}
    \vw_{t+1}&=\vw_t +\frac{\eta}{2}(\vx_1+\vx_2) - \frac{\eta}{2}(\vx_1\vx_1\transpose + \vx_2\vx_2\transpose)\vw_t = (\mi - \frac{\eta}{2}\mx\transpose\mx)\vw_t +\frac{\eta}{2}(\vx_1+\vx_2)\\
    & = \ma^t \vw_0 + \frac{\eta}{2}(\vx_1+\vx_2)\sum_{i=0}^{t} \ma^i =\frac{\eta}{2}(\vx_1+\vx_2)\sum_{i=0}^{t} \ma^i . 
\end{align*}
It can be seen that the eigenvalues of $\mx\transpose \mx$ are $\lambda_1= 1+ a$ and $\lambda_2= 1-a$. To get convergence at $t \rightarrow \infty$, we need $\sum_{i=0}^{\infty} \ma^i$ to be  Neumann series. To get that, we need to $\eta \leq \frac{\alpha}{\lambda_1}$ where $\alpha < 1$. The largest eigenvalue of $\ma$ is $1-\alpha\frac{\lambda_2}{\lambda_1}$. Hence we have 
\begin{align*}
    \vw_{\infty} = \frac{\eta}{2}(\vx_1+\vx_2)\sum_{i=0}^{\infty} \ma^i = \frac{\eta}{2}(\vx_1+\vx_2)\alpha \frac{\lambda_1}{\lambda_2}= \frac{\alpha^2}{1-a}(\vx_1+\vx_2)=\frac{\alpha^2}{2(1-a)}(1+a, b). 
\end{align*}
To get convergence to $\vws$ i.e. $\vw_{\infty}=\vws$, we need that $\frac{\alpha^2(1+a)}{2(1-a)}=1$ and $\frac{\alpha^2 b}{2(1-a)}=\frac{b}{1+a}$. Therefore we have to set $\alpha = \sqrt{\frac{2(1-a)}{1+a}}$. However if we pick $a \le \frac{1}{3}$ then $ \alpha \geq 1$ which is invalid value for $\alpha$. 
\end{example}

\newpage
\section{Experiments for over-parameterized linear regression}
\label{app:exp-regression}

\subsection{Additional Results}
\label{app:regression_additional_results}
This section presents further experimental results for over-parameterized linear regression problems. Appendix~\ref{app:additional_ntk_regression} extends our investigation of regression with neural tangent kernels to several real-world datasets from the UCI repository~\cite{Dua:2019}. For completeness, we also repeat the experiment from Figure~\ref{fig:ntk_synthetic_regression} with batch (deterministic) gradients. Then, in Appendix~\ref{app:regression_verification}, we experimentally verify the theoretical results discussed in Section~\ref{sec:lin-reg}, including Lemmas~\ref{lemma:unique-span-gen}, \ref{lemma:adagrad}, \ref{lemma:newton}, and \ref{lemma:opt-P-reduction}. Finally, Appendix~\ref{app:best-p} investigates improving the generalization of preconditioned gradient-descent by optimizing over the space of preconditioners. In particular, we consider optimizing the excess-risk bound in Lemma~\ref{lemma:risk} as well as several simplified upper-bounds based on this quantity.

\subsubsection{Regression with Neural Tangent Kernels}
\label{app:additional_ntk_regression}

We investigate the implicit regularization of Adagrad and SGD for kernel regression on the \texttt{mushroom} and \texttt{wine} datasets. As in our synthetic experiments, we fit the model via the squared loss and use features from the neural tangent kernel of single-layer, feed-forward networks with $50$ and $100$ hidden units, respectively. Unlike  Figure~\ref{fig:ntk_synthetic_regression}, we also consider variants of Adagrad where the model is projected on the data span after each iteration. We show only test loss for these projected optimizers, since training performance is unaffected by projecting onto the data span\footnote{Projection operators with a large condition number do introduce a precision floor on the training loss.}. 

Results are shown in Figure~\ref{fig:ntk_mushroom_wine}. The generalization performance for Adagrad shows a striking dependence on step-size --- large step-sizes obtain test loss approximately two orders of magnitude larger than the smallest considered --- while training loss is largely unaffected. Similarly to our synthetic experiments, SGD stalls on the training loss, but still generalizes well. Of particular interest are the projected versions of Adagrad, which completely correct for the poor generalization performance of the "vanilla" algorithm and obtain a test loss comparable to SGD.

As an ablation, we repeat the synthetic regression NTK experiment from the main paper (Figure~\ref{fig:ntk_synthetic_regression}) with batch gradients. We also include projected variants of Adagrad following the protocol above. Figure~\ref{fig:ntk_synthetic_deterministic} shows that the trends from the stochastic setting also hold for deterministic optimization; Adagrad converges quickly in comparison to tuned gradient descent, which stalls on the ill-conditioned problem. Yet, Adagrad's test performance depends strongly on the step-size chosen and never out-performs the min-norm solution. The projected variants of Adagard correct for this poor generalization and also converge faster than gradient descent.

\begin{figure}
    \centering
    \includegraphics[width=0.9\textwidth]{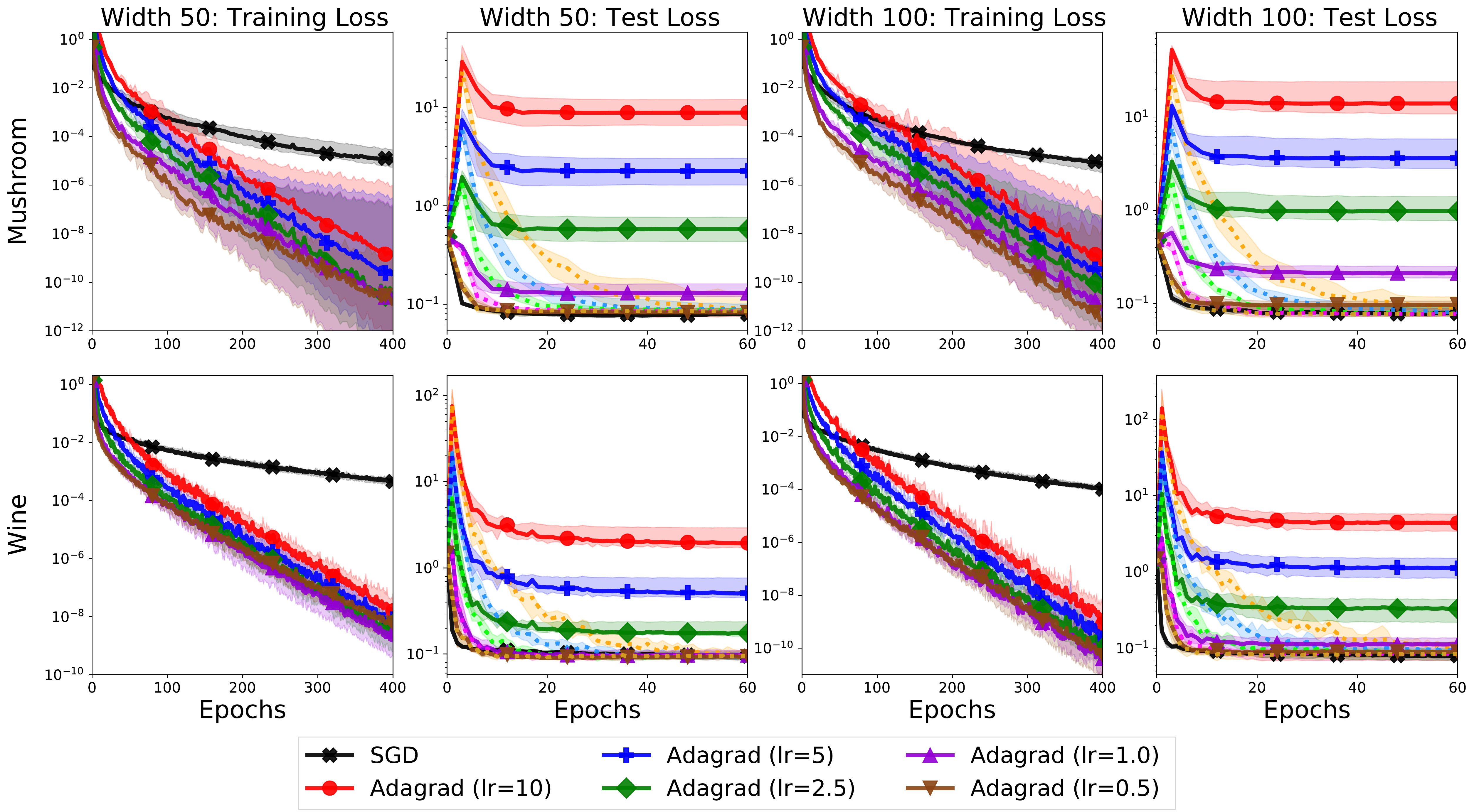}
    \caption{Performance of SGD and Adagrad for regression on the \texttt{mushroom} and \texttt{wine} datasets using squared-loss and the NTK of one-layer networks with 50 and 100 hidden units. We use the largest step-size for which SGD converges and consider a range of step-sizes for Adagrad. For each step-size, we plot as a dotted line the test loss (training loss is unchanged) for a variant of Adagrad where the model parameters are projected onto the span of the training data after every iteration. Tuned SGD stalls on the training loss but generalizes well, while Adagrad's generalization depends strongly on step-size. Projecting onto the data span corrects this behavior.  }
    \label{fig:ntk_mushroom_wine}
\end{figure}

\begin{figure}
    \centering
    \includegraphics[width=0.9\textwidth]{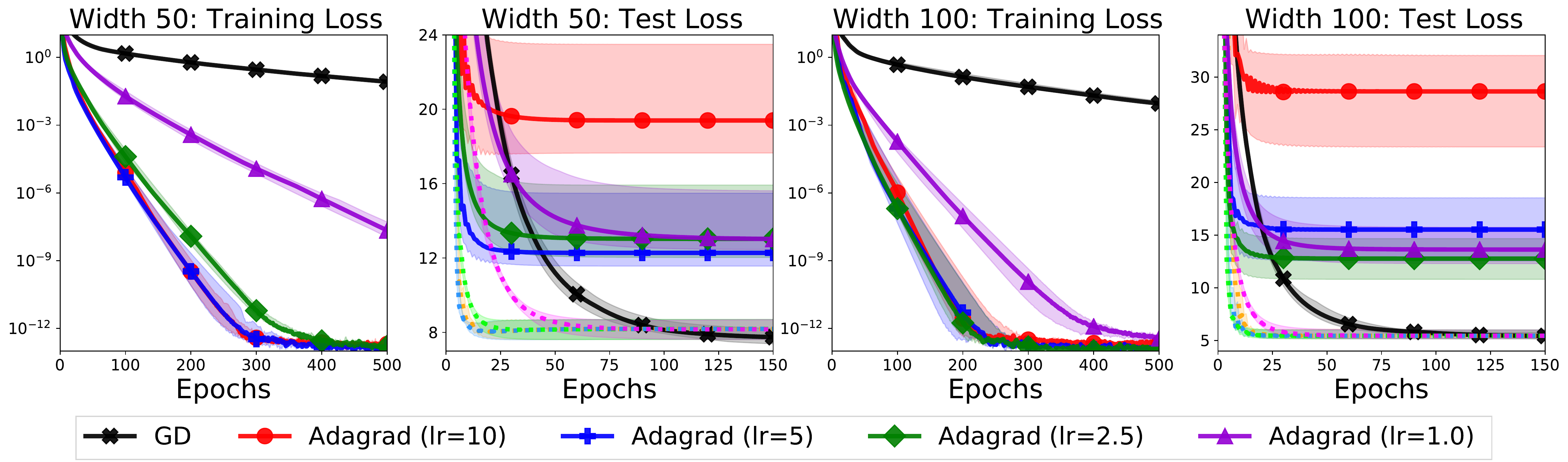}
    \caption{Ablation for synthetic regression problem using NTK features. Unlike in Figure~\ref{fig:ntk_synthetic_regression}, we consider batch optimization. Moreover, for each step-size, we plot as a dotted line the test loss (training loss is unchanged) for a variant of Adagrad where the model parameters are projected onto the span of the training data after every iteration. We see that deterministic optimization does not change Adagrad's varied generalization performance. Projecting onto the data span ensures comparable generalization to GD. }
    \label{fig:ntk_synthetic_deterministic}
\end{figure}

\subsubsection{Verification of Theoretical Results}
\label{app:regression_verification}

Now we verify our theoretical results for over-parameterized regression with several experiments on synthetic problems.

\textbf{Lemma~\ref{lemma:unique-span-gen}}: Figure~\ref{fig:unique-span-gen} examines the convergence of PGD with randomly generated preconditioners in the mini-batch and batch settings. For each generated preconditioner $\mxp$, we compute the solution to the normal equations
\[ w^*_p  = \mxp \mx\transpose (\mx \mxp \mx\transpose)^{-1} \vy, \]
and plot convergence of the iterates generated by PGD (with $\mxp$) to the solution $w^*_p$.
The convergence of gradient descent to the min-norm solution is shown as a baseline. 
We clearly see that PGD converges to the $\mxp^{-1}$-norm least squares solution as established by Lemma~\ref{lemma:unique-span-gen}.

\begin{figure}
    \centering
    \includegraphics[width=0.9\textwidth]{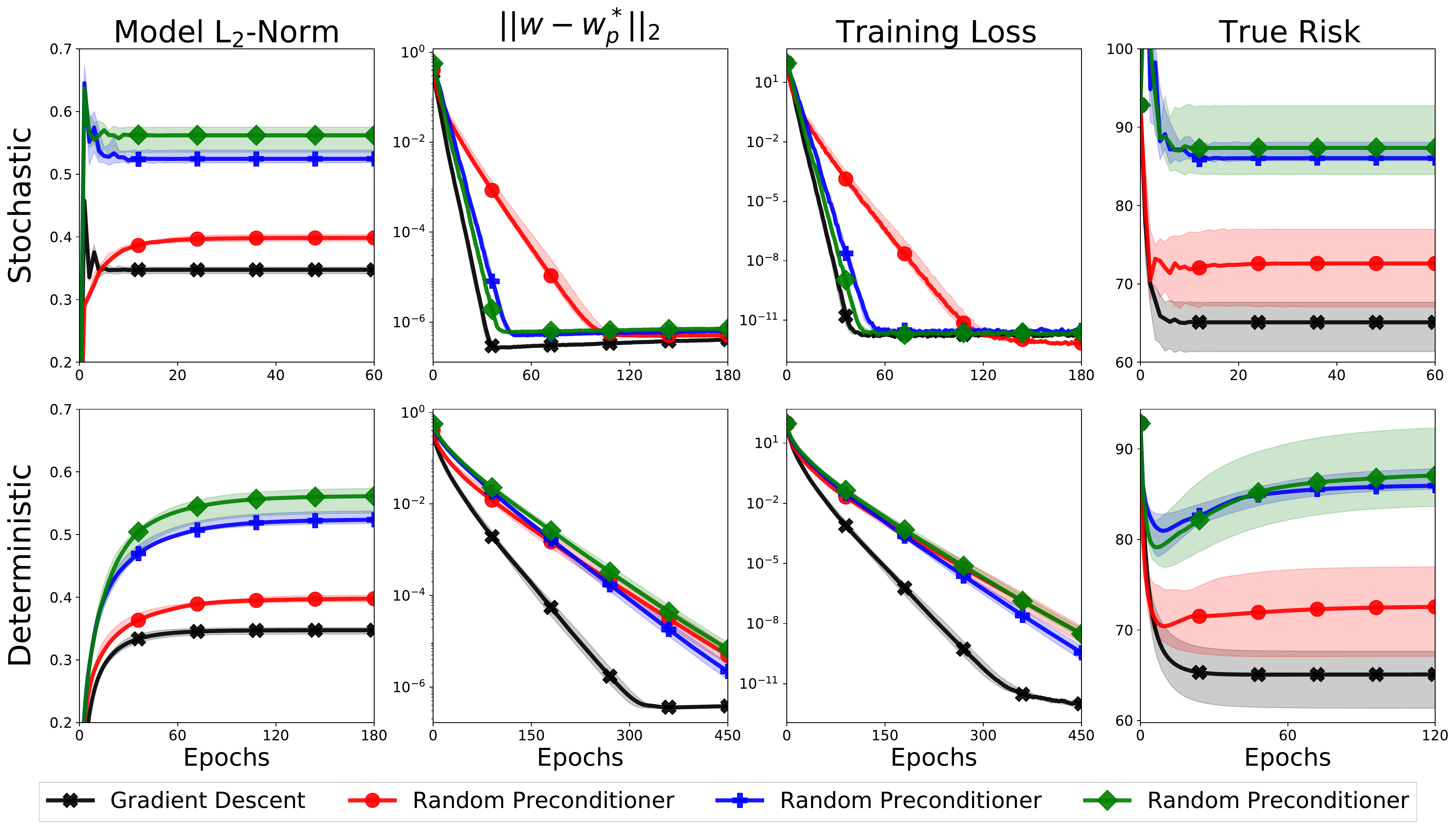}
    \caption{Experimental validation of Lemma~\ref{lemma:unique-span-gen}. Each PGD method uses a randomly generated diagonal preconditioner $\mxp$. The metric $\norm{w - w_P}$ is the $\ell_2$ distance of the current weight vector $w$ to $w_p$ -- the interpolating solution with minimum $\norm{\cdot}_{\mxp^{-1}}$ norm. Each PGD method converges to the min-norm solution in its preconditioner $\mxp^{-1}$, as predicted. }
    \label{fig:unique-span-gen}
\end{figure}

\textbf{Lemmas~\ref{lemma:newton} and \ref{lemma:adagrad}}:  Figure~\ref{fig:lemmas_newton_adagrad} shows convergence of Newton's method and full-matrix Adagrad for a synthetic regression problem. Unlike all other experiments with synthetic regression data, we generate a well-conditioned dataset to avoid complications with evaluating the Hessian. We see that Newton's method and full-matrix Adagrad remain in the span of the data and converge to the min-norm solution as predicted.

\begin{figure}
    \centering
    \includegraphics[width=0.9\textwidth]{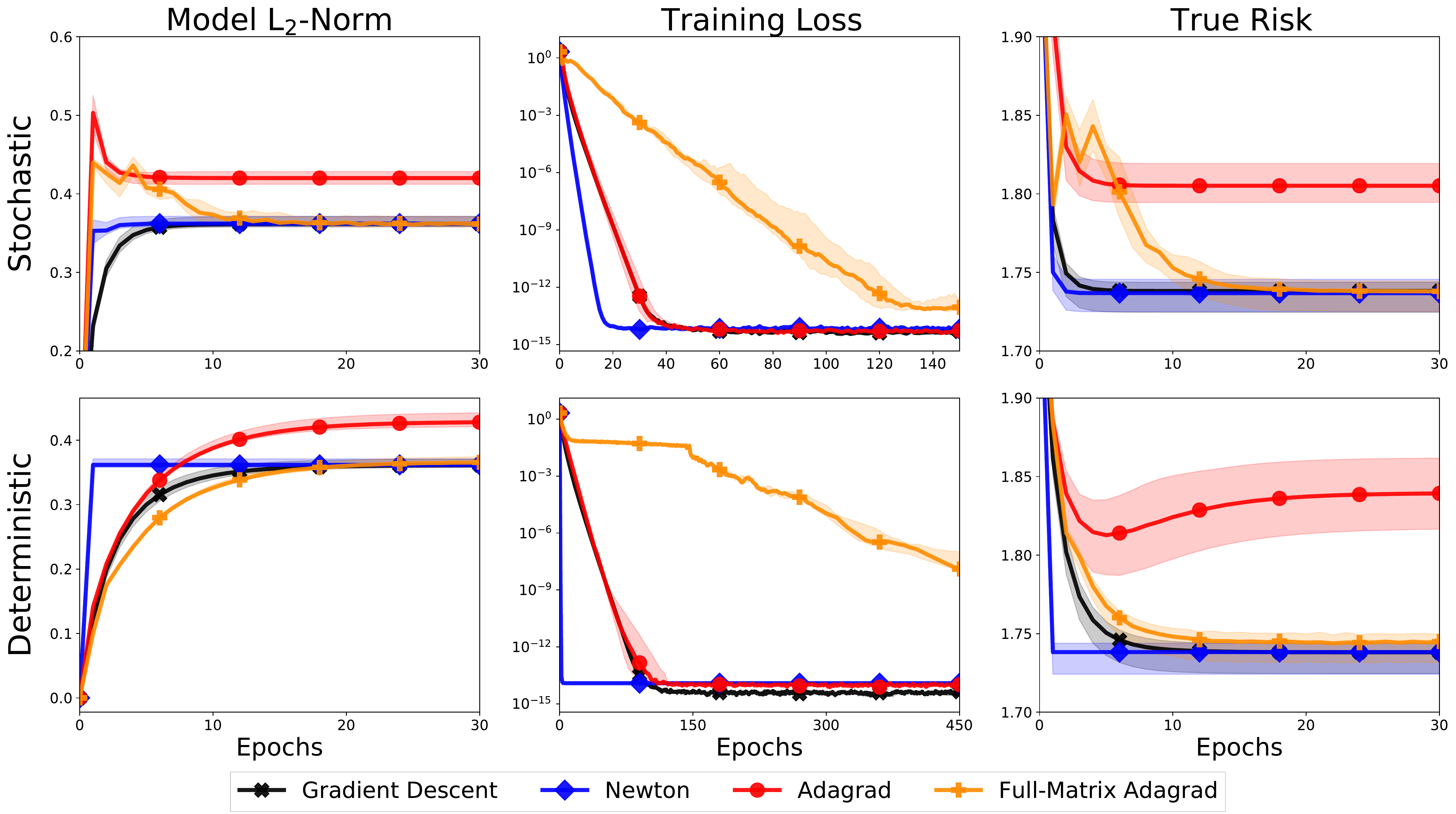}
    \caption{Experimental validation of Lemmas~\ref{lemma:newton} and \ref{lemma:adagrad} with batch and mini-batch gradients. Newton's method and full-matrix Adagrad converge to the min-norm solution, while diagonal Adagrad does not. }
    \label{fig:lemmas_newton_adagrad}
\end{figure}

\textbf{Lemma~\ref{lemma:opt-P-reduction}}: Now we experimentally confirm that every interpolating solution $\wopt$ has a corresponding preconditioner $\mxp$ for which PGD with $\mxp$ converges to $\wopt$. We run the Adam~\cite{kingma2014adam}, Adagrad~\cite{duchi2011adaptive}, and Coin Betting~\cite{orabona2017training} optimizers until convergence and then construct the corresponding preconditioners. Figure~\ref{fig:opt-P-reduction} shows the optimization and generalization performance of both the original optimizers and their associated PGD methods (dashed lines). The PGD methods converge to interpolating models with the same $\ell_2$ norm and true risk as the original optimizers and show similar final training loss.

\begin{figure}
    \centering
    \includegraphics[width=0.9\textwidth]{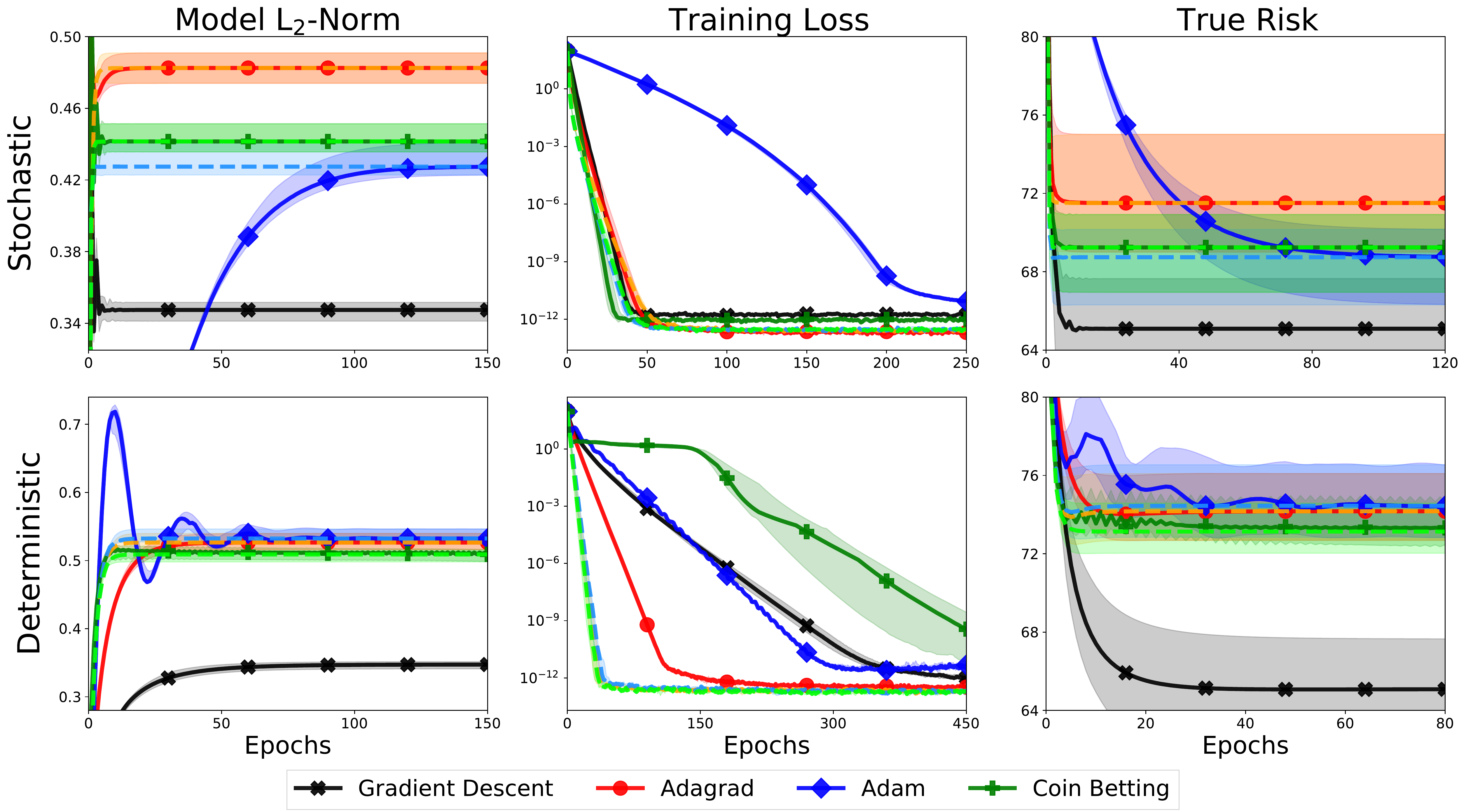}
    \caption{Experimental validation of Lemma~\ref{lemma:opt-P-reduction}. For each optimizer, the corresponding dashed line shows the convergence of PGD with a preconditioner constructed from the interpolating solution found by that optimizer as in Lemma~\ref{lemma:opt-P-reduction}. The three PGD methods converge to the same solutions as the original optimizers as predicted. }
    \label{fig:opt-P-reduction}
\end{figure}

\textbf{Proposition~\ref{prop:proj-span}}: Figure~\ref{fig:proj-span} explores the effects of projecting onto the span of the training data during optimization. We use the Adagrad, Adam, and Coin Betting optimizers and compare projecting the model parameters onto $\text{span}(\mx\transpose)$ at every iteration with the "default" algorithms. We make several observations: (i) the default algorithms show varied generalization performance and are all out-performed by GD, (ii) the projected variants converge to the min-norm solution and obtain test loss comparable to GD, and (iii) the improved generalization of the projected methods is consistent across the stochastic and deterministic cases.

\begin{figure}
    \centering
    \includegraphics[width=0.9\textwidth]{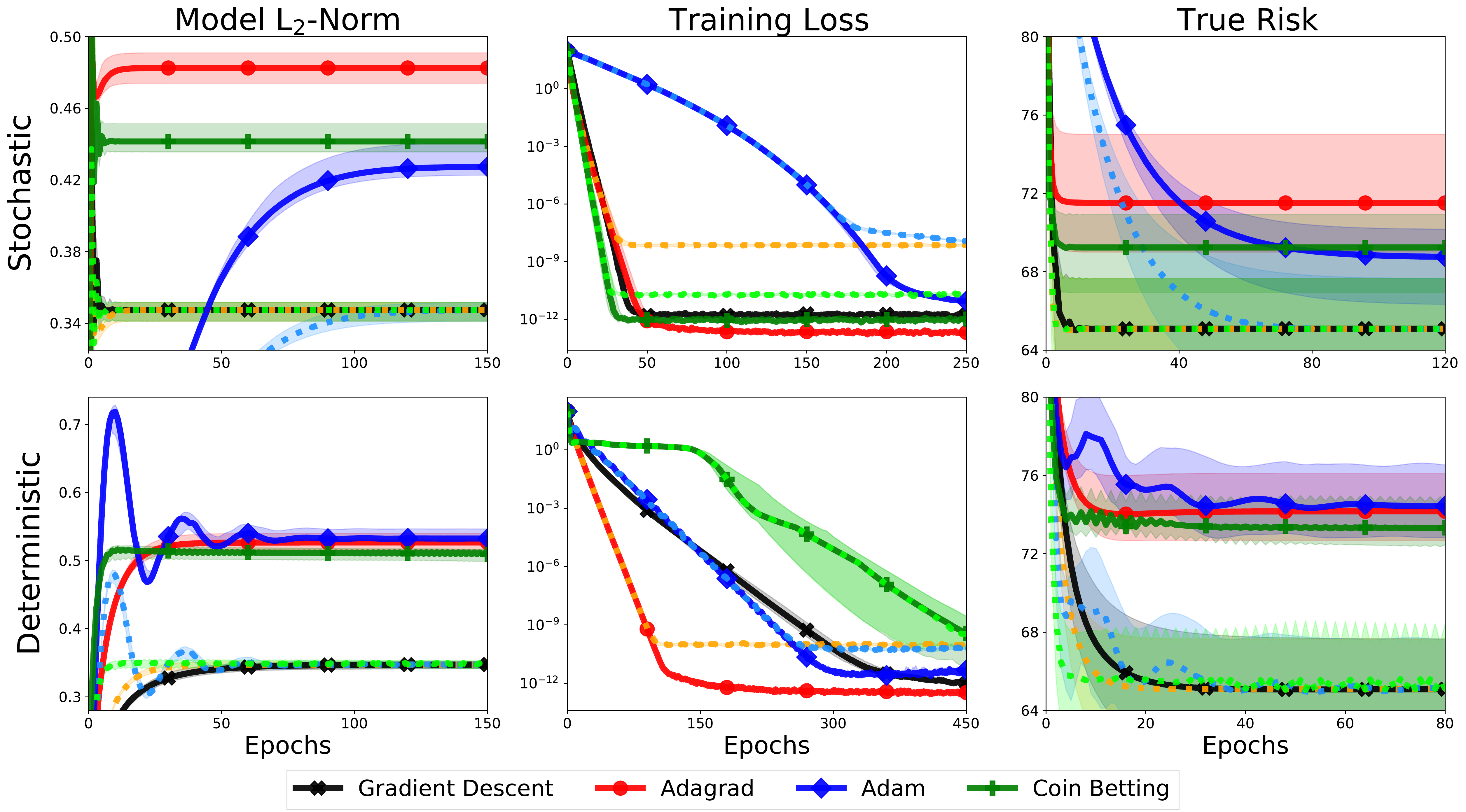}
    \caption{Effects of projecting onto $\text{span}(\mx\transpose)$ after every iteration when using the Adagrad, Adam, and Coin Betting optimizers. Solid lines with markers denote standard optimizers, while dotted lines shows the same algorithms with projections onto the span of the data at each iteration. We note that projecting onto the data span introduces a precision floor for the training loss, but otherwise does not affect optimization dynamics of the original optimizers. Projected methods converge to the min-norm solution and display similar generalization to gradient descent, as predicted by Proposition~\ref{prop:proj-span}. }
    \label{fig:proj-span}
\end{figure}

\subsubsection{Improved Generalization via Better Preconditioners}
\label{app:best-p}

In this section, we investigate choosing a preconditioner $\mxp$ to minimize the bound on the excess risk given in Lemma~\ref{lemma:risk}. We consider minimizing the exact bound as well as two upper-bounds on the excess-risk bound that do not require full knowledge of the true model $w^*$. Starting from Lemma~\ref{lemma:risk}, we have
\begin{align}
R(\vw_{PGD}) \coloneqq \E_{\vx, \vep} [\vx\transpose (\vw_{PGD} - \wstar) ]^2 &\leq (\vw_0 - \wstar)\transpose \mb_\mxp\transpose \mathbf{\Sigma} \mb_\mxp(\vw_0 - \wstar) + \sigma^2 tr(\mc_\mxp) & \tag{Exact} \nonumber\\
&\leq \norm{\vw_0 - \wstar} \norm{\mb_\mxp\transpose \mathbf{\Sigma} \mb_\mxp}_{2} + \sigma^2 tr(\mc_\mxp)~\label{eq:operator_risk_bound} & \tag{Operator} \\
&\leq \norm{\vw_0 - \wstar} \norm{\mb_\mxp\transpose \mathbf{\Sigma} \mb_\mxp}_{F} + \sigma^2 tr(\mc_\mxp).~\label{eq:frobenius_risk_bound} & \tag{Frobenius} 
\end{align}
When $\vw_0 = 0$, these upper-bounds only require knowledge of the norm of the true model, rather than the true model itself. They are particularly attractive in the noiseless case ($\sigma^2 = 0$), where no knowledge of $w^*$ is required and we need only optimize the two-operator norm or Frobenius norm of the matrix $\mb_\mxp\transpose \mathbf{\Sigma} \mb_\mxp$.

We optimize the exact excess risk bound ("Exact") and upper-bound using the Frobenius norm ("Frobenius") with respect to a diagonal preconditioner $\text{diag}(\mathbf{p})$ as well as a full-matrix preconditioner $\mxp$. The upper-bound using the 2-operator norm ("Operator") is used to learn a diagonal preconditioner only, since evaluating the maximum singular value is computationally expensive. We also consider using PGD with $\mathbf{\Sigma}^{-1}$ as a preconditioner ("Sigma"), which corresponds to natural gradient descent~\cite{amari1998natural}. Lemma~\ref{lemma:RC_Precond} shows that the empirical precision $\hat{\mathbf{\Sigma}}^{-1}$ minimizes an upper bound on the Rademacher complexity for the family of linear classifiers with bounded $\mxp^{-1}$ norm, which suggests that PGD with $\mathbf{\Sigma}^{-1}$ may generalize well. For completeness, we also compare with tuned stochastic gradient descent and Adagrad.

Figure~\ref{fig:optimizing_p} shows the results of optimizing over $P$ in both the noiseless setting ($\sigma^2 = 0$) and the case where $\sigma^2 = 1$. We observe that minimizing the exact bound on the excess risk is highly effective --- especially when using a full-matrix preconditioner. In this latter setting, PGD finds an interpolating solution with near optimal risk. In contrast, the preconditioners obtained by minimizing the upper-bounds given by the Frobenius and 2-operator norms yield interpolating solutions that generalize only as well as the min-norm solution. This suggests that these bounds are too loose to be useful for learning better preconditioners. Such a conclusion is corroborated by Table~\ref{table:optimized_p_risk_bounds}, which shows that the Operator and Frobenius preconditioners do not improve the excess-risk bound over the identity matrix. Lastly, it is highly interesting to note that while natural gradient descent converges very quickly, it obtains the worst generalization performance out of all methods considered. We speculate that this is because of the discrepancy in rank between the empirical and true covariance matrices: $\text{rank}(\mathbf{\hat \Sigma}) \leq n \ll \text{rank}(\mathbf{\Sigma}) = d$.

\begin{figure}
    \centering
    \includegraphics[width=0.9\textwidth]{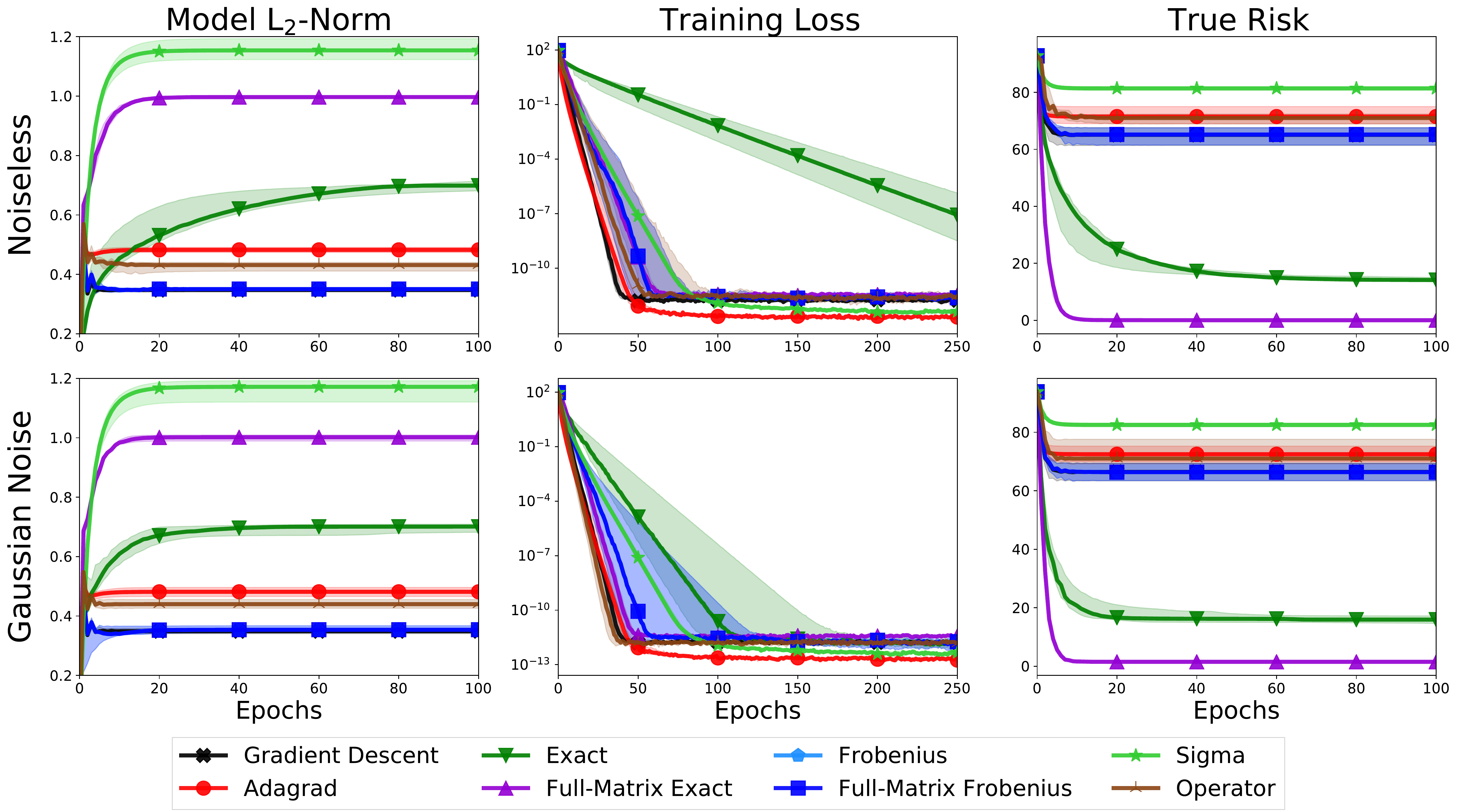}
    \caption{Generalization performance of stochastic PGD where the preconditioner is selected by optimizing bounds on the excess risk. The preconditioner for "Exact" minimizes the excess risk bound given in Lemma~\ref{lemma:risk}, while "Operator" and "Frobenius" minimize upper-bounds~\ref{eq:operator_risk_bound} and \ref{eq:frobenius_risk_bound}, respectively. "Sigma" uses the true precision matrix of the data, $\mathbf{\Sigma}^{-1}$, as the preconditioner. The full-matrix and diagonal preconditioners learned by minimizing the exact excess-risk bound greatly improve generalization performance over the min-norm solution, while "Frobenius" and "Operator" fail to outperform SGD and only minorly improve upon Adagrad. }
    \label{fig:optimizing_p}
\end{figure}


\begin{table}[]
    \centering
    \begin{tabular}{c|c c c c c c c}
    Noise Level & Identity & Exact & Exact (FM) & Frobenius & Frobenius (FM) & Operator & Sigma\\ \hline
    0 & $64.45 \pm 3.39$ & $14.13 \pm 1.07$ & $0.02 \pm 0.02$ & $64.47 \pm 3.37$ & $64.7 \pm 3.49$ & $71.34 \pm 2.64$ & $81.31 \pm 0.85$ \\
    1 & $64.82 \pm 3.39$ & $15.05 \pm 1.66$ & $0.57 \pm 0.02$ & $64.85 \pm 3.39$ & $65.28 \pm 3.4 $& $71.66 \pm 4.77$ & $81.45 \pm 0.85$ \\
    \end{tabular}
    \caption{Mean and standard deviations for evaluations of the excess risk bound in Lemma~\ref{lemma:risk} at preconditioners obtained by minimizing the excess risk bound ("Exact"), upper-bounds on this bound using the 2-operator and Frobenius norms ("Operator" and "Frobenius") and the inverse covariance of the data-generating distribution ("Sigma"). FM denotes that we optimize over a full-matrix preconditioner (default is diagonal matrix). Directly minimizing the bound from Lemma~\ref{lemma:risk} is highly effective and leads to near-optimal risk for the solution found by PGD, while the operator and Frobenius learning rules do not improve upon the identity matrix. }
    \label{table:optimized_p_risk_bounds}
\end{table}

\subsection{Experimental Details}

In this section we give additional details for the regression experiments presented in the main paper and the additional results shown in Appendix~\ref{app:regression_additional_results}.

\subsubsection{Datasets}
\label{sec:synth_reg_data}
\textbf{Synthetic Regression Datasets}: We generate synthetic regression problems by first sampling a normalized ground-truth weight vector \( \mathbf{w}^* = \mathbf{v} / \norm{\mathbf{v}}, \) where \( \mathbf{v} \sim \mathcal{N}(0,1) \). We then sample features from a diagonal, mean-zero Gaussian distribution and computing targets as the inner product with the ground-truth vector:
\[ \mathbf{x}_i \sim \mathcal{N}(0,\mathbf{\Sigma}), \quad y_i = \mathbf{x}_i^\top \mathbf{w}^* + \epsilon, \] 
where \( \epsilon \sim \mathcal{N}(0,\sigma^2) \) is the target noise. To control the hardness of the optimization problem, we generate ill-conditioned, positive-definite covariances by perturbing the identity matrix with squared Gaussian noise. In particular, we compute 
\[ \mathbf{\Sigma} = \mathbf{I} + \text{Diag}(\mathbf{\delta}^2), \quad \quad \mathbf{\delta} \sim \mathcal{N}(0, \zeta^2 \mathbf{I}). \]
The setting $\zeta^2 = 10$ is used in all experiments but Figure~\ref{fig:lemmas_newton_adagrad}, where $\zeta^2 = 1$ is chosen.
We use a training set of 100 observations in all synthetic regression experiments.
For a model $\mathbf{w}$, we report the true risk
\[ \E_{\mathbf{x}, y}\left[ \mathbf{x}^\top \mathbf{w} - y \right] = (\mathbf{w} - \mathbf{w}^*)^\top \mathbf{\Sigma}(\mathbf{w} - \mathbf{w}^*) + \sigma^2, \]
instead of using a test set when using the original data features. 
When using features from a neural tangent kernel, we instead sample a test of 400 examples and use this to evaluate the model performance.
We repeat all experiments ten times with the \emph{same} ground-truth weights $w^*$ and data covariance $\mathbf{\Sigma}$ to control for randomness in the generation of the training and test sets. Figures show the median and inter-quartile range.
All stochastic experiments on synthetic regression use mini-batches of $5$ examples.

\textbf{UCI Datasets}: We use the \texttt{wine} and \texttt{mushroom} datasets from the UCI dataset repository~\cite{Dua:2019}. We use the training and validation splits\footnote{Publicly available at \url{http://persoal.citius.usc.es/manuel.fernandez.delgado/papers/jmlr/}.} created by Fern{\'a}ndez-Delgado et al.~\cite{fernandez2014we} and used by Arora et al.~\cite{arora2019harnessing}.
In all regression experiments, we randomly subset fifty examples from the training set to fit our models and evaluate on the full validation split. All experiments are repeated ten times to control for the effects of sub-setting the training set. Figures show the median and inter-quartile range.
Stochastic experiments use mini-batches of two examples.

\textbf{Computing Neural Tangent Kernels}: We standardize the data before computing the neural tangent kernel. Neural networks are initialized with standard normal weights and use the so-called NTK parameterization~\cite{park2019effect, jacot2018neural} as well as sigmoid activations. We use the BackPACK library~\cite{Dangel2020BackPACK} to compute the Jacobian of the network output.

\subsubsection{Regression with Neural Tangent Kernels}

Here we provide specific details for the experiments shown in Figures~\ref{fig:ntk_synthetic_regression}, \ref{fig:ntk_mushroom_wine}, and \ref{fig:ntk_synthetic_deterministic}. For Figures~\ref{fig:ntk_synthetic_regression} and \ref{fig:ntk_synthetic_deterministic}, we generate a training set of 100 examples with dimension $d = 20$ as described above. We use minibatches of $5$ examples in the stochastic case.
Figure~\ref{fig:ntk_mushroom_wine} considers the \texttt{mushroom} and \texttt{wine} datasets, where we use training sets of size $50$ and mini-batches of $2$ examples. We experiment with neural tangent kernels for single-layer feed-forward neural networks of width $50$ and $100$. For all three datasets, we plot the step-size for SGD which maximized the convergence rate on the training loss while still converging on all ten repeats. The grid-search uses the following grid:
\[ \eta \in \{20, 10, 5, 2.5, 0.1, 0.5, 0.25, 0.1\}. \]
The final step-size chosen for all three datasets was $\eta = 2.5$.
We use the same step-size grid for Adagrad and plot all step-sizes which do not diverge or under-fit.

\subsubsection{Verification of Lemma~\ref{lemma:unique-span-gen}}
\label{app:unique_span_details}
Here we provide specific details for the experiment shown in Figure~\ref{fig:unique-span-gen}.
We generate a synthetic regression dataset as described above and
sample three random, diagonal positive-definite preconditioners as \( \mxp = \text{diag}(\textbf{v}^2) \), where \( \textbf{v} \sim \mathcal{N}(0, \mathbf{I}) \). We select step-sizes individually for each restart by grid-search over the set 
\begin{align}
   \eta \in \{ 0.1, 0.01, 0.005, 0.001, 5 \times 10^{-4}, 1 \times 10^{-4}, 5 \times 10^{-5}, 1 \times 10^{-5}, 5 \times 10^{-6}, 1 \times 10^{-6} \}. \label{eq:step-size-grid}
\end{align}
Each step-size is picked to minimize the average of the training loss halfway through and at the at the end of training, subject to the constraint that the optimizer does not diverge initially (and then recovers) or at the end of optimization.

\subsubsection{Verification of Lemmas~\ref{lemma:newton} and \ref{lemma:adagrad}}
Here we provide specific details for Figure~\ref{fig:lemmas_newton_adagrad}, which verifies that Newton's method and full-matrix Adagrad converge to the min-norm solution for over-parameterized linear regression problems.
Unlike the other synthetic regression experiments, here we generate a well-conditioned problem ($\zeta^2 = 1$; see above) to avoid complications when computing the Newton step. As in other experiments, we choose step-sizes independently for each repeat (i.e. randomly sampled training set) by grid-search of the following set
\[ \eta \in \{1, 0.1, 0.01, 0.005, 0.001, 5 \times 10^{-4}, 1 \times 10^{-4}, 5 \times 10^{-5}, 1 \times 10^{-5}, 5 \times 10^{-6}, 1 \times 10^{-6} \}. \]
We use the same rule to select step-sizes as described in Appendix~\ref{app:unique_span_details}.

\subsubsection{Verification of Lemma~\ref{lemma:opt-P-reduction}}
Here we provide specific details for Figure~\ref{fig:opt-P-reduction}, which verifies that each interpolating solution has a corresponding preconditioner $P$ such that PGD with this preconditioner converges to the same solution.
We generate a synthetic regression dataset as described above and consider the interpolating solutions found by three optimizers: Adagrad, Adam, and Coin Betting. 
We repeat the step-size selection procedure given in Appendix~\ref{app:unique_span_details} to select step-sizes for these optimizers and then run them until they have converged to interpolating solutions $\wopt$.
We then compute the corresponding preconditioners as described in Lemma~\ref{lemma:opt-P-reduction}.
However, rather than sampling a random vector in the data span, we use the setting
\[ \mathbf{\nu} = \frac{\norm{\wopt}}{\norm{\mathbf{X}^\top \mathbf{y}}} \mathbf{X}^\top \mathbf{y}, \]
which can be interpreted as a normalized, one-step approximation to the min-norm solution.
In practice, this leads to well-conditioned preconditioners, unlike naive random sampling.
We repeat the step-size grid-search to select step-sizes for the PGD optimizers using these preconditioners.

\subsubsection{Verification of Proposition~\ref{prop:proj-span}}
Figure~\ref{fig:proj-span} considers improving the generalization of popular optimizers like Adam, Adagrad, and Coin Betting, which do not converge to the min-norm solution, by projecting their iterations onto $\text{span} (\textbf{X}^\top)$ after each iteration.
Once again, we generate a synthetic regression dataset and use a grid-search to select step-sizes for each algorithm independently on each sampled training set. 
We use the step-size grid in Equation~\ref{eq:step-size-grid}.
The same step-size is used for the projected and unprojected variants of all optimizers. 
Projections onto the data span do not change the optimization dynamics for linear models, since any model components lost in the projection are orthogonal to the gradient.

\subsubsection{Improved Generalization via Better Preconditioners}

We optimize the excess-risk bound and both upper-bounds (\ref{eq:operator_risk_bound} and \ref{eq:frobenius_risk_bound}) on the excess-risk using gradient descent with a fixed step-size. We select the step-size independently for each randomly generated train/test split by grid search. The grid considered is 
\[ \eta \in \{ 5 \times 10^{-1}, 10^{-1}, 10^{-2}, 5 \times 10^{-3}, 10^{-3}, 10^{-4} \}, \]
and final step-sizes are chosen to minimize the value of the optimized risk bound.
We run the optimization procedure for $7500$ iterations or until the gradient norm is smaller than $1 \times 10^{-7}$.
Note that we optimize the risk bound using the combined train/test data, as this does not use knowledge of the test labels. This can be viewed as a form of unsupervised learning, where information from the unlabaled test examples, data covariance $\mathbf{\Sigma}$, and target variance $\sigma^2$ are leveraged to obtain a better preconditioner.
We select step-sizes for PGD with the optimized preconditioners using a search over the grid
\[ \eta \in \{10^{-1}, 10^{-2}, 5 \times 10^{-3}, 10^{-3}, 5 \times 10^{-4}, 10^{-4}, 5 \times 10^{-5}, 10^{-5}, 5 \times 10^{-6}, 10^{-6}, 5 \times 10^{-7}, 10^{-7}\}. \]
We choose final step-sizes as described above in Appendix~\ref{app:unique_span_details}. In practice, we found that the preconditioners obtained by optimizing the exact bound on the excess risk were highly ill-conditioned or indefinite and so required very small step-sizes. Results are provided only for the stochastic setting, as we found the deterministic setting to be virtually identical. We use mini-batches of five examples for all optimizers.
\section{Experiments for linear classification}
\label{app:exp-classification}


This section presents further experimental results for under and over-parameterized linear classification problems. In Appendix~\ref{app:equiv_prec_cls}, we verify the construction of the equivalent preconditioner in Lemma~\ref{lemma:exp-equivalent-precond}. In Appendix~\ref{app:sqh_ball_cls}, we verify that when minimizing the the squared-hinge loss for datasets with a known margin, projections onto the data-span and the $\ell_2$ ball ensure convergence to the max-margin solution. Finally, in Appendix~\ref{app:relative-margin} we provide experimental details for preconditioned gradient descent converging to the  maximum relative margin solution in Figure~\ref{fig:sigma_prec_gd}, and in Appendix~\ref{app:overp_cls}, we presents additional results for over-parameterized linear classification with an exponential-tailed loss.

\subsection{Verification of Lemma \ref{lemma:exp-equivalent-precond}}
\label{app:equiv_prec_cls}

In this section we empirically validate the result of 
Lemma~\ref{lemma:exp-equivalent-precond}: every interpolating solution $\wopt$ with zero training error has a corresponding preconditioner $\mxp$ for which PGD with $\mxp$ converges to $\wopt$. We run the Adam, Adagrad, and Coin Betting optimizers initialized at the origin, and then construct the corresponding preconditioners. We use the exponential loss on a synthetic dataset of 500 training points using 50 features. Figure~\ref{fig:app_cls_equiv_precond} shows the optimization performance of both the original optimizers and their associated PGD methods (dashed lines). The PGD methods converge to models with full training accuracy and the same direction as the original optimizers (right-most plot). Notice that some of the original optimizers (and their preconditioned equivalent) converge to solutions which do not align with the max $\ell_2$ margin solution. 

\begin{figure}[!h]
    \centering
    \includegraphics[width=0.9\textwidth]{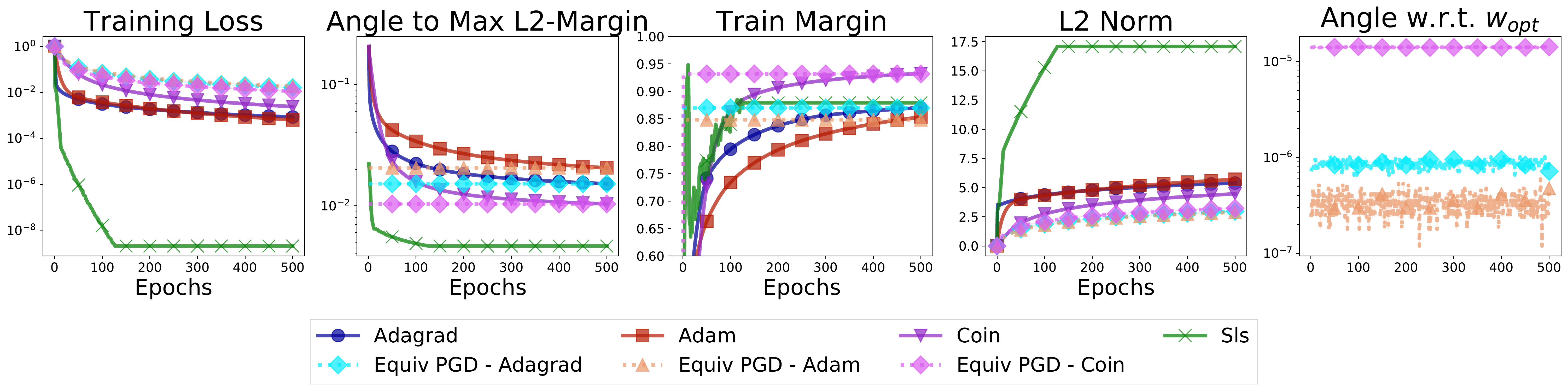}
    \caption{Experimental validation of Lemma~\ref{lemma:exp-equivalent-precond}. For each optimizer, the corresponding dashed line shows the convergence of PGD with a preconditioner constructed from the interpolating solution found by that optimizer as in Lemma~\ref{lemma:exp-equivalent-precond}. The three PGD methods converge in \textit{direction} to the same solutions as the original optimizers, as predicted.}
    \label{fig:app_cls_equiv_precond}
\end{figure}

\subsection{$\ell_2$ ball projection for the squared hinge loss}
\label{app:sqh_ball_cls}

In Section~\ref{sec:experiments}, we explored the effect of projecting onto an appropriate space in order to improve the generalization performance of a linear classifier under exponential-tailed losses. In this section we explore an analogous technique for the case of the squared hinge loss: projecting the iterates of an optimizer onto the $\ell_2$ ball of radius $1/\gamma$, where the margin $\gamma$ is assumed to be known, implies convergence to the $\ell_2$ max-margin solution.

We construct synthetic classification datasets in which the samples from each class follow Gaussian distribution with different parameters for each class. We ensure that the training set is linearly separable and compute the maximum $\ell_2$ margin attainable on it in order to perform the projection. We use a training set of 500 points and explore under- and over-parameterized settings with 100 and 1000-dimensional features, respectively.

In Figure~\ref{fig:app_sqh_ball} we present several learning rate configurations for each optimizer, as well as the effect that the projection onto the $\ell_2$ ball (and data span for the over-parameterized case) have in relation to the convergence to the $\ell_2$ max-margin solution. 

We observe that even though the projection might slow down the convergence in terms of the training loss for large step-sizes, it improves the speed for the accuracy on the training set. Moreover, for every optimizer and step-size configuration, the projection onto the ball improves the convergence towards the max-margin solution compared to the un-projected optimizer. In the right-most plots, all the lines corresponding to projected optimizers overlap at a norm of $1/\gamma$. 
\begin{figure}[!h]
    \centering
    \includegraphics[width=0.9\textwidth]{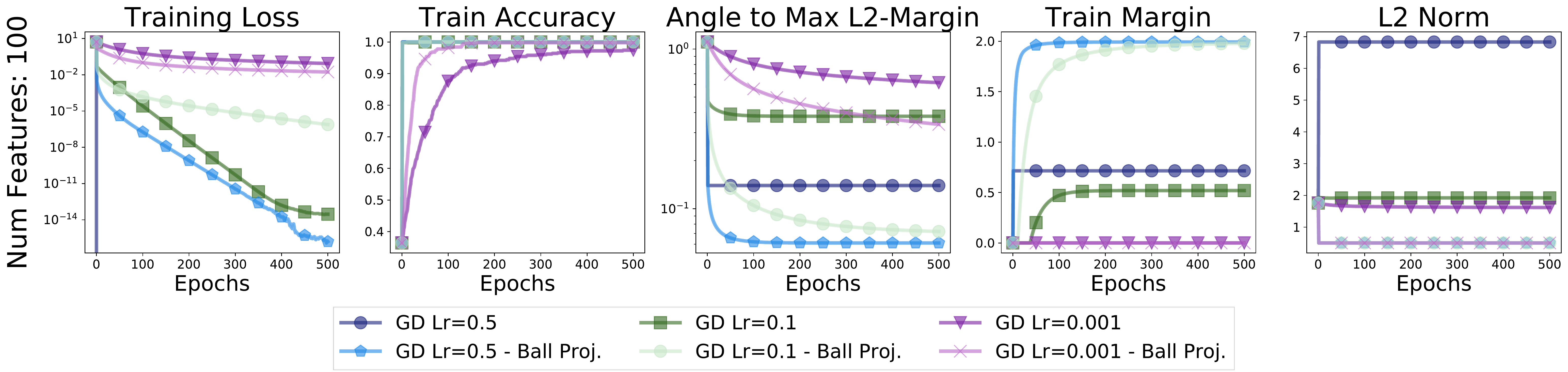}
    \includegraphics[width=0.9\textwidth]{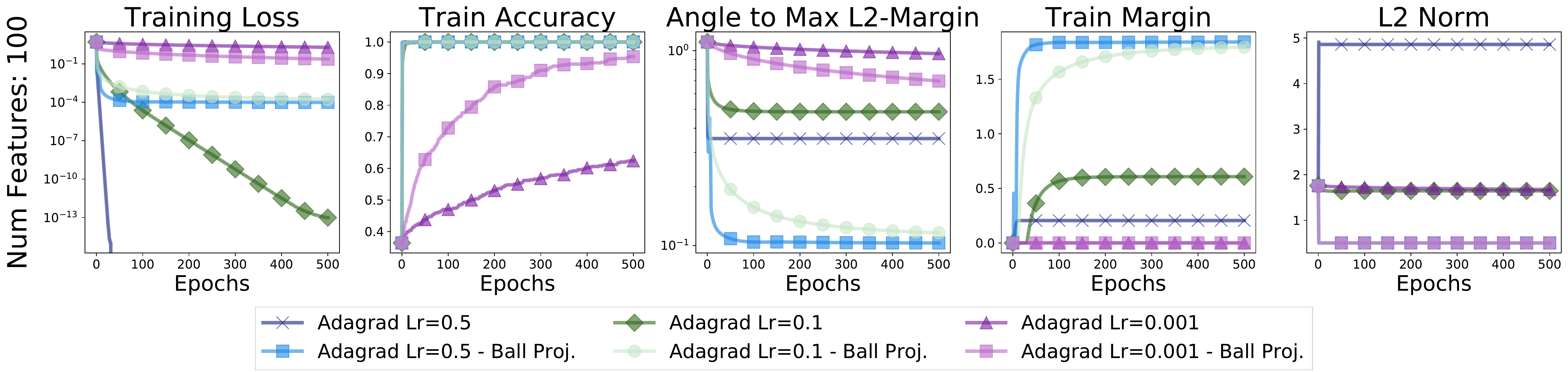}
    \includegraphics[width=0.9\textwidth]{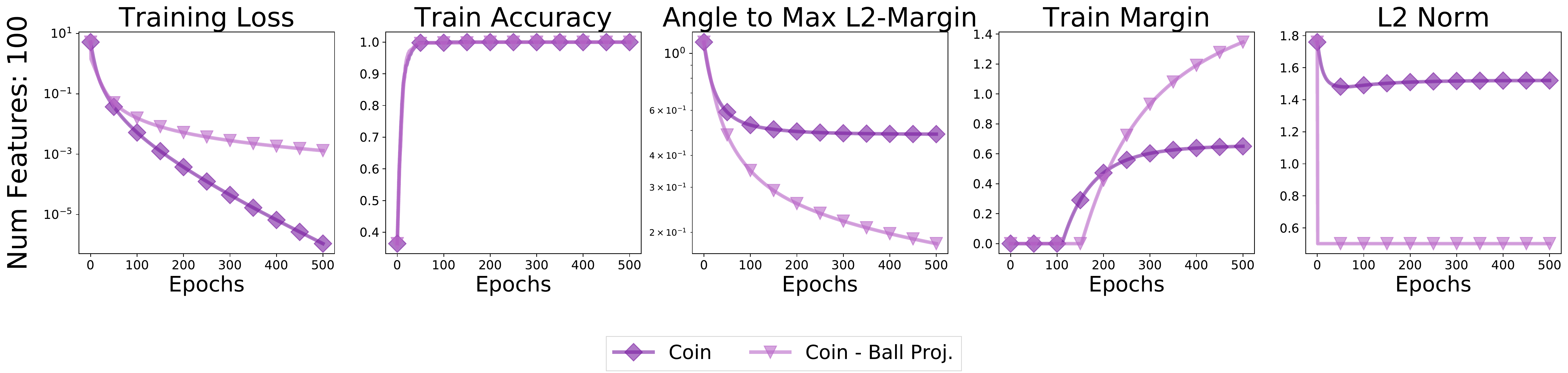}
    \caption{Projected and unprojected variants of several optimizers onto the $\ell_2$ ball of (assumed known) radius $1/\gamma$. }
    \label{fig:app_sqh_ball}
\end{figure}

Finally, in the over-parameterized setting, in Figure~\ref{fig:app_project_sqh_ball}, projection onto the span of the data and subsequent ball projection provides further improvements in terms of angle to the max-margin solution and the margin measured on the training points. 
\begin{figure}
    \centering
    \includegraphics[width=0.9\textwidth]{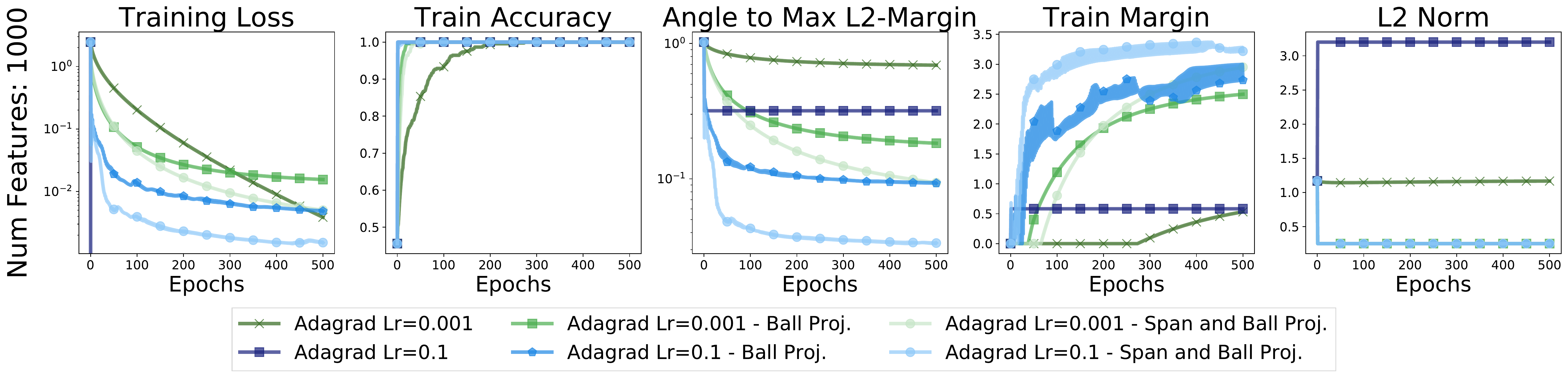}
    \caption{Projected and unprojected variants of Adagrad optimizers onto the data-span and $\ell_2$ ball of (assumed known) radius $1/\gamma$.}
    \label{fig:app_project_sqh_ball}
\end{figure}

\subsection{Maximum relative margin}
\label{app:relative-margin}

We replicate the synthetic setting of of \cite{shivaswamy2010maximum} to study the convergence of preconditioned gradient descent to the maximum relative margin solution. We make minor modifications to the parameters of the originally proposed distribution to allow for separability under a homogeneous linear model. We consider a 2-dimensional classification dataset in which the distributions of each of the classes are Gaussian with matching covariance but different means.
\[ \mathbf{x} \,|\, y=1 \sim \mathcal{N}(\mathbf{\mu_+},\mathbf{\Sigma}) \hspace{1cm} \mathbf{x} \,|\, y=-1 \sim \mathcal{N}(\mathbf{\mu_{-}},\mathbf{\Sigma}) \hspace{1cm} \mathbf{\mu_+} =  -\mathbf{\mu_-} = -\frac{2}{5} [[19, 13]]^\top \hspace{1cm} \mathbf{\Sigma} = \frac{1}{4} \begin{bmatrix}
    17 & 16.9 \\
    16.9 & 17
  \end{bmatrix} \] 

\begin{figure}[!h]
  \begin{center}
    \includegraphics[trim=0 0 0 10, clip, width=0.3\textwidth]{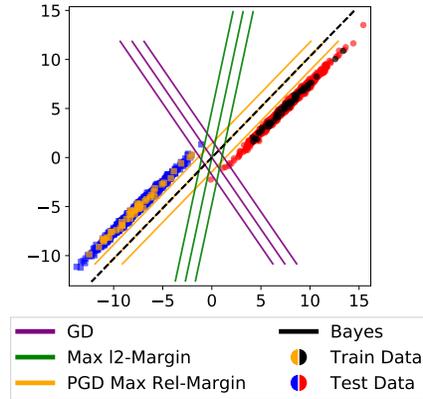}
  \end{center}
  \vspace{-3.5ex}
  \caption{Comparing solutions with maximum $\ell_2$ and relative margin performance on a synthetic mixture of Gaussians dataset. Incorporating the covariance of the data in the form of a preconditioning of the optimization, maximizes the relative margin and results in a solution which is better aligned with the Bayes optimal classifier. Note how, even though the maximum margin classifier is reasonable for the training set, it is ``agnostic'' to directions in which the data tends to spread.}
  \label{fig:sigma_prec_gd_app}
\end{figure}

Figure \ref{fig:sigma_prec_gd_app} displays again the training and test datasets as well as the classifiers presented in the main paper. The training set contains 100 points (orange and black), while the test set is formed of 1900 instances (blue and red). Training statistics for preconditioned gradient descent under the exponential loss are presented in Figure~\ref{fig:app_sigma_prec_training}. The solution found by gradient descent stagnates after a few iterations due to the nature of the loss. However, the misalignment of the gradient descent solution causes it to mis-classify one of the test points. This, coupled with the norm of the iterates of gradient descent, induces a large test loss. Note that the mis-classified test point which represents an outlier for the GD and max $\ell_2$-margin solutions appears precisely in the directions ``corrected'' by the preconditioning via the covariance matrix.

\begin{figure}[!h]
\centering
\includegraphics[width=1\textwidth]{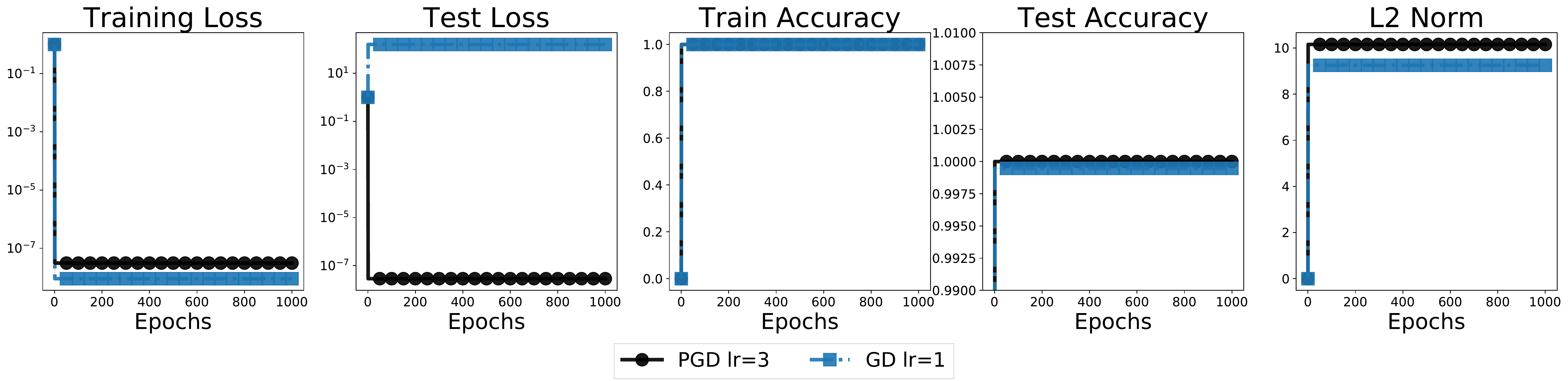}
\caption{Performance of gradient descent with preconditioner $\mathbf{\Sigma}^{-1}$ on a synthetic experiment with Gaussian class-conditionals.}
\label{fig:app_sigma_prec_training}
\end{figure}

\subsection{Over-parameterized linear classification}
\label{app:overp_cls}

The features and labels for this experiment are generated as the synthetic regression dataset presented in Appendix \ref{sec:synth_reg_data} using 1-dimensional targets. The targets are then binarized depending on their sign. We explore different levels of over-parametrization by sampling 300 training points and 600 test points with 500 and 1000 random Gaussian features. 

Figure~\ref{fig:app_overp_classification} presents the impact of projection onto the data span, as well as switching to gradient descent with a step-size $\eta = 10$ (found to perform well via a grid-search). In the legend, the key-word ``Always Project'' indicates that the weights are projected onto the data span at every iteration, while ``Project at Switch'' indicates that only one projection onto the data span is performed and it takes place at the moment at which we switch from Adagrad to gradient descent. We experimented switching at different points in training (50\%, 75\% and 90\%) and obtained qualitatively similar performance as those results presented here. The results are aggregated over 10 data samples and all instances are initialized at the origin.

As expected, we see that the projection results in a smaller $\ell_2$ norm and angle with respect to the $\ell_2$ maximum margin solution. More importantly, these results demonstrate that the generalization performance can be impacted by the ``choice'' of subspace in which the iterates of the optimizer lie.

\begin{figure}[!h]
\centering
\includegraphics[width=0.99\textwidth]{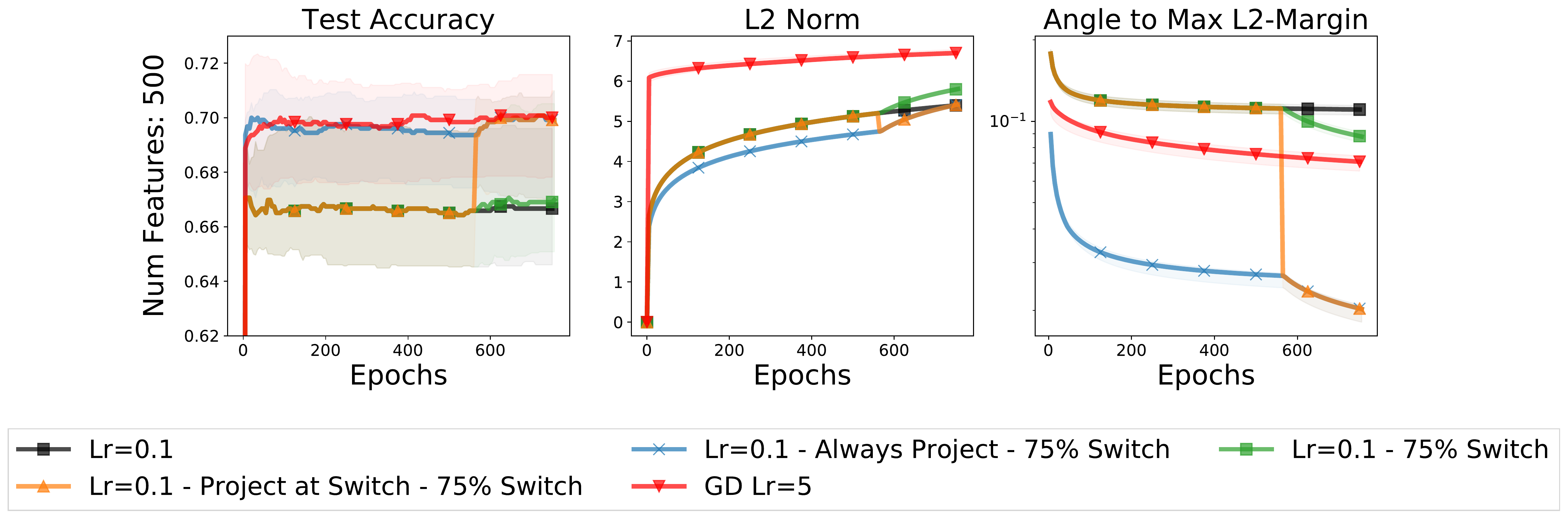}
\includegraphics[width=0.99\textwidth]{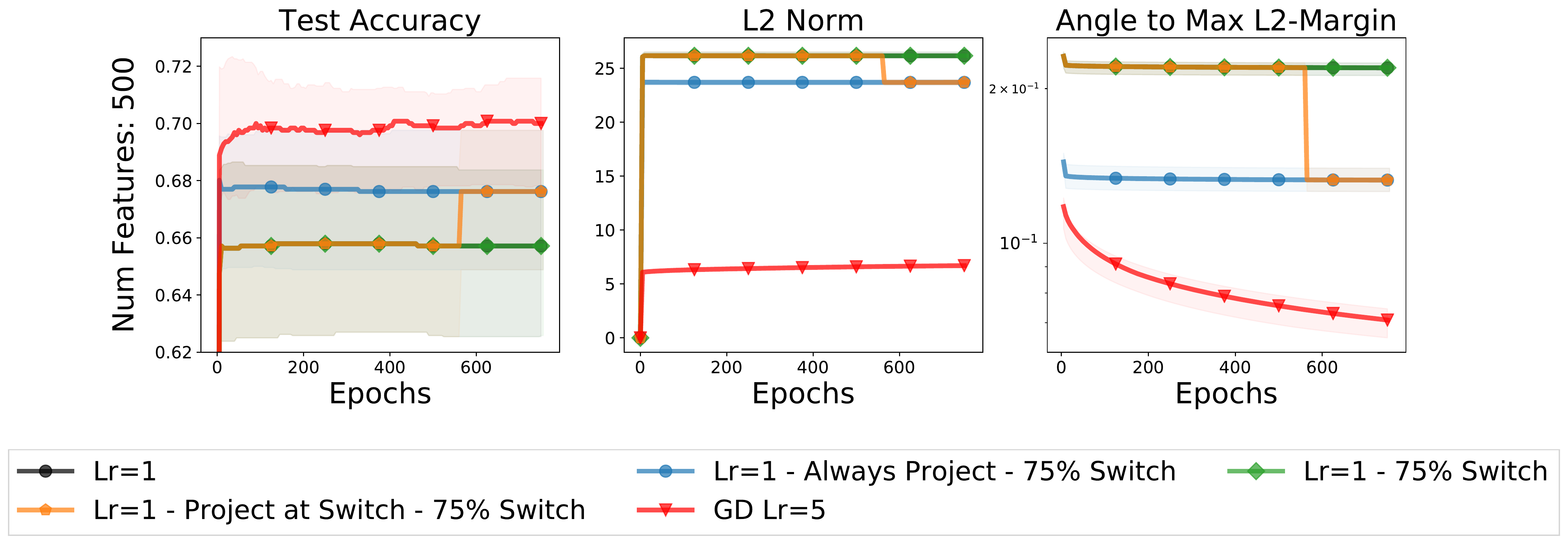}
\includegraphics[width=0.99\textwidth]{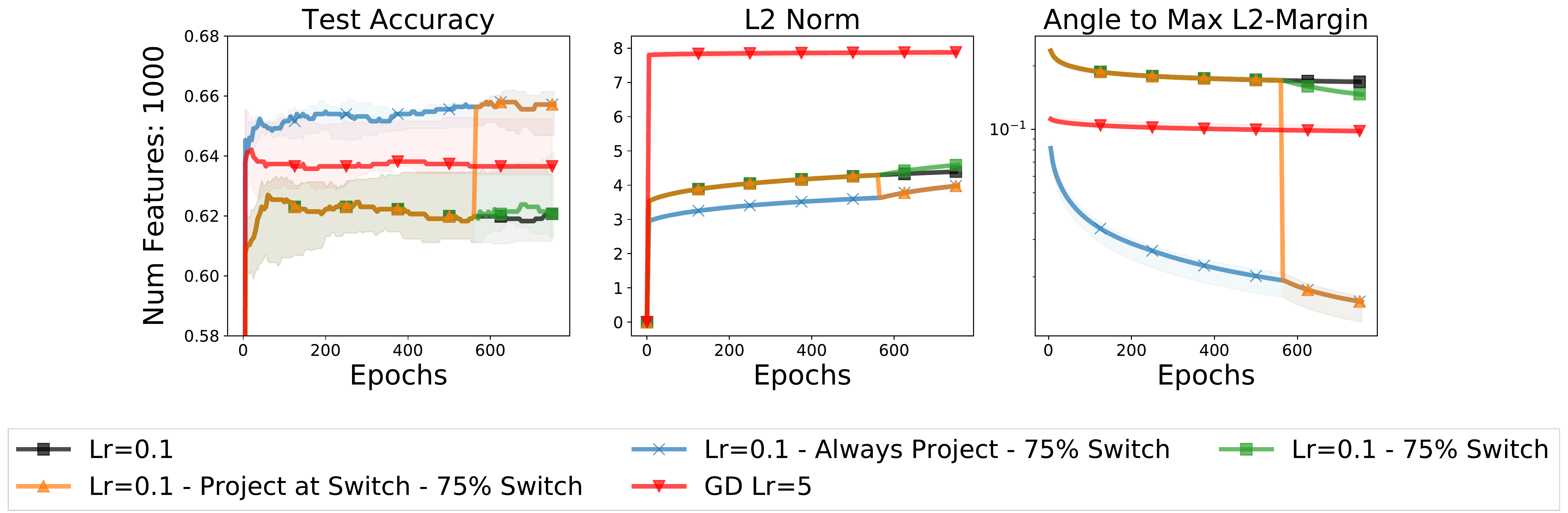}
\caption{Performance of GD and Adagrad on a synthetic overparameterized classification problem with random Gaussian features. Projecting onto the data-span improves the test accuracy, while decreasing the solution's norm and angle to the max-margin solution. 
}
\label{fig:app_overp_classification}
\end{figure}

\end{document}